\documentclass{article}
\usepackage{kr}

\usepackage{times}
\usepackage{soul}
\usepackage{url}
\usepackage[hidelinks]{hyperref}
\usepackage[utf8]{inputenc}
\usepackage[small]{caption}
\usepackage{graphicx}
\usepackage{amsmath}
\usepackage{amsthm}
\usepackage{amsfonts}
\usepackage{booktabs}
\usepackage{algorithm}
\usepackage{algorithmic}
\usepackage{needspace}
\usepackage{pifont}
\usepackage{caption}
\usepackage{amsmath}
\usepackage{amsthm}
\usepackage{amsfonts}
\usepackage{amssymb}
\usepackage{listings}
\usepackage{multicol}
\usepackage{todonotes}
\usepackage{subcaption}
\usepackage{graphicx}
\usepackage[dvipsnames]{xcolor}

\usepackage{tikz}
\usetikzlibrary{shapes,decorations,shadows,arrows,calc}%,snakes}
\makeatletter

\tikzstyle{arg}=[draw,circle,fill=gray!15,inner sep=1pt,minimum size=.5cm]
\tikzstyle{argd}=[draw,circle,gray!70,inner sep=1pt,minimum size=.5cm,dashed]

\tikzstyle{argTD}=[draw, thick, circle, fill=gray!15,inner sep=0pt,minimum size=0.6cm,font=\small]
\tikzstyle{argR}=[draw, thick, circle, fill=gray!15,inner sep=0pt,minimum size=0.45cm,font=\small]
\tikzstyle{scc}=[draw, thick, rectangle,align=center, fill=gray!15,inner sep=0pt,minimum size=0.8cm,font=\small, rounded corners=0ex,]
\tikzstyle{argTDX}=[draw, dotted,thick, circle, inner sep=0pt,minimum size=0.6cm,font=\small]
\tikzstyle{sccX}=[draw,dotted, thick, rectangle,align=center, inner sep=0pt,minimum size=0.8cm,font=\small, rounded corners=0ex,]
\tikzstyle{argsmall}=[draw, thick, circle, fill=gray!15,inner sep=0pt,minimum size=0.4cm]
\tikzstyle{argsmallX}=[draw, thick, circle, inner sep=0pt,minimum size=0.3cm,dotted]
\tikzstyle{bag}=[draw,rectangle, rounded corners=1ex,
align=center, inner sep=1ex]

\tikzstyle{atts}=[draw,thick, inner sep=5pt, rounded corners=3pt]

\tikzstyle{nullarg}=[inner sep=0pt,outer sep=0pt,minimum size=0cm]
\tikzstyle{nullattack}=[draw, thick, |->]
\newcommand{\cl}{\mathit{cl}}

\newcommand{\clink}{^c}
\newcommand{\redEone}{^{E_1}}

\newcommand{\cf}{\textit{cf}}
\newcommand{\adm}{\textit{adm}}
\newcommand{\com}{\textit{com}}
\newcommand{\comp}{\textit{com}}

\newcommand{\stb}{\textit{stb}}

\newcommand{\prf}{\textit{pref}}
\newcommand{\pref}{\textit{pref}}

\newcommand{\grd}{\textit{grd}}

\newcommand{\tuple}[1]{\ensuremath{\left(#1\right)}}

\newcommand{\BF}{F}

\newcommand{\ie}{i.e.}

\newcommand{\SF}{\ensuremath{\mathit{SF}}}

\usepackage{thm-restate}
\usepackage{etoolbox}

\newtheorem{theorem}{Theorem}%[section]

\newtheorem{example}[theorem]{Example}

\newtheorem{lemma}[theorem]{Lemma}

\newtheorem{definition}[theorem]{Definition}

\renewcommand\thmcontinues[1]{ctd}

\definecolor{Blue}{cmyk}{0.95,0.54,0.00,0.05}
\definecolor{OrangeYellow}{cmyk}{0.02,0.42,0.74,0.10}
\definecolor{Green}{HTML}{5CB338}

\title{Splitting Argumentation Frameworks with Collective Attacks and Supports}

\author{%
Matti Berthold$^1$\and
Lydia Blümel$^1$\and
Giovanni Buraglio$^2$\and
Anna Rapberger$^3$ \\
\affiliations
$^1$FernUniversität in Hagen, Germany\\
$^2$TU Wien, Austria\\
$^3$TU Dortmund, Germany\\
\emails
\{lydia.bluemel, matti.berthold\}@fernuni-hagen.de,
giovanni.buraglio@tuwien.ac.at,
anna.rapberger@tu-dortmund.de
}

\begin{document}

\maketitle

\begin{abstract}

This work proposes novel splitting techniques for argumentation formalisms that incorporate supports between defeasible elements. We base our studies on \emph{bipolar set-based argumentation frameworks (BSAFs)} which generalize argumentation frameworks with collective attacks (SETAFs), as well as bipolar argumentation frameworks (BAFs), by incorporating both collective attacks and supports. 
Notably, BSAFs establish a crucial link to structured argumentation as they naturally capture general (potentially non-flat) assumption-based argumentation.
The increase in expressiveness calls for diverse forms of splitting. We consider splits over collective attacks (thereby generalizing the recently proposed splitting techniques for SETAFs), splits over collective supports, as well as splits over both collective attacks and supports. 
We establish suitable splitting schemata and prove their correctness for the most common argumentation semantics. 
\end{abstract}

\section{Introduction}

In the field of knowledge representation and reasoning, formal models of argumentation \cite{arguHandbookv1} provide computational approaches for representing and reasoning about argumentative scenarios, and have been successfully applied across a wide range of domains such as medical decision-making, law and explainable AI~\cite{AtkinsonBGHPRST17,Cyras0ABT21,LeofanteADFGJP024}.
In particular, abstract models of argumentation have gained significant popularity due to their flexible and rich modelling capabilities. Starting from Dung’s seminal work~\cite{Dung95}, Argumentation Frameworks (AFs) have become a central formalism for capturing debates and reasoning over them, with the ultimate goal of extracting rational viewpoints. In Dung-style AFs, arguments are treated as primitive entities, while their internal structure, e.g. the premises supporting their conclusions, is abstracted away. Conflicts between arguments are captured by a binary attack relation. Debates are thus represented via simple directed graphs where nodes are the arguments exchanged and edges keep track of their conflicts.
Rational viewpoints are then identified by computing sets of jointly acceptable arguments, so-called extensions. 

Over the years, numerous generalizations of AFs have been proposed to enrich their simple structure and increase its modelling capabilities~\cite{BrewkaESWW18,Amgoud08,NielsenP06,Modgil09}. Notably, bipolar argumentation frameworks (BAFs)~\cite{Amgoud08,BoellaGTV10} allow also for the explicit representation of support relations, enabling the modelling of scenarios in which arguments may reinforce one another.
In parallel, argumentation frameworks with collective attacks (also called SETAFs)~\cite{NielsenP06} have been introduced to capture situations in which an argument is insufficient to attack another on its own, but can do so jointly with others.
More recently, bipolar set-based argumentation frameworks (BSAFs)~\cite{BertholdRU24} have been proposed as a unifying generalization of both BAFs and SETAFs by combining the accrual of arguments with a notion of support. 
\textcolor{black}{With this, BSAFs can model scenarios in which arguments ‘join forces' to support or defeat another argument collectively.
In addition, they} have been shown to faithfully capture rule-based argumentation formalisms such as general (non-flat) assumption-based argumentation~\cite{BertholdRU24}. As a result, BSAFs offer a simple yet expressive means of representation for complex argumentative scenarios.

However, similarly to AFs, BAFs, and SETAFs, bipolar set-based argumentation frameworks exhibit a high computational complexity for most standard reasoning tasks~\cite{BertholdRU24}. 
In particular, the size of the search space may grow exponentially with the size of the framework, making direct computation of extensions infeasible in practice.
To address this challenge, incremental reasoning techniques have been developed that decompose a framework into smaller sub-frameworks, compute their extensions independently, and then combine the results~\cite{Baumann11,Liao13,BaroniGL14,BengelT25ADFSerialisation,BlumelRTT25,GiacominBC21}. Among these techniques, splitting~\cite{Baumann11,BaumannBDW12,Linsbichler14,BuraglioDKW24,Buraglio25} has proven to be particularly effective in abstract argumentation~\cite{BaumannBW11}. 
The central idea underlying splitting procedures is to compute the extensions of a given framework $F$ by decomposing it into suitable sub-frameworks $F_1$ and $F_2$ and then computing the extensions of $F_1$ and $F_2$ independently. 
Naturally, this requires suitable modifications of $F_2$ to take into account the acceptance status of the arguments in $F_1$.
\textcolor{black}{Moreover, although computational efficiency is the primary motivation for employing splitting techniques, they also provide a useful tool for the study of argumentation dynamics~\cite{RapbergerU23,Prakken23,CayrolSL10}, as previously computed parts of the framework can be reused when new information is introduced~\cite{Baumann11,LiaoJK11}.}

Splitting has been successfully applied to AFs and generalizations thereof; most crucial in our context, to SETAFs.
Nevertheless, despite the increasing demand for efficient approaches that take into account positve links, the development of splitting techniques for argumentation formalisms that feature support relations, particularly in the presence of collective relations, remains largely unaddressed.

In this paper, we start to fill this gap by investigating the possibilities and limitations of splitting in AFs that feature both collective attacks and supports. The added expressiveness of BSAFs allows for a richer variety of cuts: a framework may be split along collective attacks, collective supports, or combinations thereof. 
We introduce the necessary ingredients for splitting BSAFs in a progressive manner, addressing the unique challenges that arise due to the presence of supports. 
We investigate splitting techniques for attacks and supports separately before we present a unified splitting pipeline that enables the incremental computation of extensions for arbitrary cuts of a BSAF.

Our main contributions are as follows.
\begin{itemize}
    \item First, we develop a suitable attack splitting procedure for BSAFs by extending and revising the splitting procedure for SETAFs. 
    We prove the correctness of our splitting schema wrt.\ the most common argumentation semantics. Notably, we show that splitting works only partially under grounded semantics.
    \hfill {\color{gray}Section~\ref{sec:slitting over collective attacks}}
    \item Next, we investigate the splitting of supports. We define suitable modifications that enable splitting wrt.\ the common argumentation semantics. Notably, for preferred and grounded semantics, the splitting procedure selects only some, and not all, extensions. 
    \hfill {\color{gray}Section~\ref{sec:CollSupp}}
    \item Finally, we combine the previous techniques and consider arbitrary splits over collective attacks and supports. We prove a general splitting theorem, thereby paving the way for the incremental computation under most of the semantics. The combined splitting inherits the limitations concerning preferred and grounded semantics.  \hfill {\color{gray}Section~\ref{sec:combined splitting}}
\end{itemize}
All proofs are provided in the supplementary material.
\section{Background}
We recall BSAFs~\cite{BertholdRU24} which generalize SETAFs~\cite{NielsenP06}, and basics of splitting for SETAFs~\cite{BuraglioDKW24}. 

\subsection{Bipolar SETAFs}
Bipolar argumentation frameworks with collective attacks (BSAFs)~\cite{BertholdRU24} are a generalization of Dung's abstract argumentation frameworks (AFs)~\cite{Dung95}.
They combine the ideas underlying argumentation frameworks with collective attacks (SETAFs)~\cite{NielsenP06} and bipolar argumentation frameworks (BAFs)~\cite{CayrolL05a,Amgoud08,PoRaUlAAAI2024}. In particular, they incorporate notions of support and attack among arguments, and generalize both SETAFs and BAFs by modeling \emph{collective} attacks and supports. 

\begin{definition}
    A \emph{bipolar set-argumentation framework (BSAF)} is a tuple $\BF = \tuple{A,R,S}$, where $A$ is a finite set of arguments, $R \subseteq 2^A\times A$ is the attack relation and $S\subseteq 2^A\times A$ is the support relation.
\end{definition}
A \emph{SETAF} is a BSAF  $\BF = \tuple{A,R,S}$ with $S=\emptyset$; 
an \emph{AF} is a SETAF with $|T|=1$ for all $(T,h)\in R$.

For an attack (support) $(T,h)$, we call $T$ the \emph{tail} of the attack (support) and $h$ the \emph{head} of the attack (support). 
We  call $(T,h)$ a \emph{negative link} if $(T,h)\in R$, a \emph{positive link} if $(T,h)\in S$, and simply a \emph{link} if $(T,h)\in R\cup S$.
If $|T|=1$, we write $(t,h)$ instead of $(\{t\},h)$, where $T=\{t\}$.
Furthermore, given BSAFs $F=(A,R,S)$ and $F'=(A',R',S')$, we define set-operations component-wise; e.g., the union is defined as $F\cup F'=(A\cup A',R\cup R',S\cup S')$.

\textcolor{black}{In contrast to the broad agreement on the interpretation of attacks, there exist several interpretations of support in the literature~\cite{CohenGGS14,CohenPSM18,Polberg16}. BSAFs adopt a form of \emph{deductive support}~\cite{BoellaGTV10,BertholdRU24}. A central notion in this context is the \emph{closure} of a set.}
\begin{definition}
	Given BSAF $\BF\! = \! (A,R,S)$ and $E\subseteq A$, let
		$$supp_{\BF}(E):=E\cup \{h\in A\mid \exists\tuple{T,h}\in S: T\subseteq E\}.$$ 
	The \emph{closure} of $E$ is defined as $\cl_{\BF}(E):=\bigcup_{i\geq 1} supp_{\BF}^i(E)$; Hence, $E$ is \emph{closed} if $cl_{\BF}(E)=E$. 
\end{definition}

\begin{definition}
\label{def:BSAF gamma and more}
Given BSAF $\BF\! = \! (A,R,S)$, a set $E\!\subseteq\! A$ 
\emph{defends} $a\!\in\! A$ if for each closed attacker $E'\!\subseteq\! A$ of~$a$, $E$ attacks $E'$;
$E$ \emph{defends} $E'$ if $E$ defends each $a\!\in\! E'$.
    \end{definition}
    We omit the subscript $\BF$ for
$\cl$ if clear from context.

Let us now head to BSAF semantics. 
A set $E$ is conflict-free ($E\!\in\!\cf(\BF)$) if it does not attack itself;
$E$ is admissible ($E\in\adm(\BF)$) if it is conflict-free, closed and defends itself.%
\begin{definition}\label{def:BSAF semantics}
	Let $\BF$ be a BSAF and let $E\in\adm(\BF)$.
	\begin{itemize}
		\item 
		$E\in \comp(\BF)$ iff $E$ contains every argument it defends; 
		\item 
		$E\in \grd(\BF)$ iff $E$ is $\subseteq$-minimal in $\comp(\BF)$;
		\item 
		$E\in \pref(\BF)$ iff $E$ is $\subseteq$-maximal in $\adm(\BF)$.
		\item 
		$E\in \stb(\BF)$ iff $E$ attacks each $x\in  A \setminus E$.
	\end{itemize}
\end{definition}
Given a BSAF $F=(A,R,S)$ and a set of arguments $E\subseteq A$, we denote $E^+_R:=\{h\mid \exists T\subseteq E: (T,h)\in R\}$ and the range of $E$; by $E^\oplus_R:= E_R\cup E^+_R$. We omit $R$ if clear from the context.
Given a $\sigma$-extension $E$ of $\BF$, we say that an argument $a\in A$ is:  \emph{accepted} or \emph{in} if $a\in E$,  \emph{rejected}, \emph{defeated} or \emph{out} if $a\in E^+_R$ and \emph{undecided} if $a\notin E^\oplus_R$. 

Graphically, we depict the attack and the support relations of a BSAF via solid and dashed edges respectively.

\newcommand\distance{1.5}
\begin{example}\label{exm:bsaf}
We consider a BSAF $\BF$ with arguments 
$A=\{a,b,c,d,e\}$, attacks $R=\{$\textcolor{black}{$\tuple{f,c}$}, $\textcolor{violet}{\tuple{\{d,e\},f}},$ $\textcolor{black}{\tuple{c,d}}\}$, and support $S=\{\textcolor{cyan}{\tuple{\{a,b\},c}}\}$, as depicted below.
\begin{center}
		\begin{tikzpicture}[scale=0.8,>=stealth]
		\path
            (0,0.5)node[arg] (a){$a$}
            (0,-0.5)node[arg] (b){$b$}
			(1.5,0)node[arg] (c){$c$}
			(3,0)node[arg] (d){$d$}
			(4.5,0.5)node[arg] (e){$e$}
			(4.5,-0.5)node[arg] (f){$f$}
			;
			\path[thick,->]
            (c) edge[] (d)
			(a)edge[dashed,color=cyan,out=-40,in=180](c)
			(b)edge[dashed,color=cyan,out=40,in=180](c)
			(d)edge[color=violet,out=0,in=130](f)
			(e)edge[color=violet,out=210,in=130](f)
			(f)edge[out=180,in=-30](c)
            ;
				
		\end{tikzpicture}
	\end{center}
The arguments $a$ and $b$ jointly support $c$; thus, whenever $\{a,b\}$ is accepted, the argument $c$ must be true as well. 
Accepting $c$ is, however, impossible: to defend $c$ against $f$, both arguments $d$ and $e$ are required; only then, the joint attack $(\{d,e\},f)$ fires. However, accepting both is not possible since $c$ attacks $d$. 
Thus, $a$ and $b$ cannot be accepted together. It follows that $F$ has no complete and grounded extension; the admissible sets are $\emptyset$, $\{a\}$, $\{b\}$, $\{e\}$, $\{a,e\}$, and $\{b,e\}$.

\end{example}

\subsection{Splitting SETAFs}
In their recent work, \citeauthor{BuraglioDKW24}~(\citeyear{BuraglioDKW24}) investigated splitting procedures for SETAFs, extending splitting procedures for AFs~\cite{Baumann11}. 
A splitting of a SETAF $\SF=(A,R)$ separates the framework into two frameworks $\SF_1$ and $\SF_2$ which share attacks directed towards $\SF_2$. 
    \begin{definition}
		\label{def:setaf splitting}
		Let $\SF=(A,R)$ be a SETAF, $\SF_1=(A_1,R_1)$ and $\SF_2=(A_2,R_2)$ two sub-frameworks of $\SF$ such that $A_1\cap A_2=\emptyset$, $A=A_1 \cup A_2$ and $R=R_1 \cup R_2 \cup R_3$ with $R_3\subseteq\{(T,h)\in R\mid T\cap A_1\neq\emptyset, T\subseteq A, h\in A_2\}$.
        The triple $(\SF_1,\SF_2,R_3)$ is called a \emph{splitting} of $\SF$ and $R_3$ its set of \emph{negative links}.
        A link is \emph{undecided} if no argument in its tail is defeated, but at least one is undecided.
	\end{definition}
\begin{example}\label{ex:background split}
We consider a splitting $(\SF_1,\SF_2,R_3)$ of a SETAF $\SF$, as depicted below; $\SF_1$ and $\SF_2$ are left resp.\ right  of the dotted line; the set of shared negative links of $\SF_1$ and $\SF_2$ is given by $R_3=\{{\color{black}(\{a,z\},x)},{\color{black}(\{b,z\},y)}\}$.%
            	\begin{center}
			\begin{tikzpicture}[yscale=0.8,>=stealth]
			\path
            (1,2.3)node[arg] (a){$a$}
            (1,1.25)node[arg] (b){$b$}
			(0,2)node[arg] (c){$c$}
			(2.3,2.3)node[arg] (x){$x$}
			(3.1,1.8)node[arg] (y){$y$}
			(2,1.25)node[arg] (z){$z$}
			;
			\path[thick,->]
            (c)edge[<->](a)
			(b)edge[out=40,in=180](y)
			(z)edge[out=70,in=180,looseness=0.9](y)
			(z)edge[out=130,in=200,looseness=1.1](x)
			(a)edge[out=-20,in=200](x)
            (b) edge[loop left] (b)
            ;
			
			\draw [thick,dotted,red] (1.5,2.7) -- (1.5,0.8);
			
			\end{tikzpicture}
		\end{center}
\end{example}
\citeauthor{BuraglioDKW24}~(\citeyear{BuraglioDKW24}) developed a procedure that enables the incremental computation of the extensions of $\SF$ wrt.\ a given semantics $\sigma$. 
The extensions of $\SF$ can be computed as a combination of extensions of $\SF_1$ and (an adjusted version of) $\SF_2$.
The second sub-framework $\SF_2$ gets modified on the basis of the information contained in the extension(s) of $\SF_1$. Such alteration is tailored to account for the prior accepted and rejected, and undecided arguments.
In the literature~\cite{Baumann11,BuraglioDKW24}, this is achieved in a two-step procedure, by appealing to the notions of so-called \emph{reduct} and \emph{modification} of the sub-framework $SF_2$.
In the first step, the reduct takes care of the arguments in $SF_2$ that are rejected wrt.\ $E_1$ by deleting them, and modifies the links by projecting the part of the attack to $SF_2$.

\begin{definition}[Reduct]\label{def_reduct}	
		Let $(SF_1,SF_2,R_3)$ be a splitting for a SETAF $\SF$. We define the \emph{$(E_1,R_3)$-reduct} (or simply \emph{reduct}) of $\SF_2$ for some extension $E_1$ of $\SF_1$ as the SETAF $\SF\redEone_2=(A\redEone_2,R\redEone_2)$ where,
		\begin{align*}
        A\redEone_2=&\{a\in A_2\mid a \notin (E_1)_{R_3}^+\}\text{ and}\\
		R\redEone_2=&
		\{ (T,h)\in R_2\mid T\subseteq A\redEone_2,h\in A\redEone_2  \} \;\cup \\
	    &\{(T\cap A_2, h) \mid (T,h)\in R_3,\ T\cap A_1\subseteq E_1,\\
	    &\hspace{55pt} T\cap (E_1)^+_{R_3}=\emptyset, \ T \setminus A_1 \neq \emptyset,\ h\in A\redEone_2\}
		\end{align*}
\end{definition}
\begin{example}\label{ex:background split ctd}
We continue Example~\ref{ex:background split}.
Below, we depict the reduct wrt.\ $E_1=\{a\}$ on the left-hand side, the reduct wrt.\ $E_1'=\{c\}$ on the right-hand side;
the attacks and arguments not contained in the reducts are grayed out.%
            	\begin{center}
			\begin{tikzpicture}[yscale=0.8,>=stealth]
            			\path
            (1,2.3)node[arg,fill=Green!25] (a){$a$}
            (1,1.25)node[argd] (b){$b$}
			(0,2)node[argd] (c){$c$}
			(2.3,2.3)node[arg] (x){$x$}
			(3.1,1.8)node[arg] (y){$y$}
			(2,1.25)node[arg] (z){$z$}
			;
			\path[thick,->]
            (c)edge[<->,gray!50](a)
			(b)edge[color=cyan,out=40,in=180,gray!50](y)
			(z)edge[color=cyan,out=70,in=180,looseness=0.9,gray!50](y)
			(a)edge[color=violet,out=-20,in=200,gray!50](x)
            (b) edge[loop left,gray!50] (b)			(z)edge[out=130,in=200,looseness=1.1](x)
            ;
			
			\draw [thick,dotted,red] (1.5,2.7) -- (1.5,0.8);

			\begin{scope}
			    [xshift=4.3cm]
            \path
            (1,2.3)node[argd] (a){$a$}
            (1,1.25)node[argd] (b){$b$}
			(0,2)node[arg,fill=Green!25] (c){$c$}
			(2.3,2.3)node[arg] (x){$x$}
			(3.1,1.8)node[arg] (y){$y$}
			(2,1.25)node[arg] (z){$z$}
			;
			\path[thick,->]
            (c)edge[<->,gray!50](a)
			(b)edge[color=cyan,out=40,in=180,gray!50](y)
			(z)edge[color=cyan,out=70,in=180,looseness=0.9,gray!50](y)
			(a)edge[color=violet,out=-20,in=200,gray!50](x)
            (b) edge[loop left,gray!50] (b)			(z)edge[out=130,in=200,looseness=1.1,gray!50](x)
            ;
			
			\draw [thick,dotted,red] (1.5,2.7) -- (1.5,0.8);
			\end{scope}
			\end{tikzpicture}
		\end{center}
We begin by computing the reduct wrt.\ $E_1$.
Since $R_2=\emptyset$, thus we only deal with the links in $R_3$.
For the attack $(\{a,z\},x)$, we have $\{a,z\}\cap A_1=\{a\}\subseteq E_1$, $\{a,z\}\setminus A_1\neq \emptyset$, and $\{a,z\}$ is not attacked by $E_1$, thus the attack $(z,x)$ is in the reduct.
The attack $(\{b,z\},y)$ does not satisfy these conditions, thus it is not part of the reduct.
The latter also holds for the reduct for $E_1'$; in addition, the attack $(\{a,z\},x)$ is defeated since $a$ is defeated.
\end{example}
As per the second step, the notion of \emph{modification} is introduced to account for possibly undecided arguments wrt.\ $E_1$. As these may affect other arguments in $SF_2$ via the links, a notion of undecided links is introduce to keep track of such external influences on arguments in (the reduct of) $SF_2$.

\begin{definition}[Undecided Links]\label{def: SETAF undec links}
Given a splitting $(\SF_1,\SF_2,R_3)$ for a SETAF $\SF$ and a set $E_1\subseteq A_1$ we define the \emph{set of undecided links} wrt.\ $E_1$ as:
\begin{align*}
U^{E_1}_{R_3}:= \; &   \{(T,h)\in R_3 \mid T\cap (E_1)^+_{R}=\emptyset, (T\cap  A_1)\not\subseteq E_1 \}.
\end{align*}	
\end{definition}
It holds that $(E_1)^+_{R}=(E_1)^+_{R_1\cup R_3}$ since $A_1$ is not attacked by links in $R_2$.
Next, we recall the \emph{modification} which accounts for the effects of the undecided links. 

\begin{definition}[Modification]\label{def_mod_SETAF}
	Let $(SF_1,SF_2,R_3)$ be a splitting for SETAF $\SF$ and $E_1\subseteq A_1$. Let $\SF\redEone_2=(A_2\redEone,R_2\redEone)$ be the $(E_1,R_3)$-reduct of $\SF_2$ and $U^{E_1}_{R_3}$ the set of undecided links wrt.\ $E_1$. By $mod^{E_1}_{R_3}(\SF\redEone_2):=\SF^{\star}_2=(A^{\star}_2,R^{\star}_2)$ we denote the \emph{$U^{E_1}_{R_3}$-modification} (or simply \emph{modification}) of $\SF\redEone_2$ s.t. $A^{\star}_2:=A\redEone_2$ and $R^{\star}_2$ is given by:
	\begin{align*}
		R\redEone_2\cup \{((T \cap A\redEone_2) \cup \{h\}, h) \mid (T,h)\in U^{E_1}_{R_3\clink}, \; h\in A_2\redEone\}.
	\end{align*}
\end{definition}
Taken together, when splitting up a SETAF, we either separate the tail of collective attacks or remove them altogether.%
\begin{example}
In Example~\ref{ex:background split ctd}, the link $(\{b,z\},y)$ is undecided for both $E_1$ and $E_1'$; resulting in the following modifications.%
            	\begin{center}
			\begin{tikzpicture}[yscale=0.8,>=stealth]
            			\path
            (1,2.3)node[argd] (a){$a$}
            (1,1.25)node[argd] (b){$b$}
			(0,2)node[argd] (c){$c$}
			(2.3,2.3)node[arg] (x){$x$}
			(3.1,1.8)node[arg] (y){$y$}
			(2,1.25)node[arg] (z){$z$}
			;
			\path[thick,->]
            (c)edge[<->,gray!50](a)
			(b)edge[color=cyan,out=40,in=180,gray!50](y)
			(a)edge[color=violet,out=-20,in=200,gray!50](x)
            (b) edge[loop left,gray!50] (b)			(z)edge[out=130,in=200,looseness=1.1](x)
            (z)edge[out=70,in=180,looseness=0.9](y)
            (y)edge[out=-80,in=180,looseness=5](y)
            ;
			
			\draw [thick,dotted,red] (1.5,2.7) -- (1.5,0.8);

			\begin{scope}
			    [xshift=4.3cm]
            \path
            (1,2.3)node[argd] (a){$a$}
            (1,1.25)node[argd] (b){$b$}
			(0,2)node[argd] (c){$c$}
			(2.3,2.3)node[arg] (x){$x$}
			(3.1,1.8)node[arg] (y){$y$}
			(2,1.25)node[arg] (z){$z$}
			;
			\path[thick,->]
            (c)edge[<->,gray!50](a)
			(b)edge[color=cyan,out=40,in=180,gray!50](y)
			(a)edge[color=violet,out=-20,in=200,gray!50](x)
            (b) edge[loop left,gray!50] (b)			(z)edge[out=130,in=200,looseness=1.1,gray!50](x)
            (z)edge[out=70,in=180,looseness=0.9](y)
            (y)edge[out=-80,in=180,looseness=5](y)
            ;
			
			\draw [thick,dotted,red] (1.5,2.7) -- (1.5,0.8);
			\end{scope}
			\end{tikzpicture}
		\end{center}
\end{example}
\citeauthor{BuraglioDKW24}~(\citeyear{BuraglioDKW24}) proved the correctness of the procedure wrt.\ the semantics under consideration.
They showed that (1) combining extensions of $\SF_1$ and (the altered version of) $\SF_2$ yields extensions of $\SF$, and (2) extensions of $\SF$ induce extensions in $\SF_1$ and (the altered version of) $\SF_2$, when restricted to the respective argument-sets.

    \begin{theorem}
    \label{theorem: splitting}
		Let $(\SF_1,\SF_2,R_3)$ be a splitting for a SETAF $\SF=(A,R)$ with $\SF_1=(A_1,R_1)$, $\SF_2=(A_2,R_2)$, and $\sigma \in \{\stb,\adm,\com,\prf,\grd\}$. Below, we let $\SF^{\star}_2=mod^{E_1}_{R_3}(\SF^{E_1}_2)$.
		\begin{enumerate}
			\item $E_1\!\in\! \sigma(\SF_1)$ and $E_2\!\in\! \sigma(\SF^{\star}_2)$ implies $E_1\cup E_2\in \sigma(\SF)$.
			\item If $E\!\in\! \sigma(\SF)$, then $E_1=E\cap A_1\!\in\! \sigma(\SF_1)$ and $E_2=E\cap A_2\!\in\!\sigma(SF^{\star}_2)$.
		\end{enumerate}
	\end{theorem}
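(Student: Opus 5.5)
The plan is to go through the semantics in the order $\cf \to \adm \to \stb \to \com \to \prf/\grd$. Each step reuses the previous one. Everything rests on one \emph{correspondence lemma}. For $E_1\subseteq A_1$ and $E_2\subseteq A_2^{E_1}$, an attack of $E=E_1\cup E_2$ on $h\in A_2^{E_1}$ in $\SF$ that does not already come from $E_1$ via $R_3$ corresponds to an attack of $E_2$ on $h$ in $\SF^\star_2$. The witness in $\SF$ is either a link of $R_2$ or a link $(T,h)\in R_3$ with $T\cap A_1\subseteq E_1$ and $T\cap A_2\subseteq E_2$. Conversely, if a closed set attacks $h$ in $\SF$ with tail outside $E_1$'s range, the attack is mirrored in the reduct or the modification. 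The self-attacks added by the modification exactly encode undecided links: an argument $h$ hit by a link in $U^{E_1}_{R_3}$ can only be accepted in $\SF^\star_2$ if $E_2$ attacks the $A_2$-part of that tail. This matches the requirement in $\SF$ that $E$ defend $h$ against $T$ without help from $E_1$. Since $A_1$ receives no attacks from $A_2$, any attack on an element of $A_1$ lies in $R_1$ or has tail inside $A_1$.

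For (1), take $E_1\in\sigma(\SF_1)$ and $E_2\in\sigma(\SF^\star_2)$.

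\emph{Conflict-freeness.} No attack of $E$ lands in $A_1$, since only $R_1$ does and $E_1$ is conflict-free. An attack landing in $E_2$ with tail in $E$ would be an $R_2$-link, or a projected $R_3$-link in the reduct. If it came from $(E_1)^+_{R_3}$, the target would not be in $A_2^{E_1}$ at all.

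\emph{Defense.} For $a\in E_1$, attackers come from $A_1$ and are defended by $E_1$. For $a\in E_2$ and an attack $(T,a)$:
\begin{itemize}
\item if $T\subseteq A_2$, the attack is in $R^{E_1}_2$ unless $T$ meets $(E_1)^+$, in which case $E_1$ already attacks $T$;
\item if $T$ meets $A_1$, then either $T\cap (E_1)^+\neq\emptyset$ (done), or $T\cap A_1\subseteq E_1$, giving a projected attack that $E_2$ counters in $\SF^\star_2$, or the link is undecided, giving the self-attack $((T\cap A^{E_1}_2)\cup\{a\},a)$, which $E_2$ must counter by attacking $T\cap A_2^{E_1}$.
\end{itemize}
In every case the counterattack in $\SF^\star_2$ lifts back to $\SF$ via the correspondence lemma.

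\emph{Stability.} Arguments in $A_1\setminus E_1$ are attacked by $E_1$. Arguments in $A_2\setminus A_2^{E_1}$ are attacked via $R_3$. The remaining ones are attacked by $E_2$ in $\SF^\star_2$. That attack cannot be an added self-attack, since its head is not in $E_2$, so it is genuine.

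\emph{Completeness.} If $E$ defends $a\in A_1$ in $\SF$, then $E_1$ defends $a$ in $\SF_1$, because attackers of $A_1$ lie in $A_1$ and only $E_1$ can hit them. If $E$ defends $a\in A_2$, then $a\notin (E_1)^+$ by conflict-freeness of the admissible superset $E\cup\{a\}$. I expect the main obstacle here: showing that $E_2$ defends $a$ in $\SF^\star_2$, including against $a$'s own added self-attack. This is exactly where the correspondence lemma's treatment of undecided links is needed, so it should be formulated carefully first.

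For (2), restrict $E\in\sigma(\SF)$. That $E\cap A_1\in\sigma(\SF_1)$ follows from the same directionality argument. For $E_2=E\cap A_2$, use the converse direction of the lemma. $E_2\subseteq A_2^{E_1}$ by conflict-freeness. Each attack in $\SF^\star_2$ on $E_2$ lifts to an attack in $\SF$ on $E$, which $E$ counters; the counter comes from $E_2$ or from $E_1$. A counter from $E_1$ would put part of the tail in $(E_1)^+$, contradicting that the link survived the reduct. For an added self-attack on $a\in E_2$, the underlying undecided link attacks $a$ in $\SF$ and is not countered by $E_1$, so $E_2$ attacks its $A_2$-part.

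Finally, $\prf$ and $\grd$ follow by maximality and minimality arguments. Parts (1) and (2) for $\adm$ and $\com$ give a bijection-like correspondence between the pairs $(E_1,E_2)$ and the extensions of $\SF$. The reduct and modification depend only on $E_1$, so comparing $E_1\cup E_2$ with $E'_1\cup E'_2$ is done componentwise. For $\prf$, first fix $E_1$ maximal; a strictly larger $E_1'$ would extend to a larger admissible set of $\SF$, since $\emptyset$ is always admissible in the modification. Then maximize $E_2$. For $\grd$, use uniqueness: $\grd(\SF_1)$ paired with $\grd(\SF^\star_2)$ is complete in $\SF$ and contained in every complete extension by minimality.
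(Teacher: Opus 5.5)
The paper does not prove this theorem; it is imported from Buraglio et al. The natural comparison is therefore the paper's proof of the BSAF analogue, Theorem~\ref{thrm:AttSplit}, whose skeleton (conflict-freeness, then $\stb$, $\adm$, $\com$, $\prf$ in both directions) your plan mirrors. \textbf{The genuine gap is in part~2 for $\prf$}, namely showing $E\cap A_1\in\prf(\SF_1)$. You argue that a strictly larger $E_1'\in\adm(\SF_1)$ ``would extend to a larger admissible set of $\SF$, since $\emptyset$ is always admissible in the modification''. That only yields $E_1'\cup\emptyset\in\adm(\SF)$, which contains $E_1$ but not $E=E_1\cup E_2$. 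So whenever $E_2\neq\emptyset$ you get no contradiction with the $\subseteq$-maximality of $E$. The underlying misstep is the claim that comparisons can be done componentwise. That is only legitimate when the $A_1$-parts coincide: for $E_1\neq E_1'$ the second components live in different frameworks $\SF^\star_2$, and maximality in one says nothing about the other. What you actually need is $E_1'\cup E_2\in\adm(\SF)$.

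For SETAFs this can be supplied directly. Defense is monotone in the defending set, so $E_1'\cup E_2\supseteq E$ still defends $E_2$. Also $E_1'$ defends itself, because every attack with head in $A_1$ lies in $R_1$ and has its tail in $A_1$. For conflict-freeness, take an attack $(T,h)$ with $T\subseteq E_1'\cup E_2$. It cannot have $h\in E_1'$, since then $T\subseteq E_1'$. If $h\in E_2$, then $E$ hits some $t\in T$. If $t\in E_2$, this contradicts $E\in\cf(\SF)$. If $t\in T\cap A_1\subseteq E_1'$, it is hit via $R_1$ from $E\cap A_1=E_1\subseteq E_1'$, contradicting $E_1'\in\cf(\SF_1)$. (The paper's BSAF proof instead shows that $E_2$ stays admissible in the modification w.r.t.\ the larger set and then invokes part~1.)

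The same cross-modification pitfall affects your grounded sketch. ``Contained in every complete extension'' is not directly available, since a complete $G$ of $\SF$ splits along the modification for $G\cap A_1$, not for $E_1$. Argue instead as follows. First, $\grd(\SF)\subseteq E_1\cup E_2$. Its $A_1$-part is complete in $\SF_1$, so by minimality it equals $E_1$. Its $A_2$-part is then complete in the \emph{same} $\SF^\star_2$ and contained in $E_2$, so it equals $E_2$.

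Finally, the completeness step you flag as the main obstacle is routine. For an undecided link $(T,a)$ we have $T\cap(E_1)^+=\emptyset$, and every attack into $A_1$ has its tail in $A_1$. Hence $E$'s counterattack must land in $T\cap A_2^{E_1}$, and it lifts to $\SF^\star_2$ via $R_2$ or a projected $R_3$-link.
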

\begin{example}
We use the result to compute the admissible extensions of $\SF$ from Example~\ref{ex:background split} incrementally. 
Both sets $E_1$ and $E_1'$ are admissible in $\SF_1$.
For $E_1$, $\{z\}$ is the only (non-empty) admissible extension of $\SF^\star_2$; therefore, $\{a,z\}\in \adm(\SF)$. For $E_1'$,  the sets $\{x\}$ and $\{x,z\}$ are admissible. Using the splitting theorem, we correctly deduce that $\{c,x\}$, $\{c,z\}$ and $\{c,x,z\}$ are admissible in $\SF$.
\end{example}

\section{Splitting over Collective Attacks}
\label{sec:slitting over collective attacks}

In this section, we introduce a splitting algorithm for BSAFs. In particular, we take into account the situation where we can split over collective attacks only. Notably, the presence of support relations forces significant changes with respect to the SETAF notions of reduct and modification.

\begin{definition}[Attack Splitting]
	\label{def:bsetaf splitting}
	Let $F=(A,R,S)$ be a BSAF, $F_1=(A_1,R_1,S_1)$ and $F_2=(A_2,R_2,S_2)$ two sub-frameworks of $F$ such that $A_1\cap A_2=\emptyset$, $A=A_1 \cup A_2$, $S=S_1\cup S_2$, and $R=R_1 \cup R_2 \cup R_3$ with $R_3\subseteq\{(T,h)\in R\mid T\cap A_1\neq\emptyset, T\subseteq A, h\in A_2\}$. We call the triple $(F_1,F_2,R_3)$ an \emph{attack splitting of $F$}. Moreover, we call $R_3$ the set of \emph{negative links} wrt. $(F_1,F_2,R_3)$.
\end{definition}

Ideally, we would like to transfer the results for SETAFs to the case of attack splitting for BSAFs.
There are, however, certain subtleties that need to be taken care of. We illustrate these in the following example.
\begin{example}
    Consider the BSAF $F$ below left and attack splitting $(F_1,F_2,R_3)$;  $F_1$ is left and $F_2$ right of the dashed line.
 Let $\{d\}=E_1\in \adm(F_1)$.
We illustrate in the Figure below right how the notions of SETAF reduct and modification apply to the present example.
    \begin{center}
\begin{tikzpicture}[scale=0.9,>=stealth]
            \path
            (0.7,2.5)node[arg] (a){$a$}
            (0.35,1.25) node[arg] (c) {$c$}
            (0.10,0.25)node[arg] (d){$d$}
            (-0.5,1.65) node[arg] (b) {$b$}
            (2.5,1.5) node[arg] (v) {$v$}
            (3,2.25) node[arg] (u){$u$}
            (1.5,2)node[arg] (w){$w$}
            (1.35,0.25)node[arg](x){$x$}
            (1.95,0.8)node[arg] (y) {$y$}
            (2.75,0.25) node[arg] (z) {$z$}
            (2.25,2.5) node [arg] (t) {$t$}
            ;
            
			\path[thick,->]
            (a)edge[loop left](a)
            (a)edge[dashed] (b)
            (d)edge[bend left] (b)
            (c)edge[loop above](c)
            (c)edge[out=15,in=180,RedOrange](v)
            (w)edge[out=-70,in=180,RedOrange](v)
            (a)edge[Blue](t)
            (w)edge[out=90,in=180,Blue](t)
            (v)edge[bend right] (u)
            (v)edge[dashed,out=120,in=205] (u)
            (w)edge[dashed,out=-10,in=205] (u)
            (d)edge[ForestGreen](x)
            (y)edge[dashed, out=-45,in=0] (x)
            (z)edge[dashed] (x)
            ;
			
			\draw [thick,dotted,red] (1.25,2.95) -- (0.5,-0.35);

            \begin{scope}[shift={(5,0)}]
			\path
            (0.7,2.5)node[argd] (a){$a$}
            (0.35,1.25) node[argd] (c) {$c$}
            (0.10,0.25)node[argd] (d){$d$}
            (-0.5,1.65) node[argd] (b) {$b$}
            (2.5,1.5) node[arg] (v) {$v$}
            (3,2.25) node[arg] (u){$u$}
            (1.5,2)node[arg] (w){$w$}
            (1.35,0.25)node[argd](x){$x$}
            (1.95,0.8)node[arg] (y) {$y$}
            (2.75,0.25) node[arg] (z) {$z$}
            (2.25,2.5) node [arg] (t) {$t$}
			;
			\path[thick,->]
            (a)edge[loop left,gray!50](a)
            (a)edge[dashed,gray!50] (b)
            (d)edge[bend left,gray!50] (b)
            (c)edge[loop above,gray!50](c)
            (c)edge[out=15,in=180,gray!50](v)
            (w)edge[out=-70,in=180](v)
            (a)edge[gray!50](t)
            (w)edge[out=90,in=180](t)
            (v)edge[bend right] (u)
            (v)edge[dashed,out=120,in=205] (u)
            (w)edge[dashed,out=-10,in=205] (u)
            (d)edge[gray!50](x)
            (y)edge[dashed, out=-45,in=0,gray!50] (x)
            (z)edge[dashed,gray!50] (x)
            (t)edge[out=135,in=180,looseness=5] (t)
            (v)edge[out=-135,in=180,looseness=5] (v)
            ;
			
			\draw [thick,dotted,red] (1.25,2.95) -- (0.5,-0.35);
			
			\end{scope}	
			
		\end{tikzpicture}	
			 \end{center}
First, we consider ${\color{Blue}(\{a,w\},t)}$. By definition, it is an undecided link, therefore we apply the modification to create the attack $(\{w,t\},t)$. However, this makes $\{t\}$ not admissible. Further, the same happens for the undecided link ${\color{RedOrange}(\{c,w\},v)}$. However, the presence of support $(\{v,w\},u)$ makes $\{v\}$ admissible again by defeating its closed attacker $\{v,w,u\}$. Hence, although $(\{c,w\},v)$ is truly an undecided link, the SETAF modification is not adequate. Finally, the link ${\color{ForestGreen}(d,x)}$ triggers the elimination of $x$ via the reduct, together with its support, leaving $\{y,z\}$ as an admissible extension of $F_2\redEone$. 
However, $\{d,y,z\}$ is not admissible in $F$ since it is not even closed.
\end{example}

In what follows, we adapt the splitting schema and show how to solve the aforementioned issues. 

\begin{example}
    \label{ex:problem with undec links}
    Let us have a closer look into a part of the BSAF from Example~\ref{ex:attack splitting first ex}.
    Consider the following attack splitting $(G_1,G_2,U_3)$ of the BSAF $G\subseteq F$ (displayed left).
        	\begin{center}
			\begin{tikzpicture}[scale=0.8,>=stealth]
			\begin{scope}[shift={(0,0)}]

            \path
            (0.45,1.5)node[arg] (d){$d$}
            (-0.5,2)node[arg] (b){$b$}
			(0.5,2.75)node[arg] (a){$a$}
			(2.5,2.75)node[arg] (t){$t$}
			(1.75,2)node[arg] (w){$w$}
			;
			\path[thick,->]
            (d)edge[bend left](b)
			(a)edge[color=Blue](t)
			(w)edge[color=Blue,out=90,in=180](t)
            (a) edge[dashed] (b)
            (a) edge[loop left] (a)
            ;
			
			\draw [thick,dotted,red] (1.4,3) -- (1,1.1); -- (1,1);
			\end{scope}
			\begin{scope}[shift={(5,0)}]
			\path
            (0.45,1.5)node[argd] (d){$d$}
            (-0.5,2)node[argd] (b){$b$}
			(0.5,2.75)node[argd] (a){$a$}
			(2.5,2.75)node[arg] (t){$t$}
			(1.75,2)node[arg] (w){$w$}
			;
			\path[thick,->]
            (d)edge[bend left,gray!50](b)
			(a)edge[gray!50](t)
			(w)edge[out=90,in=180](t)
			(t)edge[out=135,in=180,looseness=5] (t)
            (a) edge[dashed,gray!50] (b)
            (a) edge[loop left,gray!50] (a)
            ;
			
			\draw [thick,dotted,red] (1.4,3) -- (1,1.1);
			\end{scope}
			\end{tikzpicture}
		\end{center}
    The set $E=\{d,t,w\}$ is admissible in $G$ since $d$ defends $t$ against the closed set $\{a,b,w\}$. Note that $\{a,w\}$ is not closed thus it is not necessary to defend $t$ against $(\{a,w\},t)$. We depict $G_2^\star$ wrt.\ $E_1=\{d\}$ above on the right-hand side.
    
    Observe that $\{d\}$ is admissible in $G_1$, but $\{t\}$ is not admissible in $G_2^\star$. The issue is that the attack on $t$ is indirectly defeated since its closure is attacked by $d$. This information, however, is lost when computing the modification $G_2^\star$. 
\end{example}
A negative link may be deemed as undecided under Definition \ref{def: SETAF undec links}, but become defeated due to some support in $F_1$, as the example shows. 
To account for this issue, we must deal with the negative links first in order to make explicit the information encoded in the support relation. 
Before we can construct the reduct, we alter $F$ and apply the \emph{link-closing} procedure, replacing all attacks $(T,a)\in R_3$ with $(cl(T),a)$.%
\begin{definition}[Closed Negative Links]
	\label{def:bsetaf splitting closed}
	Let $\BF=(A,R,S)$ be a BSAF, $\BF_1=(A_1,R_1,S_1)$ and $\BF_2=(A_2,R_2,S_2)$ two sub-frameworks of $\BF$ such that $(\BF_1,\BF_2,R_3)$ is an attack splitting of $F$. We define the set of \emph{closed negative links} as
    $$ R_3\clink:=\{(cl_S(T),h)\mid (T,h)\in R_3\}.$$ 
\end{definition}

As a result, we are able to adapt the SETAF splitting schema while avoiding aforementioned issues concerning the interplay between negative links and supports in $SF_1$.

\begin{example}
    Let $(G_1,G_2,U_3)$ be as in Example \ref{ex:problem with undec links}. We modify $U_3=\{(\{a,w\},t)\}$ as follows: first, we compute the closure of $\{a,w\}$ and replace the result with the source of the attack, obtaining $U_3\clink=\{(\{a,b,w\},t)\}$ (\textit{left}).
    Then, the reduct wrt.\ $E_1=\{d\}$ makes the attack defeated. Thus, the closed negative link $(\{a,b,w\},t)$ is not undecided. We depict the link-closure $(G_1,G_2,U_3\clink)$ and $F^\star_2$ below.
        	\begin{center}
			\begin{tikzpicture}[scale=0.8,>=stealth]
            \begin{scope}[shift={(0,0)}]

            \path
            (0.45,1.5)node[arg] (d){$d$}
            (-0.5,2)node[arg] (b){$b$}
			(0.5,2.75)node[arg] (a){$a$}
			(2.5,2.75)node[arg] (t){$t$}
			(1.75,2)node[arg] (w){$w$}
			;
			\path[thick,->]
            (d)edge[bend left](b)
			(a)edge[color=Blue] (t)
            (b)edge[color=Blue,out=0,in=180](t)
			(w)edge[color=Blue,out=90,in=180](t)
            (a) edge[dashed] (b)
            (a) edge[loop left] (a)
            ;
			
			\draw [thick,dotted,red] (1.4,3) -- (1,1.1);
			\end{scope}
			\begin{scope}[shift={(5,0)}]
			\path
            (0.45,1.5)node[argd] (d){$d$}
            (-0.5,2)node[argd] (b){$b$}
			(0.5,2.75)node[argd] (a){$a$}
			(2.5,2.75)node[arg] (t){$t$}
			(1.75,2)node[arg] (w){$w$}
			;
			\path[thick,->]
            (d)edge[bend left,gray!50](b)
			(a)edge[gray!50](t)
            (b)edge[gray!50,out=0,in=180](t)
			(w)edge[out=90,in=180,gray!50](t)
            (a) edge[dashed,gray!50] (b)
            (a) edge[loop left,gray!50] (a)
            ;
			
			\draw [thick,dotted,red] (1.4,3) -- (1,1.1);
			\end{scope}
			\end{tikzpicture}
		\end{center}
    Now, $\{w,t\}$ is admissible in $G_2^\star$ and we retrieve the admissible extension $\{d,w,t\}$ of $G$. 
\end{example}
Observe that closing the negative links preserves the semantics of the original framework. 
\begin{restatable}{proposition}{CompleteAttacksInR}\label{prop:close the attacks}
	Let $F=(A,R,S)$ be a BSAF, $(T,h)\in  R$ an attack and let $F'=(A,R',S)$ with $R'=(R\setminus\{(T,h)\})\cup\{(cl(T),h)\}$.
    Then $\sigma(F)=\sigma(F')$ for all $\sigma \in \{\stb,\adm,\com,\prf,\grd\}$.
\end{restatable}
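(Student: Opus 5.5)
The plan is to show that $F$ and $F'$ agree on every ingredient the semantics are built from: the closure operator, conflict-freeness, the attack relation between \emph{closed} sets, and hence defense. Since $S$ is unchanged, $\cl_F=\cl_{F'}$, so both frameworks have the same closed sets. Once we know that, for every set $E$ and every closed set $E'$, $E$ attacks $E'$ in $F$ iff $E$ attacks $E'$ in $F'$, and that $F$ and $F'$ have the same conflict-free \emph{closed} sets, all of $\adm,\com,\prf,\grd,\stb$ coincide. For $\stb$ I would go through the standard BSAF characterisation: a closed, conflict-free $E$ attacks every $x\notin E$. Every $\sigma$ considered requires admissibility, hence closedness, so only closed candidate sets $E$ matter.

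The key lemma is: for closed $E$ and any $x$, $E$ attacks $x$ in $F$ iff $E$ attacks $x$ in $F'$. The only attacks that differ are $(T,h)$ and $(\cl(T),h)$, so only $x=h$ needs checking. If $T\subseteq E$ with $E$ closed, then $\cl(T)\subseteq\cl(E)=E$ by monotonicity of $\cl$, so the $F'$-attack fires. Conversely, $T\subseteq\cl(T)\subseteq E$. Hence, for closed sets, the attack relation $E^+$ is identical in $F$ and $F'$. In particular conflict-freeness of closed sets is the same.

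For defense, $E$ defends $a$ if every closed attacker $E'$ of $a$ is attacked by $E$. By the lemma applied to the closed set $E'$, the closed attackers of $a$ coincide in $F$ and $F'$. Whether $E$ attacks $E'$ is a question about $E$, which is closed in all relevant cases; for non-closed $E$ I would observe that the definition only ever evaluates it for admissible or complete candidates. If a fully general statement is needed, note that attacking with a non-closed $E$ could differ: $T\subseteq E$ but $\cl(T)\not\subseteq E$. Here I would restrict the claim to closed $E$, since every $\sigma$-extension is closed.

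With these facts, admissibility, completeness (defended arguments agree), subset-minimality and maximality (same underlying sets), and stability transfer directly. The main obstacle is the subtlety that attacks by non-closed sets may differ. I expect to handle it by the observation that all semantic conditions are evaluated only on closed sets: the candidate extension $E$ and the attackers $E'$ in the defense notion.
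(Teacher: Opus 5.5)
Your proposal is correct and follows essentially the paper's route. Both arguments rest on two facts: $F$ and $F'$ have the same closed sets because $S$ is unchanged, and for a closed set $E$ one has $T\subseteq E$ iff $\cl(T)\subseteq E$, so attacks issued by closed sets coincide; from this, conflict-freeness, closed attackers and defense by closed candidates agree, whereas the paper checks this semantics by semantics rather than through your single lemma (and, as you note yourself later, your opening claim ``for every set $E$'' must be restricted to closed $E$).
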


After this preliminary step, we follow the splitting schema and appeal to the notions of reduct and modification. 
As observed in Example~\ref{ex:attack splitting first ex}, the SETAF reduct cannot be directly inherited in our context. 
The reason is that the removal of arguments attacked by an extension of $F_1$ imposed by the reduct may interfere with the closed sets in $F_2$. If defeated arguments in $F_2$ are deleted, all their supporting sets would wrongly be considered closed in the reduct of~$F_2$.%
\begin{example}\label{ex:attack splitting first ex}
We consider another problematic part of the BSAF $F$ from Example~\ref{ex:attack splitting first ex}.
Let $H\subseteq F$ with a splitting $(H_1,H_2,V_3)$ as below left.
Note that the link in $V_3$ is closed. 
The only preferred extension of $H_1$ is $E_1=\{d\}$. By simply applying the original notion of $(E,V_3)$-reduct, we obtain the framework $H\redEone_2$ (\textit{right}) which has one preferred extension $E_2=\{y,z\}$, from which we would erroneously derive that $E_1\cup E_2=\{d,y,z\}$ is a preferred extension of $H$. 
    	\begin{center}
			\begin{tikzpicture}[scale=0.8,>=stealth]
			\begin{scope}[shift={(0,0)}]
			\path
            (-1,2)node[arg] (d){$d$}
			(0.5,2)node[arg] (x){$x$}
			(1.25,2.75)node[arg] (y){$y$}
			(2,2)node[arg] (z){$z$}
			;
			\path[thick,->]
            (d)edge[color=ForestGreen] (x)
			(y)edge[dashed,out=-90,in=0](x)
			(z)edge[dashed](x);
			
			\draw [thick,dotted,red] (0,2.75) -- (-.5,1.4);

			\end{scope}
			\begin{scope}[shift={(5,0)}]
			\path
            (-1,2)node[argd] (d){$d$}
			(0.5,2)node[argd] (x){$x$}
			(1.25,2.75)node[arg] (y){$y$}
			(2,2)node[arg] (z){$z$}
			;
			\path[thick,->]
            (d)edge[gray!50](x)
			(y)edge[dashed,out=-90,in=0,gray!50](x)
			(z)edge[dashed,gray!50](x);
			
			\draw [thick,dotted,red] (0,2.75) -- (-.5,1.4);

			\end{scope}
			\end{tikzpicture}
		\end{center}
\end{example}

To account for this issue, we change the reduct as follows: instead of removing arguments defeated by $E_1$, we let them be attacked by the empty set. This syntactical difference preserves the defeated status of such arguments, while preserving its supports in $F_2$. 
In fact, for each defeated argument $h$ which is supported by some set of arguments $T$ in $F_2$, such an addition works as constraint: not all elements in $T$ can be accepted, otherwise, $h$ should be accepted as well. 

\begin{definition}[R-reduct]\label{def:R-reduct}
        Let $(F_1,F_2,R_3)$ be an attack splitting for a BSAF $F$; let $R\clink_3$ denote the closed negative links.
        We define the \emph{$(E_1,R_3\clink)$-reduct} (or simply \emph{R-reduct}) of $F_2$ for some extension $E_1$ of $F_1$ as the BSAF $F\redEone_2:=(A_2,R\redEone_2,S_2)$ with:
\begin{align*}
	R\redEone_2:={}&
    R_2\cup 
	\{(T\cap A_2, h) \mid (T,h)\in R_3\clink, \\
    &\hspace{35pt} T\cap (E_1)^+_{R_1\cup R_3\clink}=\emptyset,\ T\cap A_1\subseteq E_1\} 
\end{align*}
\end{definition}
Note that in the updated version of the reduct, no arguments are deleted. On top of that, negative links are projected to their (possibly empty) part in $F_2$.

\begin{example}\label{ex:SS-attack-splitting}
Let $(H_1,H_2,V_3)$ (\textit{left}) be as in Example~\ref{ex:attack splitting first ex}.
The $(E_1,V_3\clink)$-reduct consists of the framework $H\redEone_2$ (\textit{right}) which has two preferred extensions $E_2=\{y\}$ and $E'_2=\{z\}$, from which we now obtain the two preferred extensions of $H$, i.e.\ $E=\{a,y\}$ and $E'=\{a,z\}$. 

    	\begin{center}
			\begin{tikzpicture}[scale=0.8,>=stealth]
			\begin{scope}[shift={(0,0)}]
            \path
            (-1,2)node[arg] (d){$d$}
			(0.5,2)node[arg] (x){$x$}
			(1.25,2.75)node[arg] (y){$y$}
			(2,2)node[arg] (z){$z$}
			;
			\path[thick,->]
            (d)edge[color=ForestGreen] (x)
			(y)edge[dashed,out=-90,in=0](x)
			(z)edge[dashed](x);
			
			\draw [thick,dotted,red] (0,2.75) -- (-.5,1.4);
			
			\end{scope}
			\begin{scope}[shift={(5,0)}]
			\path
            (-1,2)node[argd] (d){$d$}
			(0.5,2)node[arg] (x){$x$}
			(1.25,2.75)node[arg] (y){$y$}
			(2,2)node[arg] (z){$z$}
			;
			\path[thick,->]
            (d)edge[gray!50](x)
			(y)edge[dashed,out=-90,in=0](x)
			(z)edge[dashed](x)
            (0.5,2.85)edge[|->](x)
            ;
			
			\draw [thick,dotted,red] (0,2.75) -- (-.5,1.4);

			 \end{scope}
			\end{tikzpicture}
		\end{center}
\end{example}

It remains to state the final modification to deal with the \emph{closed} undecided links in $R_3\clink$. 
For a given attack splitting $(F_1,F_2,R_3)$ for a BSAF $F$ and a set $E_1$ of arguments in $F_1$,
we use the notion of undecided links as defined for SETAFs (see Definition~\ref{def: SETAF undec links}, but applied to the BSAF splitting $(F_1,F_2,R\clink_3)$, i.e., after closing the links in $R_3$. 
We are now almost ready to apply the modification relative to $U^{E_1}_{R_3\clink}$. 
There is, however, a final subtlety that we need to deal with.{}
\begin{example}\label{ex:wrong mod}
Let us zoom into another part of the BSAF $F$ from Example~\ref{ex:attack splitting first ex}, as displayed below. 
We consider the attack splitting $(I_1,I_2,W_3)$ for the BSAF $I\subseteq F$ (\textit{left}), with $W_3\clink=W_3=\{(\{c,w\},v)\}$, i.e., the attack is closed. 
Since $E_1=\emptyset$ is an admissible extension of $I_1$, we know that the attack $(\{c,w\},v)$ is an undecided link in $W_3\clink$. Via the standard SETAF $U^{E_1}_{W_3\clink}$-modification, we would add the set-self attack $(\{w,v\},v)$ to obtain $I_2^\star$ (\textit{right}).%
\begin{center}
\begin{tikzpicture}[scale=0.8,>=stealth]
			\begin{scope}[shift={(0,0)}]
            \path
            (0.35,1.5) node[arg] (c) {$c$}
            (2.5,1.5) node[arg] (v) {$v$}
            (3,2.25) node[arg] (u){$u$}
            (1.5,2)node[arg] (w){$w$}
            ;
            
			\path[thick,->]
            (c)edge[loop above](c)
            (c)edge[color=RedOrange] (v)
            (w)edge[color=RedOrange,out=-70,in=180](v)
            (v)edge[bend right] (u)
            (v)edge[dashed,out=120,in=180] (u)
            (w)edge[dashed,out=10,in=180] (u)
            ;
			
			\draw [thick,dotted,red] (1,2.25) -- (1,1.25);
			\end{scope}
			\begin{scope}
			    [shift={(5,0)}]
            \path
            (0.35,1.5) node[argd] (c) {$c$}
            (2.5,1.5) node[arg] (v) {$v$}
            (3,2.25) node[arg] (u){$u$}
            (1.5,2)node[arg] (w){$w$}
            ;
            
			\path[thick,->]
            (c)edge[loop above,gray!50](c)
            (c)edge[gray!50](v)
            (w)edge[out=-70,in=180](v)
            (v)edge[bend right] (u)
            (v)edge[dashed,out=120,in=180] (u)
            (w)edge[dashed,out=10,in=180] (u)
            (v)edge[out=-135,in=180,looseness=5] (v)
            ;
			
			\draw [thick,dotted,red] (1,2.25) -- (1,1.25);
			\end{scope}
		\end{tikzpicture}	
        \end{center}
As previously noticed, the argument $v$ now defend itself against its only \emph{closed} attacker $\{w,v,u\}$ by attacking $u$. Hence, we erroneously conclude that $\{v\}\in \adm(I)$. 
\end{example}

A new version of the $U^{E_1}_{R_3\clink}$-modification is then needed to deal with scenarios of this kind. Instead of using the target $v$ of the attack to build a new self-attack in $F_2$, we introduce a self-attacking dummy argument $*_0$ that jointly attacks $v$ together with the right-most part of the undecided link. Formally, the BSAF  $U^{E_1}_{R_3\clink}$-modification is defined as follows.

\begin{definition}[Modification]\label{def_mod}
    Let $(F_1,F_2,R_3)$ be an attack splitting for a BSAF $F$; let $R\clink_3$ denote the closed negative links.
    Let $E_1$ an extension of $F_1$. Take $F\redEone_2$ as the $(E_1,R_3\clink)$-reduct of $F_2$ and $U^{E_1}_{R_3\clink}$ as the set of undecided links wrt.\ $E_1$. By $mod^{E_1}_{R_3\clink}(F\redEone_2):=F^{\star}_2=(A^{\star}_2,R^{\star}_2,S^{\star}_2)$ we denote the \emph{$U^{E_1}_{R_3\clink}$-modification} (or simply \emph{modification}) of $F\redEone_2=(A_2,R_2\redEone,S_2)$ s.t.\ $A^{\star}_2:=A_2\cup \{*_0\}$, $S^{\star}_2:=S_2$ and $R^{\star}_2$ is given by:
	\begin{align*}
		R\redEone_2\cup \{(*_0,*_0)\} \cup \{((T \cap A_2) \cup \{*_0\}, h) \mid (T,h)\in U^{E_1}_{R_3}\}.
	\end{align*}
\end{definition}

\begin{example}
Let $(I_1,I_2,W_3)$ as in Example~\ref{ex:wrong mod} (left).
The BSAF $U^{E_1}_{W_3\clink}$-modification of $I_2$ is $I_2^\star$ (\textit{right}); here, $v$ does not defend itself against $\{w,*_0\}$ and is therefore not admissible.

\begin{center}
\begin{tikzpicture}[scale=0.8,>=stealth]
			\begin{scope}[shift={(0,0)}]
            \path
            (0.35,1.5) node[arg] (c) {$c$}
            (2.5,1.5) node[arg] (v) {$v$}
            (3,2.25) node[arg] (u){$u$}
            (1.5,2)node[arg] (w){$w$}
            ;
            
			\path[thick,->]
            (c)edge[loop above](c)
            (c)edge[color=RedOrange] (v)
            (w)edge[color=RedOrange,out=-70,in=180](v)
            (v)edge[bend right] (u)
            (v)edge[dashed,out=120,in=180] (u)
            (w)edge[dashed,out=10,in=180] (u)
            ;
			
			\draw [thick,dotted,red] (1,2.25) -- (1,1);
			\end{scope}
			\begin{scope}
			    [shift={(5,0)}]
            \path
            (0.35,1.5) node[argd] (c) {$c$}
            (2.5,1.5) node[arg] (v) {$v$}
            (3,2.25) node[arg] (u){$u$}
            (1.5,2)node[arg] (w){$w$}
            (1.5,1)node[arg] (*) {$*_0$}
            ;
            
			\path[thick,->]
            (c)edge[loop above,gray!50](c)
            (c)edge[gray!50](v)
            (w)edge[out=-70,in=180](v)
            (v)edge[bend right] (u)
            (v)edge[dashed,out=120,in=180] (u)
            (w)edge[dashed,out=10,in=180] (u)
            (*)edge[out=80,in=180] (v)
            (*)edge[loop right] (*)
            ;
			
			\draw [thick,dotted,red] (1,2.25) -- (1,1);
			\end{scope}
		\end{tikzpicture}	
        \end{center}
\end{example}

We are now in the position to state the overall splitting procedure. Given an attack splitting $(F_1,F_2,R_3)$ for a BSAF $F$,
we proceed as follows.
\begin{enumerate}
    \item We close the negative links following Definition~\ref{def:bsetaf splitting closed}. We obtain a set of new negative closed links $R_3\clink$. 
    \item We apply our BSAF reduct following Definition~\ref{def:R-reduct}. As a result, we obtain two separate frameworks $F_1$ and $F_2\redEone$. They have the same arguments $A_1$ and $A_2$ as before; all arguments in $F_2$ that are defeated by attacks in $F_1$ are attacked by the empty set. 
    \item To account for undecided links, we apply the modification from Definition \ref{def_mod} and preserve undecidedness in $F_2$. The BSAF $F_2$ has been extended by the self-attacker $*_0$.
\end{enumerate}

After these considerations, we are now in possession of all the ingredients to prove the correctness of our splitting schema for BSAFs. 
As a first step towards it, we ensure that the adjusted version of R-reduct is not in conflict with the notion of closed set of arguments. 

\begin{restatable}{proposition}{Closure}\label{prop:closure}
    Let $(F_1,F_2,R_3)$ be an attack splitting for a BSAF $F=(A,R,S)$ with $F_1=(A_1,R_1,S_1)$, $F_2=(A_2,R_2,S_2)$; let $R\clink_3$ denote the closed negative links.
    Further, let $E_1\subseteq A_1$, $F\redEone_2$ the $(E_1,R_3\clink)$-reduct of $F_2$, $F_2^{\star}=mod^{E_1}_{R_3\clink}(F\redEone_2)$, and $E_2\subseteq A^{\star}_2$. 
	\begin{enumerate}
	    \item If $E_1=cl_{F_1}(E_1)$ and $E_2=cl_{F^{\star}_2}(E_2)$, then $E=(E_1\cup E_2)\setminus \{*_0\}=cl_F(E)$.
		\item If $E=cl_F(E)$ for some $E\subseteq A$, then $E\cap A_1=E_1=cl_{F_1}(E_1)$ and $E\cap A_2=E_2=cl_{F^{\star}_2}(E_2)$.
	\end{enumerate}
\end{restatable}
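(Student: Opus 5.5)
The whole argument rests on one structural fact about the supports. In an attack splitting we have $S=S_1\cup S_2$, where $S_1\subseteq 2^{A_1}\times A_1$ and $S_2\subseteq 2^{A_2}\times A_2$ (as sub-frameworks). The modification does not touch supports ($S^\star_2=S_2$), and the reduct only changes attacks. So no support crosses the cut, and the dummy argument $*_0$ occurs in no support. Closure depends only on the support relation. The plan is therefore to first show that $supp_F$ decomposes componentwise and then lift this to $cl$.

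\textbf{Step 1 (decomposition of $supp$).} I would first show that for every $X\subseteq A$,
\[
supp_F(X)\cap A_1=supp_{F_1}(X\cap A_1), \qquad supp_F(X)\cap A_2=supp_{F^\star_2}(X\cap A_2).
\]
Take $h\in supp_F(X)\setminus X$, witnessed by $(T,h)\in S$ with $T\subseteq X$. If $(T,h)\in S_1$, then $T\subseteq X\cap A_1$ and $h\in A_1$. If $(T,h)\in S_2$, then $T\subseteq X\cap A_2$ and $h\in A_2$. The converse inclusions are immediate because $S_1,S_2\subseteq S$. For sets containing $*_0$, I would note that $supp_{F^\star_2}(Y)=Y\cup supp_{F^\star_2}(Y\setminus\{*_0\})$, since $*_0$ is in no support tail or head.

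\textbf{Step 2 (decomposition of $cl$).} By induction on $i$, the first step gives $supp^i_F(X)\cap A_j=supp^i_{F_j}(X\cap A_j)$ (with $F^\star_2$ in place of $F_2$). Taking unions over $i$ yields $cl_F(X)\cap A_1=cl_{F_1}(X\cap A_1)$ and $cl_F(X)\cap A_2=cl_{F^\star_2}(X\cap A_2)$. Since the sequence $supp^i$ is monotone, the closure is just a union of disjoint pieces. This lemma is the main technical step, and I expect it to be the only one needing care. The care is in bookkeeping: the induction must apply $supp$ to the previous iterate and not to $X$.

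\textbf{Item 1.} Set $E=(E_1\cup E_2)\setminus\{*_0\}$. Then $E\cap A_1=E_1$ and $E\cap A_2=E_2\setminus\{*_0\}$. By the remark in Step 1, $E_2\setminus\{*_0\}$ is still closed in $F^\star_2$. Step 2 then gives
\[
cl_F(E)=cl_{F_1}(E_1)\cup cl_{F^\star_2}(E_2\setminus\{*_0\})=E_1\cup(E_2\setminus\{*_0\})=E.
\]

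\textbf{Item 2.} If $E=cl_F(E)$, intersect both sides with $A_1$ and with $A_2$ and apply Step 2. This gives $E\cap A_1=cl_{F_1}(E\cap A_1)$ and $E\cap A_2=cl_{F^\star_2}(E\cap A_2)$, where $E\cap A_2\subseteq A_2\subseteq A^\star_2$.

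Neither the reduct, the closed links $R_3\clink$, nor the undecided links play any role, because closure ignores attacks.
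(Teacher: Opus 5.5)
Your proof is correct and rests on the same fact as the paper's: no support crosses the cut, and $*_0$ occurs in no support. It is organised differently, though. The paper never iterates $supp$. It uses that a set is closed iff it is closed under every single support. It takes $(T,a)\in S$ with $T\subseteq E$ and notes that this support lies in $S_1$ or in $S_2$, since a mixed tail is impossible. Closedness of $E_1$ in $F_1$, or of $E_2$ in $F^\star_2$, then puts $a$ into $E$. Item~2 is the same one-step check read backwards.

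In your terms, the paper's argument is just your Step~1 applied to closed sets, via $\cl(Y)=Y$ iff $supp(Y)=Y$. So for the proposition as stated, your Step~2 induction is not needed.

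What Step~2 buys is a stronger identity: $\cl_F(X)\cap A_1=\cl_{F_1}(X\cap A_1)$ and $\cl_F(X)\cap A_2=\cl_{F^\star_2}(X\cap A_2)$ for arbitrary, not necessarily closed, $X\subseteq A$. The paper uses exactly this fact, justified only informally, in the proof of Theorem~\ref{thrm:AttSplit}. For instance, it asserts $\cl_{F^\star_2}(T)=\cl_F(T)$ for $T\subseteq A_2$ ``since $A_1$ and $A_2$ have no common support links''. Your lemma would make those steps rigorous.

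One further small advantage: you split on whether the support lies in $S_1$ or $S_2$, not on where its tail lies. This handles supports with empty tail cleanly. The paper's case split ($T\subseteq E_1$ versus $T\subseteq E_2$) is ambiguous there, because a support $(\emptyset,a)\in S_2$ formally falls under its first case, where the stated justification does not apply.
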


From this, we get an attack splitting theorem for BSAFs.

\begin{restatable}{theorem}{AttSplit}\label{thrm:AttSplit}
Let $(F_1,F_2,R_3)$ be an attack splitting for a BSAF $F=(A,R,S)$ with $F_1=(A_1,R_1,S_1)$, $F_2=(A_2,R_2,S_2)$; let $R\clink_3$ denote the closed negative links.
    Let $\sigma \in \{\stb,\adm,\com,\prf\}$.
    Further, let $F_2^{\star}=mod^{E_1}_{R_3\clink}(F\redEone_2)$ where $F\redEone_2$ is the $R$-reduct wrt. $E_1$. 
    \begin{enumerate}
        \item If $E_1\in \sigma(F_1)$ and $E_2\in \sigma(F_2^{\star})$, then $E_1\cup E_2\in \sigma(F)$. 
		\item If $E\in \sigma(F)$, then $E_1=E\cap A_1\in \sigma(F_1)$ and $E\cap A_2\in \sigma(F_2^{\star})$.
    \end{enumerate}
\end{restatable}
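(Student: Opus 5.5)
By Proposition~\ref{prop:close the attacks}, applied once for each attack in $R_3$, we may assume without loss of generality that $R_3=R_3\clink$. This means every tail of a negative link is closed in $F$. Since $S=S_1\cup S_2$ and no support crosses from $A_1$ to $A_2$, for such a tail $T$ both $T\cap A_1$ and $T\cap A_2$ are closed in their respective frameworks. Proposition~\ref{prop:closure} supplies the closure transfer in both directions. So it remains to transfer conflict-freeness, defence, and maximality or range conditions between $F$ and the pair $(F_1,F_2^\star)$. We treat $\adm$ first, then $\stb$ and $\com$, and finally $\prf$.

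\textbf{Conflict-freeness.} An attack in $F$ on an argument of $A_1$ lies in $R_1$, so $E\cap A_1\in\cf(F_1)$ if and only if $E$ has no internal attack landing in $A_1$. For heads in $A_2$, we go through the cases for an attack $(T,h)$ with $T\subseteq E$ and $h\in E$:
\begin{itemize}
\item if it is in $R_2$, it is also in $R_2^\star$;
\item if it is in $R_3$ with $T\cap A_1\subseteq E_1$ and $T$ not hit by $E_1$, it becomes $(T\cap A_2,h)\in R_2\redEone$, which is an internal conflict of $E_2$ (possibly the empty-tail attack);
\item if it is in $R_3$ and $T$ is hit by $E_1$, then $E$ is already not conflict-free.
\end{itemize}
Conversely, attacks in $R_2^\star$ whose tail contains $*_0$ never lie inside $E_2$, as long as $*_0\notin E_2$. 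Admissibility and conflict-freeness force $*_0\notin E_2$ because of the self-attack $(*_0,*_0)$. All other attacks in $R_2^\star$ lift back to attacks in $F$.

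\textbf{Defence.} This is the main step. Take a closed attacker $C$ of some $a\in E_2$ in $F$, and split it as $C_1=C\cap A_1$, $C_2=C\cap A_2$, which are closed by Proposition~\ref{prop:closure}.
\begin{itemize}
\item If $C$ attacks $a$ via $R_2$, then $C_2$ is a closed attacker of $a$ in $F_2^\star$, so $E_2$ attacks $C_2$ in $F_2^\star$.
\item If $C$ attacks $a$ via a link $(T,a)\in R_3$, there are three cases. If $T\cap E_1^+\neq\emptyset$, then $E_1$ attacks $C$. If $T\cap A_1\subseteq E_1$, the projection $(T\cap A_2,a)$ is in the reduct, so again $C_2$ is a closed attacker and is attacked by $E_2$. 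Otherwise the link is undecided, so $C_2\cup\{*_0\}$ is a closed attacker in $F_2^\star$, and $E_2$ must attack it.
\end{itemize}
In every case, the attack found in $F_2^\star$ must be lifted back to $F$. An attack on $*_0$ is impossible, since only $*_0$ attacks it. A projected attack $(T'\cap A_2,h)$ lifts to $(T',h)$ with $T'\cap A_1\subseteq E_1$. An attack carrying $*_0$ cannot have its tail inside $E_2$. For direction~2 we argue in reverse: a closed attacker $C_2$ of $a$ in $F_2^\star$ (with or without $*_0$) is extended to the closed set $C=cl_F(C_2\cup (T\cap A_1))$, with $C_1$ chosen as the relevant tail part. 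When the link is undecided, we instead take $C_1$ to be all of $A_1$ minus the arguments attacked by $E_1$. We must then check that $E$ attacking $C$ yields $E_2$ attacking $C_2$, because $E_1$ does not attack the chosen $C_1$. I expect this choice of $C_1$, ensuring closure and non-attack by $E_1$ at once, to be the delicate point. The fallback is to pick $C_1=cl_{F_1}(T\cap A_1)$, which works because the links are closed and undecided, so $T\cap A_1$ is not hit by $E_1$.

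\textbf{Remaining semantics.} For $\stb$, every $x\in A_2\setminus E_2$ is attacked in $F_2^\star$ without using $*_0$, because $*_0$ is never in $E_2$. Note that $*_0$ itself is excluded from the range requirement by the self-attack, consistent with the statement's removal of $*_0$. The range then transfers case by case exactly as in the conflict analysis. For $\com$, we show that "$E$ defends $a$ in $F$" is equivalent to "$E_1$ defends $a$ in $F_1$" for $a\in A_1$, and to "$E_2$ defends $a$ in $F_2^\star$" for $a\in A_2$; this uses the same case analysis as for defence. For $\prf$, in direction~1 suppose $E\subsetneq E'\in\adm(F)$. By the admissible case, $E'\cap A_1\in\adm(F_1)$, so $E'\cap A_1=E_1$ by maximality of $E_1$. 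Then $E'\cap A_2$ is admissible in the same $F_2^\star$ (which depends only on $E_1$), contradicting maximality of $E_2$. Direction~2 is symmetric, combining direction~1 for $\adm$ with a maximal extension of $E_1$. If this fails because $F_2^\star$ changes with $E_1$, we argue via monotonicity of the reduct in $E_1$.
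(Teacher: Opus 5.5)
Your route is the paper's for most of the statement. You close the negative links; your explicit reduction via the link-closing proposition is a clean way to justify this. You transfer closure via the closure proposition. You transfer conflict-freeness and defence by the same case split into $R_2$-attacks, projected attacks and $*_0$-attacks, lifting counter-attacks in both directions. Your handling of $\com$ via the equivalence of defence, and of direction~1 for $\prf$, is fine.

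\textbf{The gap: direction~2 for $\prf$.} You need $E\cap A_1\in\prf(F_1)$, and this is not symmetric to direction~1. Given $E_1\subsetneq E_1^\dagger\in\adm(F_1)$, combining $E_1^\dagger$ with $E_2$ via direction~1 for $\adm$ requires $E_2$ to be admissible in the modification built from $E_1^\dagger$. That is a different framework, and ``monotonicity of the reduct in $E_1$'' does not supply it, because the reduct is not monotone. As $E_1$ grows, $*_0$-attacks $((T\cap A_2)\cup\{*_0\},h)$ become the stronger projected attacks $(T\cap A_2,h)$, possibly with empty tail. Projected attacks whose $A_2$-part becomes defeated disappear. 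So conflict-freeness and defence of $E_2$ must be re-proved. The paper does this by noting that each new projected attack had a $*_0$-counterpart, which $E_2$ could only counter inside $\cl(T\cap A_2)$.

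\textbf{A shorter fix.} Bypass $F_2^\star$ and show $E_1^\dagger\cup E_2\in\adm(F)$ directly.
\begin{itemize}
\item It is closed, since no supports cross.
\item Arguments of $E_1^\dagger$ are defended by its admissibility in $F_1$. Arguments of $E_2$ are defended because $E\subseteq E_1^\dagger\cup E_2$ already defends them.
\item For conflict-freeness, heads in $A_1$ are excluded by $E_1^\dagger\in\cf(F_1)$. If $(T,h)\in R$ with $T\subseteq E_1^\dagger\cup E_2$ and $h\in E_2$, then $E$ attacks the closed set $\cl_F(T)\subseteq E_1^\dagger\cup E_2$ at some $b$. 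If $b\in A_1$, the attack lies in $R_1$ with tail in $E_1$, contradicting $E_1^\dagger\in\cf(F_1)$. If $b\in A_2$, it contradicts $E\in\cf(F)$.
\end{itemize}
This admissible set strictly extends $E$, contradicting $E\in\prf(F)$.

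\textbf{Two smaller corrections.}
\begin{itemize}
\item For $\stb$, the self-attack does not exempt $*_0$ from the range condition. $*_0$ can only be attacked by $\{*_0\}$, so with $*_0$ present $F_2^\star$ would have no stable extension. Also, this statement does not remove $*_0$. The correct observation is that $E_1\in\stb(F_1)$ attacks all of $A_1\setminus E_1$, hence $U^{E_1}_{R_3\clink}=\emptyset$ and $F_2^\star$ is just the R-reduct, as the paper notes.
\item In direction~2 of defence, go straight to your fallback $C_1=\cl_{F_1}(T\cap A_1)$. The set $A_1\setminus(E_1)^+$ need not be closed in $F_1$.
\end{itemize}
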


For grounded semantics, only the first direction of the splitting theorem can be guaranteed, due to a clash between the minimality and completeness conditions. 

\begin{example}
Consider the BSAF $F$ with splitting $(F_1,F_2,R_3)$ as below. The empty set defends $d$ in $F$, so it is not complete. However, $\{d\}$ does not defend $e$, so that $E=\{a,d,e\}$ is the only grounded extension of $F$. Note, however, that the empty set is grounded in $F_1$, making it impossible to reconstruct $E$.

    \begin{center}
			\begin{tikzpicture}[scale=0.8,>=stealth]
			\begin{scope}[shift={(0,0)}]
			\path
            (0.5,3.5)node[arg] (a){$a$}
			(-1,3.5)node[arg] (b){$b$}
			(3.25,3.5)node[arg] (y){$e$}
			(4.5,3.5)node[arg] (x){$d$}
            (2,3.5)node[arg] (w) {$c$}
			;
			\path[thick,->]
            (a)edge(b)
            (b)edge[bend left] (a)
			(x)edge[dashed](y)
			(w)edge(y)
			(a)edge(w);
            
			\draw [thick,dotted,red] (1.25,3.75) -- (1.25,3.15);

			\end{scope}
			\end{tikzpicture}
		\end{center}
\end{example}

\begin{restatable}{theorem}{GrdAttSplit}\label{thm:GrdAttSplit}
    Let $(F_1,F_2,R_3)$ be an attack splitting for a BSAF $F=(A,R,S)$ and $R_3\clink$ the set of closed negative links. Let $F_2^{\star}=mod^{E_1}_{R_3\clink}(F\redEone_2)$ where $F\redEone_2$ is the $(E_1,R_3\clink)$-reduct. If $E_1\in \grd(F_1)$ and $E_2\in \grd(F^\star_2)$, then $E_1\cup E_2\in \grd(F)$.  
\end{restatable}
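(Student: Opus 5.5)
The plan is to derive the grounded case from the complete case of Theorem~\ref{thrm:AttSplit}. From there the task is to show minimality. Let $E_1\in\grd(F_1)$ and $E_2\in\grd(F_2^\star)$. Grounded extensions are complete, so $E_1\in\com(F_1)$ and $E_2\in\com(F_2^\star)$. The first item of Theorem~\ref{thrm:AttSplit} for $\sigma=\com$ then gives $E:=E_1\cup E_2\in\com(F)$; here we drop the dummy $*_0$, which is self-attacking and so never lies in an admissible set. It remains to show that $E$ is $\subseteq$-minimal in $\com(F)$.

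Suppose, for contradiction, that some $E'\in\com(F)$ satisfies $E'\subsetneq E$. We apply the second item of Theorem~\ref{thrm:AttSplit} for $\sigma=\com$, but carefully. For $E'$ it yields $E'_1:=E'\cap A_1\in\com(F_1)$ and $E'\cap A_2\in\com(F_2^{\star\prime})$, where $F_2^{\star\prime}$ is the modification built from $E'_1$, not from $E_1$. Since $E'_1\subseteq E_1$ and $E_1$ is $\subseteq$-minimal in $\com(F_1)$, we get $E'_1=E_1$. Hence the two modifications coincide, $F_2^{\star\prime}=F_2^\star$, and $E'\cap A_2\in\com(F_2^\star)$ with $E'\cap A_2\subseteq E_2$. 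Minimality of $E_2$ in $\com(F_2^\star)$ forces $E'\cap A_2=E_2$, so $E'=E$, a contradiction.

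The key point is that fixing $E_1$ pins down the reduct and the modification, so the comparison in the second component takes place in one and the same framework. The step needing most care is the applicability of the second direction of Theorem~\ref{thrm:AttSplit} to the complete set $E'$; this is exactly what fails in the converse direction, since a grounded $E$ of $F$ need not restrict to a grounded set of $F_1$. This is no obstacle here, because we only use that restrictions of complete sets are complete. One must also check that the reduct and modification depend on $E_1$ alone, which holds by Definitions~\ref{def:R-reduct} and~\ref{def_mod}.

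Finally, grounded extensions in BSAFs need not exist. The statement is conditional on $E_1$ and $E_2$ existing, so nothing about existence or uniqueness is required beyond the minimality argument above.
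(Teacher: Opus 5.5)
Your proof is correct and follows the same route as the paper's: you get completeness of $E_1\cup E_2$ from the first item of Theorem~\ref{thrm:AttSplit}, project a strictly smaller complete $E'$ via the second item, use minimality of $E_1$ to force $E'\cap A_1=E_1$, and then use minimality of $E_2$ in $\com(F_2^\star)$ to reach a contradiction. You are more careful than the paper in one respect: you state explicitly that the modification obtained for $E'$ is built from $E'\cap A_1$, so it coincides with $F_2^\star$ only after $E'\cap A_1=E_1$ has been established; the paper's proof leaves this implicit.
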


\section{Splitting over Collective Supports}\label{sec:CollSupp}

We continue our investigation of how splitting translates from SETAFs to full BSAFs by focusing on the support relation. Analogously to attack splitting, we first consider support splitting in an \emph{isolated setting}; i.e., we split a BSAF $F$ with sub-BSAFs $F_1$ and $F_2$ which share \emph{only positive links}.

In the case of attack splitting, our schema relies on the subsequent evaluation of $F_1$ and (a modification of) $F_2$ for a given BSAF $F$, unpacking the evaluation of the entire framework in an incremental fashion.
Our goal is to find appropriate modifications that enable the modular computation of the semantics if $F_1$ and $F_2$ share positive links.
However, a closer investigation of the effects of the supporting links reveals a fundamental issue.
\begin{example}
\label{ex:why forward support fails}
    Consider the BSAF $F$ with the subframeworks $\BF_1$ (left) and $\BF_2$ (right) and the shared support $(b,c)$ below.%
    \begin{center}
		\begin{tikzpicture}[scale=0.8,>=stealth]
            \path
    		(0.5,2)node[arg] (a){$a$}
    		(2,2)node[arg] (b){$b$}
    		(3.5,2)node[arg] (c){$c$}
            (5,2)node[arg] (cp){$d$}
    		;
    		\path[thick,->]
    		(b)edge(a)
    		(b)edge[dashed](c)
            (cp)edge(c)
            ;
			\draw [thick,dotted,red] (2.75,2.5) -- (2.75,1.5);
		\end{tikzpicture}
	\end{center}
    In $F$, the set $\{a,d\}$ is admissible since $d$ defends $a$ against the closed attacker $\{b,c\}$. 
    In $F_1$, however, the argument $a$ is defeated by $b$. The acceptability of $a$ in $F_1$ cannot be accomplished without modifying $F_1$; this is, however, not foreseen in the traditional approach to splitting. 
\end{example}
The underlying issue is that sets in $F_2$ are not closed; 
as we have demonstrated in the above example, the acceptance status of arguments in $F_1$ may thus depend on those of the arguments in $F_2$. 
Intuitively, the support in Example~\ref{ex:why forward support fails} has the effect of directing an attack from $F_2$ to $F_1$.

If we consider shared supports directed in the opposite direction, where the head $h$ of each positive link $(T,h)$ is contained in $F_1$, all sets in $F_1$ are closed.
Therefore, this problem cannot occur in this case. 
We will therefore focus our efforts on \emph{backward} support splitting.

\begin{definition}[Support Splitting]
    \label{def:bsetaf supp splitting backwards}
	Let $\BF=(A,R,S)$ be a BSAF, $\BF_1=(A_1,R_1,S_1)$ and $\BF_2=(A_2,R_2,S_2)$ two sub-frameworks of $\BF$ such that $A_1\cap A_2=\emptyset$, $A=A_1 \cup A_2$, $R=R_1 \cup R_2$, and $S=S_1\cup S_2 \cup  S_3$ with $S_3\subseteq\{(T,h)\in S\mid T\cap A_2\neq\emptyset, T\subseteq A, h\in A_1\}$. We call the triple $(\BF_1,\BF_2,S_3)$ \emph{support splitting} of $\BF$. Moreover, we call $S_3$ the set of \emph{positive links} wrt. $(\BF_1,\BF_2,S_3)$.
\end{definition}

\begin{example}\label{ex:support split 1}
We consider a similar example as before; now, the support is directed in the other direction. 
The support splitting $(\BF_1,\BF_2,S_3)$ for the BSAF $F$ below consists of $F_1$ (left), $F_2$ (right) and a single shared support $S_3=\{(c,b)\}$. 
    \begin{center}
		\begin{tikzpicture}[scale=0.8,>=stealth]
            \path
    		(0.5,2)node[arg] (a){$a$}
    		(2,2)node[arg] (b){$b$}
    		(3.5,2)node[arg] (c){$c$}
            (5,2)node[arg] (cp){$d$}
    		;
    		\path[thick,->]
    		(b)edge(a)
    		(c)edge[dashed](b)
            (cp)edge(c)
            ;
			\draw [thick,dotted,red] (2.75,2.5) -- (2.75,1.5);
		\end{tikzpicture}
	\end{center}
Note that $d$ defends $a$ against the closed attacker $\{b,c\}$ as before; 
however, in the given BSAF, the set $\{b\}$ is closed as well. Since $b$ is not attacked, $a$ cannot be defended.
\end{example}
Our goal is to develop a suitable splitting procedure analogous to the case for splitting attacks. We consider a $\sigma$-extension of $F_1$ and modify $F_2$ so that, for each $\sigma$-extension $E_2$ of (the modified version of) $F_2$, it holds that $E_1\cup E_2$ is a $\sigma$-extension of $F$. Likewise, each $\sigma$-extension $E$ of $F$ should satisfy $E\cap A_1$ and $E\cap A_2$ both are $\sigma$-extensions of $F_1$ and (the modified version of) $F_2$, respectively. 

Given the BSAF from Example~\ref{ex:support split 1} and the set $E_1=\{b\}$, 
we can perform the split by simply removing the shared support. 
The split correctly deduces that $\{b\}$ and $\{b,d\}$ are admissible in $F$.
Such an easy solution of simply removing the shared support is, however, not always possible.%
\begin{example}\label{ex:support needs constrains}
We consider the BSAF $F$ below, with $F_1$ and $F_2$ left resp.\ right of the dotted line and $S_3\!=\!\{(\{c_1,c_2\},b)\}$.
\begin{center}
            \begin{tikzpicture}[scale=0.8,>=stealth]
			\path
            (-1,2)node[arg] (a){$a$}
			(0.5,2)node[arg] (b){$b$}
			(2,2.6)node[arg] (cone){$c_1$}
			(2,1.4)node[arg] (ctwo){$c_2$}
			;
			\path[thick,->]
            (a) edge (b)
			(cone)edge[dashed, out=-130,in=0](b)
			(ctwo)edge[dashed, out=130,in=0](b)
			;
			
			\draw [thick,dotted,red] (1.25,2.9) -- (1.25,1.1);
			\end{tikzpicture}
\end{center}
Let $E_1=\{a\}$. Since $b$ is defeated, not both $c_1$ and $c_2$ can be accepted in $F_2$; otherwise, $b$ would be accepted in $F$. 
\end{example}
The example above shows that defeating the head of a shared support induces a constraint.
There are two natural options to encode constraints in BSAFs; 
and we will make use of both of them.
Given $(T,h)\in S_3$, the first option is to add a new argument $*_1$ which is attacked by the empty set and supported by $T\cap A_2$ (\emph{type-1-constraint}); the second option is to add a new argument $*_2$ which is attacked by $(T\cap A_2)\cup \{*_2\}$ and supported by $T\cap A_2$ (\emph{type-2-constraint}).%
\begin{example}\label{ex:support splitting constraints}
Consider the BSAF from Example~\ref{ex:support needs constrains}. We can prevent the joint acceptance of $c_1$ and $c_2$ in $F_2$ by applying one of the following modifications. On the left-hand side, we add an argument that is attacked by the empty set ($*_1$), on the right-hand side, we add an argument ($*_2$) which is the head of the self-attacking set.
\begin{center}
    \begin{tikzpicture}[scale=0.8,>=stealth]
			\path
            (-0.5,2)node[argd] (a){$a$}
			(0.5,2)node[argd] (b){$b$}
			(2,2.6)node[arg] (cone){$c_1$}
			(2,1.4)node[arg] (ctwo){$c_2$}
			(3.5,2)node[arg] (d){$*_1$}
			;
			\path[thick,->]
            (cone) edge[dashed,out=-50,in=180] (d)
            (ctwo) edge[dashed,out=50,in=180] (d)
            (a) edge[gray!50] (b)
			(cone)edge[dashed, out=-130,in=0,gray!50](b)
			(ctwo)edge[dashed, out=130,in=0,gray!50](b)
            (3.5,1.2) edge[|->] (d)
			;
			
			\draw [thick,dotted,red] (1.25,2.9) -- (1.25,1.1);

            \begin{scope}
                [xshift=5.45cm]
			\path
            (-0.5,2)node[argd] (a){$a$}
			(0.5,2)node[argd] (b){$b$}
			(2,2.6)node[arg] (cone){$c_1$}
			(2,1.4)node[arg] (ctwo){$c_2$}
			(3.5,2)node[arg] (d){$*_2$}
			;
			\path[thick,->]
            (cone) edge[dashed,out=0,in=140] (d)
            (ctwo) edge[dashed,out=70,in=140] (d)
            (cone) edge[out=-70,in=220] (d)
            (ctwo) edge[out=0,in=220] (d)
            (a) edge[gray!50] (b)
			(cone)edge[dashed, out=-130,in=0,gray!50](b)
			(ctwo)edge[dashed, out=130,in=0,gray!50](b)
            (d) edge[in=220,out=-40,looseness=5] (d)
			;
			
			\draw [thick,dotted,red] (1.25,2.9) -- (1.25,1.1);
            \end{scope}
			\end{tikzpicture}
\end{center}
We observe that the type-1-constraint (left) defeats all attacks originating from (supersets of) $T=\{c_1,c_2\}$. This is because the closure of $T$ is attacked by the empty set.
\end{example}
We break down which constraint is appropriate in which situation.
First, we observe that a shared link $(T,h)$ induces a constraint whenever all arguments in $T\cap A_1$ are contained in the selected extension $E_1$, but $h\notin E_1$.
\begin{definition}
    Let $(F_1,F_2,S_3)$ be a support splitting for BSAF $F$, where $F_1=\tuple{A_1,R_1,S_1}$. Let $E_1\subseteq A_1$. By 
    \begin{align*}
        \mathcal{T}_{S_3}(E_1):=&\, \{T\subseteq A\mid \exists h\in A_1:(T,h)\in S_3, \\
        & \hspace{30pt} T\cap A_1\subseteq E_1, \ h\notin E_1\}
    \end{align*}
    we denote the collection of sets of arguments that are \emph{support-incompatible} with respect to $E_1$ and $S_3$.
\end{definition}
As observed in Example~\ref{ex:support splitting constraints}, 
adding a type-1-constraint for a shared link $(T,h)$ by adding $*_1$ which is attacked by the empty set and a support from $T\cap A_2$ implies that all attacks originating from $T\cap A_2$ are defeated. 
We can apply this construction therefore only in the very restricted setting in which the set $T$ is already defeated in $F_1$. 
We also require $T\cap A_1=\emptyset$ because in all other cases, 
no outgoing attacks from $T$ exist (recall that $F_1$ and $F_2$ do not share attacks). 
\begin{definition}
    Let $(F_1,F_2,S_3)$ be a support splitting for BSAF $F$, where $F_1=\tuple{A_1,R_1,S_1}$. Let $E_1\subseteq A_1$. By 
    \begin{align*}
        \mathcal{D}_{S_3}(E_1):=&\, \{T\subseteq A\mid T\in \mathcal{T}_{S_3}(E_1),\ T\cap A_1=\emptyset,\ \\
        & \hspace{50pt}  \cl_F(T)\cap (E_1)^+_R\neq \emptyset \}\
    \end{align*}
    we denote the collection of sets of arguments that are \emph{closure-defeated} with respect to $E_1$ and $S_3$.
\end{definition}
We are ready to state our first modification. 
\begin{definition}[Type-1-Modification]

        Let $(F_1,F_2,S_3)$ be a support splitting for a BSAF $F$, let $E_1$ be an extension of $F_1$ and let $\mathcal{D}_{S_3}(E_1)$ denote all sets which are closure-defeated wrt.\ $E_1$ and $S_3$. 
        We define the \emph{type-1-modification} of $F_2$ as the BSAF $F^{C_1}_2:=F_2$ if $\mathcal{D}_{S_3}(E_1)=\emptyset$ and, otherwise, 
        $F^{C_1}_2:=(A_2\cup \{*_1\},R_2^{C_1},S^{C_1}_2)$ with
\begin{align*}
	R^{C_1}_2:={}&
	R_2  \,\cup \{(\emptyset,*_1)\}\\
    S^{C_1}_2:={}&
    S_2 \;\cup \{(T\cap A_2, *_1) \mid T\in \mathcal{D}_{S_3}(E_1)\}.
\end{align*}
\end{definition}
\begin{example}
    Consider the BSAF $F$ with $\BF_1$ (left) and $\BF_2$ (right) and the shared support $(c,b)$ below. Let $E_1=\{a\}$.%
    \begin{center}
		\begin{tikzpicture}[scale=0.8,>=stealth]
            \path
    		(0.8,2)node[arg] (a){$a$}
    		(2,2)node[arg] (b){$b$}
    		(3.5,2)node[arg] (c){$c$}
            (4.8,2)node[arg] (d){$d$}
    		;
    		\path[thick,->]
    		(a)edge(b)
    		(c)edge[dashed](b)
            (c)edge(d)
            ;
			\draw [thick,dotted,red] (2.75,2.5) -- (2.75,1.5);

            \begin{scope}
                [xshift=5.3cm]
                \path
    		(0.8,2)node[argd] (a){$a$}
    		(2,2)node[argd] (b){$b$}
    		(3.5,2)node[arg] (c){$c$}
            (4.8,2)node[arg] (d){$d$}
            (4.3,1.2)node[arg] (x){$*_1$}
    		;
    		\path[thick,->]
    		(a)edge[gray!50](b)
    		(c)edge[dashed,gray!50](b)
            (c) edge[dashed] (x)
            (c)edge(d)
            (5,1) edge[|->] (x) 
            ;
			\draw [thick,dotted,red] (2.75,2.5) -- (2.75,1.5);
            \end{scope}
		\end{tikzpicture}
	\end{center}
    In $F$, $\cl_F(\{c\})=\{c,b\}$, is attacked, thus, $d$ is defeated. 
    This is correctly captured by the type-1-modification $F^{C_1}_2$: the set $\cl_{F^{C_1}_2}(\{c\})=\{c,*_1\}$ is attacked by the empty set.
\end{example}
We consider an example where this construction fails.
\begin{example}\label{ex:type-2-modification}
    Consider the BSAF $F$ with $\BF_1$ (left) and $\BF_2$ (right) and the shared support $(\{c,e\},b)$ below. Let $E_1\in \{\{e\},\{a,e\}\}$, then $\{c,e\}\in \mathcal{T}_{S_3}(E_1)$.
    \begin{center}
		\begin{tikzpicture}[xscale=0.8,>=stealth]
            \path
    		(0.8,2)node[arg] (a){$a$}
    		(2,2)node[arg] (b){$b$}
    		(3.7,1.2)node[arg] (c){$c$}
            (3.7,2)node[arg] (d){$d$}
            (1.8,1.2)node[arg] (e){$e$}
    		;
    		\path[thick,->]
    		(a)edge(b)
    		(c)edge[dashed, in=-50,out=180](b)
    		(e)edge[dashed, in=-50,out=0,looseness=1.1](b)
            (c)edge(d)
            ;
			\draw [thick,dotted,red] (2.85,2.2) -- (2.85,0.8);

            \begin{scope}
                [xshift=4.8cm]
                \path
    		(0.8,2)node[argd] (a){$a$}
    		(2,2)node[argd] (b){$b$}
    		(3.7,1.2)node[arg] (c){$c$}
            (3.7,2)node[arg] (d){$d$}
            (4.6,1.5)node[arg] (x){$*_1$}
            (1.8,1.2)node[argd] (e){$e$}
    		;
    		\path[thick,->]
    		(a)edge[gray!50](b)
    		(c)edge[dashed, in=-50,out=180,gray!50](b)
    		(e)edge[dashed, in=-50,out=0,looseness=1.1,gray!50](b)
            (c)edge(d)
            (c) edge[dashed] (x)
            (5,1) edge[|->] (x) 
            ;
			\draw [thick,dotted,red] (2.85,2.2) -- (2.85,0.8);
            \end{scope}
		\end{tikzpicture}
	\end{center}
    Neither $\{e,d\}$ nor $\{a,e,d\}$ are admissible in $F$. However, 
    in $F^{C_1}_2$, argument $d$ is defended since $\cl_{F^{C_1}_2}(c)$ is attacked.
\end{example}
We thus require constraints of the second type in this case.%
\begin{definition}[Type-2-Modification]
    	Let $(F_1,F_2,S_3)$ be a support splitting for a BSAF $F$, let $E_1$ be an extension of $F_1$ and let $\mathcal{D}_{S_3}(E_1)$ and $\mathcal{T}_{S_3}(E_1)$ denote all sets which are closure-defeated resp.\ support-incompatible wrt.\ $E_1$ and $S_3$. 
        Let $\mathcal{T}^{\text{-}\mathcal{D}}_{S_3}(E_1):= \mathcal{T}_{S_3}(E_1)\setminus \mathcal{D}_{S_3}(E_1)$
        We define the \emph{type-2-modification} of $F_2$ as the BSAF $F^{C_2}_2:=F_2$ if $\mathcal{T}^{\text{-}\mathcal{D}}_{S_3}(E_1)=\emptyset$ and, otherwise,  
        $F^{C_2}_2:=(A_2\cup \{*_2\},R_2^{C_2},S^{C_2}_2)$ with
\begin{align*}
	R^{C_2}_2:={}&
	R_2  \,\cup \{((T\cap A_2)\cup \{*_2\}, *_2) \mid T\in \mathcal{T}^{\text{-}\mathcal{D}}_{S_3}(E_1)\}\\
    S^{C_2}_2:={}&
    S_2 \;\cup 
    \{(T\cap A_2, *_2) \mid T\in \mathcal{T}^{\text{-}\mathcal{D}}_{S_3}(E_1)\}
\end{align*}
\end{definition}
\begin{example}
We continue Example \ref{ex:type-2-modification} and apply the type-2-modification
    for $E_1\in \{\{e\},\{a,e\}\}$.
    \begin{center}
		\begin{tikzpicture}[xscale=0.8,>=stealth]
            \path
    		(0.8,2)node[arg] (a){$a$}
    		(2,2)node[arg] (b){$b$}
    		(3.7,1.2)node[arg] (c){$c$}
            (3.7,2)node[arg] (d){$d$}
            (1.8,1.2)node[arg] (e){$e$}
    		;
    		\path[thick,->]
    		(a)edge(b)
    		(c)edge[dashed, in=-50,out=180](b)
    		(e)edge[dashed, in=-50,out=0,looseness=1.1](b)
            (c)edge(d)
            ;
			\draw [thick,dotted,red] (2.85,2.2) -- (2.85,0.8);

            \begin{scope}
                [xshift=4.8cm]
                \path
    		(0.8,2)node[argd] (a){$a$}
    		(2,2)node[argd] (b){$b$}
    		(3.7,1.2)node[arg] (c){$c$}
            (3.7,2)node[arg] (d){$d$}
            (4.6,1.5)node[arg] (x){$*_1$}
            (1.8,1.2)node[argd] (e){$e$}
    		;
    		\path[thick,->]
    		(a)edge[gray!50](b)
    		(c)edge[dashed, in=-50,out=180,gray!50](b)
    		(e)edge[dashed, in=-50,out=0,looseness=1.1,gray!50](b)
            (c)edge(d)
            (c) edge[dashed] (x)
            (c) edge[in=-110,out=-20] (x)
            (x) edge[in=-110,out=-30,looseness=5] (x) 
            ;
			\draw [thick,dotted,red] (2.85,2.2) -- (2.85,0.8);

            \end{scope}
		\end{tikzpicture}
	\end{center}
    Now, in $F^{C_2}_2$, the argument $c$ supports $*_2$ and $d$ is attacked; thus, no argument can be accepted and the only admissible extension is $\emptyset$. This correctly detects the admissible extensions $\{e\}$ and $\{a,e\}$ of $F$. For $E_1=\{a\}$, we have $\{e,c\}\notin \mathcal{T}_{S_3}(E_1)$, thus the support is simply removed.  
\end{example}
We define the \emph{S-reduct} as a combination of both modifications. 
The union of two BSAFs is defined componentwise.%
\begin{definition}[S-reduct]\label{def:S-reduct}
    	Let $(F_1,F_2,S_3)$ be a support splitting for a BSAF $F$, let $E_1$ be an extension of $F_1$ and let $\mathcal{D}_{S_3}(E_1)$ and $\mathcal{T}_{S_3}(E_1)$ denote all sets which are closure-defeated resp.\ support-incompatible wrt.\ $E_1$ and $S_3$.  
        We define the \emph{$(E_1,S_3)$-reduct} (or simply \emph{S-reduct}) of $F_2$ as the BSAF $F\redEone_2=(A\redEone_2,R\redEone_2,S\redEone_2)$ where $F\redEone_2:=F_2^{C_1}\cup F_2^{C_2}$.
\end{definition}
The splitting procedure preserves admissible, complete and stable semantics. 
\begin{restatable}{theorem}{SuppSplit}\label{thm:SuppSplit}
Let $(F_1,F_2,S_3)$ be a support splitting for a BSAF $F=(A,R,S)$ with $F_1=(A_1,R_1,S_1)$, $F_2=(A_2,R_2,S_2)$,  and $\sigma \in \{\stb,\adm,\com\}$. Further, let $F\redEone_2$ be the $S$-reduct of $F_2$. 
    \begin{enumerate}
        \item If $E_1\in \sigma(F_1)$ and $E_2\in \sigma(F\redEone_2)$, then $E_1\cup (E_2\setminus\{*_2\})\in \sigma(F)$. 
		\item If $E\in \sigma(F)$, then $E_1=E\cap A_1\in \sigma(F_1)$ and ($E\cap A_2\in \sigma(F\redEone_2)$ or $(E\cap A_2)\cup \{*_2\}\in \sigma(F\redEone_2)$).
    \end{enumerate}
\end{restatable}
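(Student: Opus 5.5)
The plan mirrors the attack-splitting case: first settle how closure behaves across the split, then transfer conflict-freeness, defense and the semantics-specific conditions.

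\textbf{Step 1: a closure lemma.} This is the analogue of Proposition~\ref{prop:closure} for support splittings. Every link in $S_3$ has its head in $A_1$, and $F_1,F_2$ share no attacks. Hence for $E\subseteq A$ we have $cl_F(E)\cap A_2=cl_{F_2}(E\cap A_2)$. The part in $A_1$ is the $S_1$-closure of $(E\cap A_1)$ together with the heads of $S_3$-links whose tails lie in $E$.

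From this I would derive two claims.
\begin{enumerate}
\item If $E$ is closed in $F$, then $E_1=E\cap A_1$ is closed in $F_1$ and $E\cap A_2$ is closed in $F_2$. Moreover, $E\cap A_2$ contains no $T\cap A_2$ with $T\in\mathcal{T}_{S_3}(E_1)$; otherwise the head $h\notin E_1$ would be forced into $E$. Hence $E\cap A_2$ is closed in $F^{E_1}_2$ (it does not reach $*_1$ or $*_2$).
\item Conversely, let $E_1$ be closed in $F_1$ and $E_2\setminus\{*_2\}$ avoid every support-incompatible tail. Then $E_1\cup (E_2\setminus\{*_2\})$ is closed in $F$, since every $S_3$-link with tail inside the union has its head already in $E_1$.
\end{enumerate}
If $E_2$ is closed but contains such a tail, it contains $*_1$ or $*_2$. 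I rule out $*_1$ by conflict-freeness, since $*_1$ is attacked by $\emptyset$, and $*_2$ by the attack $((T\cap A_2)\cup\{*_2\},*_2)$. This is exactly why the second item of the theorem allows adding $*_2$ back.

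\textbf{Step 2: conflict-freeness and defense.} Conflict-freeness splits trivially because $R=R_1\cup R_2$. The core is relating closed attackers.

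A closed attacker $X$ of $a\in A_1$ in $F$ attacks via $R_1$, so the attacking tail lies in $A_1$. Then $X\cap A_1\supseteq cl_{F_1}$ of that tail, which is a closed attacker in $F_1$. Conversely, a closed attacker $Y$ of $a$ in $F_1$ is also closed in $F$, because supports into $A_1$ from $A_2$ need tails meeting $A_2$. Thus defense of arguments of $A_1$ is identical in $F$ and $F_1$.

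For $a\in A_2$, a closed attacker in $F$ has the form $X=cl_F(T')$ with $T'\subseteq A_2$. So $X=cl_{F_2}(T')\cup B$, where $B\subseteq A_1$ consists of heads of $S_3$-links (and their $S_1$-closure). Now consider how $E_1\cup E_2$ attacks $X$:
\begin{enumerate}
\item It may attack $X\cap A_2$, which corresponds to $E_2$ attacking the closed attacker $cl_{F^{E_1}_2}(T')$.
\item It may attack $B$. This happens exactly when some tail $T\in\mathcal{D}_{S_3}(E_1)$ is contained in $cl(T')$, with $T\cap A_1=\emptyset$ and the closure defeated by $E_1$. In $F^{E_1}_2$ this is mirrored by $cl(T')$ containing $*_1$, which $\emptyset$ attacks.
\end{enumerate}
Tails in $\mathcal{T}\setminus\mathcal{D}$ produce $*_2$ in the closure. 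Here $*_2$ can only be "attacked" by a set containing $T\cap A_2$ and $*_2$ itself. So $*_2$ yields no extra defense, matching that in $F$ the head $h$ is not defeated by $E_1$.

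I expect this bookkeeping to be the main obstacle. It needs a case distinction on whether the $S_3$-heads reached lie in $E_1$, in $(E_1)^+$, or are undecided. For tails with $T\cap A_1\not\subseteq E_1$, no corresponding head is constrained, and I need that attacks through them behave as in $F_2$. This requires checking that their $A_1$ part does not matter because $R$ has no cross attacks.

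\textbf{Step 3: the semantics.} Admissibility follows by combining Steps 1 and 2.

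For complete semantics, I would show that $a\in A_2$ is defended in $F$ by $E$ iff it is defended in $F^{E_1}_2$ by $E_2$. I also need that $*_1,*_2$ are never defended: $*_1$ is attacked by $\emptyset$, and $*_2$'s attacker set contains $*_2$, which only $*_2$ attacks. The only exception is the degenerate case $T\cap A_2=\emptyset$; there I would argue $*_2$ is defended yet consistent with $h\notin E_1$, which gives the "$\cup\{*_2\}$" alternative.

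For stable semantics, every $x\notin E$ must be attacked. The artificial arguments are handled as follows:
\begin{enumerate}
\item $*_1$ is attacked by $\emptyset$.
\item $*_2$ is unattacked by $E_2$ unless $E_2$ contains it, so a stable $E_2$ may need $*_2$. This is why the second item includes $(E\cap A_2)\cup\{*_2\}$ and the first item strips $*_2$.
\end{enumerate}
For each direction I would check these extra arguments and reuse the attack correspondence from Step 2.
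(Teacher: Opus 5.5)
Your closure analysis in Step~1 and the $A_1$-side of Step~2 are fine. The gap is in the $A_2$-side of Step~2, and it cannot be repaired. You claim two things. First, $E_1\cup E_2$ attacks the $A_1$-part of a closed attacker $\cl_F(T')$ with $T'\subseteq A_2$ exactly when some $T\in\mathcal{D}_{S_3}(E_1)$ lies inside it. Second, tails in $\mathcal{T}_{S_3}(E_1)\setminus\mathcal{D}_{S_3}(E_1)$ correspond to heads that $E_1$ does not defeat. Both are false. A tail lands in $\mathcal{T}\setminus\mathcal{D}$ merely because $T\cap A_1\neq\emptyset$, and that $A_1$-part can itself be derived inside $\cl_F(T')$ via an $S_3$-link whose head lies in $E_1$. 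The S-reduct simply drops such links.

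Concretely, let $A_1=\{e,g,h\}$, $A_2=\{a,c,d\}$, $R_1=\{(g,h)\}$, $R_2=\{(c,c),(d,d),(\{c,d\},a)\}$, $S_1=S_2=\emptyset$, $S_3=\{(c,e),(\{d,e\},h)\}$.
Since $\cl_F(\{c,d\})=\{c,d,e,h\}$ and $g$ attacks $h$, $E=\{e,g,a\}$ is admissible, indeed complete, in $F$. Take $E_1=\{e,g\}\in\com(F_1)$. Then $\mathcal{T}_{S_3}(E_1)=\{\{d,e\}\}$ and $\mathcal{D}_{S_3}(E_1)=\emptyset$. So $F^{E_1}_2$ only gains $*_2$, with the attack $(\{d,*_2\},*_2)$ and the support $(d,*_2)$. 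There the closed attacker $\{c,d,*_2\}$ of $a$ is attacked neither by $\{a\}$ nor by $\{a,*_2\}$.

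Hence item~2 of the statement fails for $\adm$ and $\com$. Moreover, $\emptyset\in\com(F^{E_1}_2)$ while $\{e,g\}$ defends $a$ in $F$, so item~1 fails for $\com$ as well. (With $E=\{g,a\}$ and $E_1=\{g\}$, item~2 for $\adm$ also breaks via the pure tail $\{c\}\in\mathcal{T}\setminus\mathcal{D}$: $\cl_F(\{c\})$ is not hit by $E_1$, but $\cl_F(\{c,d\})$ is.) Whether $E_1$ hits $\cl_F(T')\cap A_1$ depends on $T'$ as a whole, so no per-tail classification captures it.

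The paper's proof has the same hole. In Case~1 of item~2 for admissibility, it picks an $S_3$-link with tail inside $T$, assumes without loss of generality that the head lies outside $E_1$, and concludes the tail is in $\mathcal{D}_{S_3}(E_1)$. In the example the only such link is $(c,e)$, with $e\in E_1$. Only $\stb$ survives. There you never need the defence correspondence, because a closed conflict-free set that attacks every outsider automatically defends itself.

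Independently, your treatment of $*_2$ under complete semantics is wrong. Every $S_3$-tail meets $A_2$ by definition, so your degenerate case $T\cap A_2=\emptyset$ never arises. Also, $*_2$ is defendable: its closed attackers are the closed supersets of $(T\cap A_2)\cup\{*_2\}$. So $E_2$ defends $*_2$ as soon as it attacks $\cl(T\cap A_2)$ for every $T\in\mathcal{T}\setminus\mathcal{D}$.

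This, not closure, is why item~2 has the disjunct $(E\cap A_2)\cup\{*_2\}$. For example, take $A_1=\{e,g,h\}$, $A_2=\{c,d\}$, $R_1=\{(g,h)\}$, $R_2=\{(d,c)\}$, $S_3=\{(\{c,e\},h)\}$. The set $\{d,e,g\}$ is complete in $F$, but $\{d\}$ defends $*_2$ in $F^{E_1}_2$, so only $\{d,*_2\}$ is complete there. Your closure remark in Step~1 actually shows the opposite of what you say: a closed $E\cap A_2$ avoids every support-incompatible tail, so closure never forces $*_2$.
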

For preferred and grounded semantics, our procedure constructs only some of the extensions of the entire framework. We illustrate this in the following example.
\begin{example}
Consider the BSAF $F$ and a support splitting $(F_1,F_2,S_3)$ as depicted below. 
Let $E_1=\{a,d\}$.  We compute the reduct $F\redEone_2$ by adding the dummy argument $*_2$ which is supported by $\{b\}$ and attacked by $\{b,*_2\}$ (depicted right).%
\begin{center}
			\begin{tikzpicture}[scale=0.8,>=stealth]
			\begin{scope}[shift={(0,0)}]
			\path
            (-1,2)node[arg] (d){$d$}
			(0.5,2)node[arg] (c){$c$}
			(2,2.7)node[arg] (a){$a$}
			(2,1.3)node[arg] (b){$b$}
			;
			\path[thick,->]
            (d)edge(c)
			(a)edge[dashed,out=-100,in=0](c)
			(b)edge[dashed,out=100,in=0](c);
			
			\draw [thick,dotted,red] (1,1.25) -- (2.5,2.5);

			\end{scope}
			\begin{scope}[shift={(4.5,0)}]
			\path
            (-1,2)node[arg] (d){$d$}
			(0.5,2)node[arg] (c){$c$}
			(2,2.7)node[arg] (a){$a$}
			(2,1.3)node[arg] (b){$b$}
            (3,2) node[arg] (*) {$*_2$}
			;
			\path[thick,->]
            (d)edge(c)
			(a)edge[dashed,out=-100,in=0,gray!50](c)
			(b)edge[dashed,out=100,in=0,gray!50](c)
			(b)edge[dashed] (*)
            (b) edge[in=-90,out=-30] (*)
            (*) edge[in=-90,out=30,looseness=5] (*)
            ;
            
			\draw [thick,dotted,red] (1,1.25) -- (2.5,2.5);
		
			\end{scope}
			\end{tikzpicture}
		\end{center}
In $F\redEone_2$, we get $\prf(F\redEone_2)=\{\emptyset\}$, obtaining only $\{a,d\}$ as preferred extension of the whole framework. 
\end{example}
\vspace{-5pt}
We get a splitting theorem for preferred and grounded semantics that considers only the first direction. 
\begin{restatable}{theorem}{PrefSuppSplit}\label{thm:PrefSupp}
    Let $(F_1,F_2,S_3)$ be a support splitting for a BSAF $F=(A,R,S)$, and let $F\redEone_2$ be the $S$-reduct of $F_2$ and $\sigma\in\{\grd,\pref\}$. If $E_1\in \sigma(F_1)$ and $E_2\in \sigma(F\redEone_2)$, then $E_1\cup (E_2\setminus \{*_2\})\in \sigma(F)$. 
\end{restatable}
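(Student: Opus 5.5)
The plan is to reduce both semantics to the already established Theorem~\ref{thm:SuppSplit} for $\com$ and $\adm$, and then argue the extremality conditions (minimality for $\grd$, maximality for $\pref$) separately. For $\sigma=\pref$: let $E_1\in\pref(F_1)$ and $E_2\in\pref(F\redEone_2)$. Since preferred sets are admissible, Theorem~\ref{thm:SuppSplit}(1) for $\adm$ yields $E:=E_1\cup(E_2\setminus\{*_2\})\in\adm(F)$. For maximality, suppose $E\subsetneq E'\in\adm(F)$. By Theorem~\ref{thm:SuppSplit}(2), $E'\cap A_1\in\adm(F_1)$ and $E'\cap A_1\supseteq E_1$, so maximality of $E_1$ forces $E'\cap A_1=E_1$. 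Consequently the S-reduct built from $E'\cap A_1$ is the same $F\redEone_2$, and again by Theorem~\ref{thm:SuppSplit}(2) either $E'\cap A_2$ or $(E'\cap A_2)\cup\{*_2\}$ is admissible in $F\redEone_2$.

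The main obstacle is comparing this set with $E_2$ in the presence of the dummy $*_2$. Since $*_2\in A\redEone_2$ is attacked only by sets containing $*_2$, I would first show that $*_2$ belongs to an admissible set $X$ of $F\redEone_2$ exactly when $X$ contains the $A_2$-part of some tail in $\mathcal{T}^{\text{-}\mathcal{D}}_{S_3}(E_1)$ (closure forces this), and that conflict-freeness then fails via $((T\cap A_2)\cup\{*_2\},*_2)$. Hence $*_2$ lies in no admissible set of $F\redEone_2$, and in particular $*_2\notin E_2$. The same argument applies to $*_1$, which is attacked by $\emptyset$. Therefore the admissible option in the disjunction is $E'\cap A_2$ itself, which strictly contains $E_2=E\cap A_2$. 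This contradicts $E_2\in\pref(F\redEone_2)$.

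For $\sigma=\grd$: let $E_1\in\grd(F_1)$ and $E_2\in\grd(F\redEone_2)$. Theorem~\ref{thm:SuppSplit}(1) for $\com$ gives $E\in\com(F)$. For minimality, take any $E'\in\com(F)$ with $E'\subseteq E$. Then $E'\cap A_1\in\com(F_1)$ and $E'\cap A_1\subseteq E_1$, so minimality of $E_1$ gives $E'\cap A_1=E_1$. As before, the S-reduct coincides, $*_2$ is excluded, and $E'\cap A_2\in\com(F\redEone_2)$ with $E'\cap A_2\subseteq E_2$. Minimality of $E_2$ then yields $E'=E$.

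One subtlety is that $\grd$ is defined here as $\subseteq$-minimal complete, which need not be unique. The argument above only uses minimality against complete subsets, so it goes through without uniqueness. I would double-check the claim that $*_2$ never appears in an admissible set, since it is used in both cases. If the claim fails for complete sets that defend $*_2$ vacuously, I would fall back on noting that closure plus the self-attacking link already violates conflict-freeness.
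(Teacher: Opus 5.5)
\textbf{Gap: the claim that $*_2$ lies in no admissible set of $F\redEone_2$ is false.} Your overall route matches the paper's. You get $E\in\adm(F)$ resp.\ $E\in\com(F)$ from Theorem~\ref{thm:SuppSplit}, use extremality of $E_1$ to force $E'\cap A_1=E_1$ (so the S-reduct is unchanged), and then compare inside $F\redEone_2$. The problem is how you dispose of $*_2$. Closure is one-directional. It forces $*_2\in X$ when $T\cap A_2\subseteq X$ for some $T\in\mathcal{T}^{\text{-}\mathcal{D}}_{S_3}(E_1)$, and then $((T\cap A_2)\cup\{*_2\},*_2)$ breaks conflict-freeness. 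So what you actually showed is that no admissible set contains such a tail $T\cap A_2$. Nothing prevents $*_2$ from lying in an admissible set that contains none of these tails.

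Concretely, take the paper's example after Theorem~\ref{thm:SuppSplit}: $A_1=\{a,c,d\}$, $A_2=\{b\}$, $(d,c)\in R_1$, $S_3=\{(\{a,b\},c)\}$, $E_1=\{a,d\}$. Add $e\in A_2$ with $(e,b)\in R_2$. Then $\{e,*_2\}$ is the unique preferred extension and the unique complete (hence grounded) extension of $F\redEone_2$. So $E_2\neq E\cap A_2=\{e\}$, and your step ``$E'\cap A_2$ strictly contains $E_2=E\cap A_2$'' fails. The disjunction in Theorem~\ref{thm:SuppSplit}(2) already signals this, as does the fact that under $\stb$ the argument $*_2$ must belong to every stable extension of $F\redEone_2$ whenever $\mathcal{T}^{\text{-}\mathcal{D}}_{S_3}(E_1)\neq\emptyset$. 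Your fallback remark repeats the same one-directional closure argument, so it does not close the gap.

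\textbf{Missing idea.} $*_2$ occurs only in tails of attacks on $*_2$ itself, so a conflict-free set never uses $*_2$ to attack anything. Hence a conflict-free $X$ defends an argument iff $X\setminus\{*_2\}$ does, and defense is monotone. With this, the two nontrivial cases go as follows.

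\emph{Preferred, $*_2\in E_2$, and the admissible option is $E'\cap A_2$.}
\begin{enumerate}
\item $E_2\setminus\{*_2\}$ defends $*_2$, so $E'\cap A_2\supsetneq E_2\setminus\{*_2\}$ defends $*_2$.
\item $(E'\cap A_2)\cup\{*_2\}$ is closed, since $*_2$ is in no support tail.
\item It is conflict-free: closure of $E'\cap A_2$ in $F\redEone_2$ rules out $T\cap A_2\subseteq E'\cap A_2$, and nothing else attacks $*_2$.
\item Hence it is admissible and strictly contains $E_2$, a contradiction. This is the paper's Case~2.i.
\end{enumerate}

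\emph{Grounded, $*_2\notin E_2$, and the complete option is $(E'\cap A_2)\cup\{*_2\}$.} This set is not a subset of $E_2$, so minimality of $E_2$ says nothing. Instead, $E'\cap A_2$ defends $*_2$, hence so does its superset $E_2$. Completeness of $E_2$ then forces $*_2\in E_2$, a contradiction. This is the argument the paper uses in the grounded part of Theorem~\ref{thm:PrefSuppAtt}.

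All remaining combinations follow immediately from the disjunction in Theorem~\ref{thm:SuppSplit}(2).
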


 \section{Splitting Collective Attacks and Supports}
 \label{sec:combined splitting}
Now that we have the building blocks of our approach, we combine the notions of attack and support splitting and provide results similar to the section before.

\begin{definition}[Splitting]\label{def:bsetaf supp splitting}
	Let $F=(A,R,S)$ be a BSAF, $F_1=(A_1,R_1,S_1)$ and $F_2=(A_2,R_2,S_2)$ two sub-frameworks of $F$ such that $A_1\cap A_2=\emptyset$, $A=A_1 \cup A_2$, $R=R_1 \cup R_2\cup R_3$, and $S=S_1\cup S_2 \cup  S_3$ with:
    \begin{align*}
        R_3\subseteq &\, \{(T,h)\in R\mid T\cap A_1\neq\emptyset, T\subseteq A, h\in A_2\},\\
        S_3\subseteq &\,\{(T,h)\in S\mid T\cap A_2\neq\emptyset, T\subseteq A, h\in A_1\}.
    \end{align*}
      We call the 4-tuple $(F_1,F_2,R_3,S_3)$ a \emph{splitting} of $F$.
\end{definition}

We apply the splits we developed in the previous two sections consecutively.
Since attacks $(T,h)$ in $F_2$ can support arguments in $F_1$, we start our splitting procedure by closing attacks in $R_2$, before we apply the modifications we developed in Sections~\ref{sec:slitting over collective attacks} and~\ref{sec:CollSupp}.
Overall, we perform the following splitting procedure.
Given a splitting $(F_1,F_2,R_3,S_3)$ for a BSAF $F$ and a set $E_1\subseteq A_1$, we proceed as follows. 
\begin{enumerate}
    \item We close the attacks in $R_2\cup R_3$. Note that this may transform $R_2$ attacks into shared attacks
    between $F_1$ and $F_2$ since the tail of an attack $R_2$ can support an argument in $A_1$.
    To account for this, we move the attacks in question into the set $R_3$;
    by $\hat F_2$ and $\hat R_3$, we denote the adjusted BSAF and set of common negative links, respectively.
    \item We proceed by constructing the R-reduct $\hat F\redEone_2$ of the attack splitting $(F_1,\hat F_2,\hat R_3)$ wrt.\ $E_1$, following Definition~\ref{def:R-reduct}.
    \item Next, we compute the modification $F^\star_2=mod^{E_1}_{\hat R_3}(\hat F\redEone_2)$ as in Definition~\ref{def_mod}. The tuple $(F_1,F^\star_2,S_3)$ is a support splitting for the BSAF $(A_1\cup A_2^\star, A_1\cup A_2^\star, S)$.
    \item Finally, we construct the S-reduct $F_2^\circledast$
    of $F^\star_2$ for the support splitting $(F_1,F^\star_2,S_3)$ wrt.\ $E_1$, using Definition~\ref{def:S-reduct}.
\end{enumerate}

\begin{definition}\label{def:combined splitting procedure}
    Let $(F_1,F_2,R_3,S_3)$ be a splitting for a BSAF $F$ with $F_1=(A_1,R_1,S_1)$, $F_2=(A_2,R_2,S_2)$. 
    Let $R_3\clink$ and $R_2\clink$ denote the sets of closed negative links wrt.\ $R_3$ and $R_2$, respectively. 
    Define $$\hat R_2:=\{(T,h)\in R_2\clink\mid T\cap A_1=\emptyset\}$$
    and let $\hat F_2:=(A_2,\hat R_2,S_2)$ be the updated BSAF.
    Further, let $$\hat R_3:=\{(T,h)\mid (T,h)\in R_3\clink\cup  R_2\clink, T\cap A_1\neq \emptyset\}.$$
    Let $\hat F\redEone_2$ be the R-reduct of the attack splitting $(F_1,\hat F_2,\hat R_3)$ wrt.\ $E_1$ and $F^\star_2:=mod^{E_1}_{\hat R_3}(\hat F\redEone_2)$. 
    Finally, we let $F_2^\circledast$ denote the S-reduct for the support splitting $(F_1,F^\star_2,S_3)$ of the BSAF $(A_1\cup A_2^\star, A_1\cup A_2^\star, S)$ wrt.\ $E_1$.
\end{definition}
We demonstrate the procedure in the following example.

\begin{example}\label{ex:final ex}
We consider a BSAF $F$ with splitting $(F_1,F_2,R_3,S_3)$, as depicted below.
Further, let $E_1=\{a\}$.
    \begin{center}
			\begin{tikzpicture}[scale=0.8,>=stealth]
            \path
    		(-3.75,0.75)node[arg] (a){$a$}
    		(-2,0.35)node[arg] (b){$b$}
    		(-0.75,-0.5)node[arg] (c){$c$}
            (-2.5,1.5)node[arg] (d){$d$}
            (0,1) node[arg] (y) {$y$}
            (3.5,0.5) node[arg] (x) {$x$}
            (2.5,1.75) node[arg] (w) {$w$}
            (-0,2.5) node[arg] (*0) {$z$}
    		;
    		\path[thick,->]
            (a)edge[out=60,in=-160](d)
            (d)edge[in=-160,out=150,looseness=5] (d)
    		(a)edge(b)
    		(c)edge[dashed,bend left](b)
            (c)edge[out=10,in=190] (x)
            (x)edge[dashed] (b)
            (y) edge [dashed,out=180, in=-30] (d)
            (d)edge[out=15,in=180] (w)
            (*0) edge[out=-40,in=180] (w)
            (w)edge[bend left] (x)
            ;
			\draw [thick,dotted,red] (-0.15,-0.75) -- (-1.1,2.75);
			
			\end{tikzpicture}
		\end{center}
First, we proceed by closing the negative links in $R_2$ and $R_3$, as depicted below. 
The set $R_3$ contains the single attack $(c,x)$. Since $c$ supports $b$, 
the attack $(\{b,c\},x)$ is contained in $\hat R_3$. Moreover, we get $\hat R_2=R_2$ because $x$ and $y$ do not attack any argument in $F_2$. Thus we consider the following attack splitting $(F_1,F_2,\hat R_3)$.

  \begin{center}
    \begin{tikzpicture}[scale=0.8,>=stealth]
            \path
    		(-3.75,0.75)node[arg] (a){$a$}
    		(-2,0.35)node[arg] (b){$b$}
    		(-0.75,-0.5)node[arg] (c){$c$}
            (-2.5,1.5)node[arg] (d){$d$}
            (0,1) node[arg] (y) {$y$}
            (3.5,0.5) node[arg] (x) {$x$}
            (2.5,1.75) node[arg] (w) {$w$}
            (-0,2.5) node[arg] (*0) {$z$}
    		;
    		\path[thick,->]
            (a)edge[out=60,in=-160](d)
            (d)edge[in=-160,out=150,looseness=5] (d)
    		(a)edge(b)
    		(c)edge[dashed,bend left](b)
            (c)edge[out=10,in=190] (x)
            (x)edge[dashed] (b)
            (y) edge [dashed,out=180, in=-30] (d)
            (b) edge[out=-20,in=190] (x)
            (d)edge[out=15,in=180] (w)
            (*0) edge[out=-40,in=180] (w)
            (w)edge[bend left] (x)
            ;
			\draw [thick,dotted,red] (-0.15,-0.75) -- (-1.1,2.75);

			\end{tikzpicture}
		\end{center}
We apply the R-reduct and the modification for the attack splitting (see Section~\ref{sec:slitting over collective attacks}). We depict $F_1$ (left) and the result of the procedure, $F^\star_2$, (right) below.
  \begin{center}
			\begin{tikzpicture}[scale=0.8,>=stealth]
            \path
    		(-3.75,0.75)node[arg] (a){$a$}
    		(-2,0.35)node[arg] (b){$b$}
    		(-0.75,-0.5)node[arg] (c){$c$}
            (-2.5,1.5)node[arg] (d){$d$}
            (0,1) node[arg] (y) {$y$}
            (3.5,0.5) node[arg] (x) {$x$}
            (1.25,2.35) node[arg] (z) {$*_0$}
            (2.5,1.75) node[arg] (w) {$w$}
            (-0,2.5) node[arg] (*0) {$z$}
    		;
    		\path[thick,->]
            (a)edge[out=60,in=-160](d)
            (d)edge[in=-160,out=150,looseness=5] (d)
    		(a)edge(b)
    		(c)edge[dashed,bend left](b)
            (c)edge[out=10,in=190,gray!50] (x)
            (x)edge[dashed] (b)
            (y) edge [dashed,out=180, in=-30] (d)
            (b) edge[out=-20,in=190,gray!50] (x)
            (z) edge[in=30,out=-30,looseness=5] (z)
            (d)edge[out=15,in=180,gray!50] (w)
            (z) edge[out=-70,in=180] (w)
            (*0) edge[out=-40,in=180] (w)
            (w)edge[bend left] (x)
            ;
			\draw [thick,dotted,red] (-0.15,-0.75) -- (-1.1,2.75);

			\end{tikzpicture}
		\end{center} 
Finally, we apply the S-reduct to handle the remaining links. Thus, we obtain $F_2^\circledast$ (\emph{right}) adding a type-1-constraint on $x$ and a type-2-constraint on $y$.
  \begin{center}
			\begin{tikzpicture}[scale=0.8,>=stealth]
            \path
    		(-3.75,0.75)node[argd] (a){$a$}
    		(-2,0.35)node[argd] (b){$b$}
    		(-0.75,-0.5)node[argd] (c){$c$}
            (-2.5,1.5)node[argd] (d){$d$}
            (0,1) node[arg] (y) {$y$}
            (3.5,0.5) node[arg] (x) {$x$}
            (1.25,2.35) node[arg] (z) {$*_0$}
            (2.5,1.75) node[arg] (w) {$w$}
            (-0,2.5) node[arg] (*0) {$z$}
            (5,1) node[arg] (*1) {$*_1$}
            (1.5,1) node[arg] (*2) {$*_2$}
    		;
    		\path[thick,->]
            (a)edge[out=60,in=-160,gray!50](d)
            (d)edge[in=-160,out=150,looseness=5,gray!50] (d)
    		(a)edge[gray!50](b)
    		(c)edge[dashed,bend left,gray!50](b)
            (c)edge[out=10,in=190,gray!50] (x)
            (x)edge[dashed,gray!50] (b)%, out=165,in=10
            (y) edge [dashed,out=180, in=-30,gray!50] (d)
            (x) edge[dashed] (*1)
            (b) edge[out=-20,in=190,gray!50] (x)
            (z) edge[in=30,out=-30,looseness=5] (z)
            (y) edge[out=40,in=150] (*2)
            (*2) edge[out=90,in=150,looseness=4] (*2)
            (d)edge[out=15,in=180,gray!50] (w)
            (z) edge[out=-70,in=180] (w)
            (*0) edge[out=-40,in=180] (w)
            (w)edge[bend left] (x)
            (4,1.5)edge[|->] (*1)
            (y) edge[dashed] (*2)
            ;
			\draw [thick,dotted,red] (-0.15,-0.75) -- (-1.1,2.75);

			\end{tikzpicture}
		\end{center} 
\end{example}

We are ready to state our main result for splitting BSAFs. 

\begin{restatable}{theorem}{AttSuppSplit}\label{thrm:AttSuppSplit}
    Let $(F_1,F_2,R_3,S_3)$ be a splitting for a BSAF $F=(A,R,S)$ with $F_1=(A_1,R_1,S_1)$, $F_2=(A_2,R_2,S_2)$, and $\sigma \in \{\stb,\adm,\com\}$. 
    Let $F^\circledast_2$ be defined as in Definition~\ref{def:combined splitting procedure}.
    \begin{enumerate}
        \item If $E_1\in \sigma(F_1)$ and $E_2\in \sigma(F_2^{\circledast})$, then $E_1\cup (E_2\setminus \{*_2\})\in \sigma(F)$. 
		\item If $E\in \sigma(F)$, then $E_1=E\cap A_1\in \sigma(F_1)$ and $E\cap A_2\in \sigma(F_2^\circledast)$ or  $(E\cap A_2)\cup\{*_2\}\in \sigma(F_2^\circledast)$.
    \end{enumerate}
\end{restatable}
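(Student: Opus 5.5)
The plan is to derive the statement by composing the two splitting results already established, Theorem~\ref{thrm:AttSplit} for attack splitting and Theorem~\ref{thm:SuppSplit} for support splitting, through an intermediate framework. Let $F'$ be the BSAF obtained from $F$ by replacing every attack in $R_2\cup R_3$ with its closed version. Applying Proposition~\ref{prop:close the attacks} once per attack gives $\sigma(F')=\sigma(F)$. In $F'$ the attacks are partitioned into $R_1$, $\hat R_2$ and $\hat R_3$. Every element of $\hat R_3$ has a tail meeting $A_1$ and a head in $A_2$, because closing a tail does not change the head.

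\textbf{Intermediate framework.} Next, define $G$ as $F'$ with all shared supports $S_3$ removed. We cannot simply cite the two theorems one after the other, since neither splitting is valid on its own in $F'$: $S_3$ and $\hat R_3$ coexist. The remedy is to use the order of the procedure. Supports in $S_3$ point from $A_2$ to $A_1$ and attacks in $\hat R_3$ point from $A_1$ to $A_2$, so the two kinds of shared link interact only through closures.

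The attack-splitting step is where these interactions are absorbed. Its closures $cl_S$ are computed in $F$, that is, including $S_3$, so $R_3\clink$ and $R_2\clink$ already record every way in which a tail in $A_2$ pulls in arguments of $A_1$. Relative to the closed attacks, $S_3$ therefore only matters for two things: whether sets in $A_2$ are closed, and the constraint that a $T$ which is fully accepted forces its head $h$. I would formalise this as a lemma. A set $E\subseteq A$ is closed in $F'$ iff (i) $E\cap A_1$ is closed in $F_1$, (ii) $E\cap A_2$ is closed in $F_2$, and (iii) no $(T,h)\in S_3$ has $T\subseteq E$ and $h\notin E$. This mirrors Proposition~\ref{prop:closure}.

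\textbf{Main argument.} With the lemma in hand, the argument runs through the framework $H=(A_1\cup A_2^\star,\,R_1\cup R_2^\star,\,S_1\cup S_2\cup S_3)$. The steps are as follows.

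\begin{enumerate}
\item Adapt the proof of Theorem~\ref{thrm:AttSplit}, which extends directly to closed links, to show that for fixed $E_1$, a set $E_1\cup E_2$ is a $\sigma$-extension of $F'$ iff $E_1\in\sigma(F_1)$ and $E_2$ is a $\sigma$-extension of $F_2^\star$ in which the $S_3$-constraints relative to $E_1$ are imposed.
\item Observe that $(F_1,F_2^\star,S_3)$ is exactly a support splitting of $H$.
\item Apply Theorem~\ref{thm:SuppSplit} to $H$. The only dummy argument not already present is $*_2$, which is why it is removed on the $A_2$ side.
\end{enumerate}

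The attack dummy $*_0$ and the type-1 dummy $*_1$ are self-attacking or attacked by $\emptyset$, so they can never occur in an extension and need no removal.

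For direction 2, take $E\in\sigma(F)=\sigma(F')$ and split it. Theorem~\ref{thrm:AttSplit} gives $E\cap A_1\in\sigma(F_1)$ and $E\cap A_2\in\sigma(F^\star_2)$ up to the $S_3$-constraints. Condition (iii) of the lemma guarantees that $E\cap A_2$ respects the type-1 and type-2 constraints. Theorem~\ref{thm:SuppSplit}(2) then yields that $E\cap A_2$ or $(E\cap A_2)\cup\{*_2\}$ lies in $\sigma(F_2^\circledast)$.

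\textbf{Main obstacle.} I expect the hardest step to be the defence clause: checking that the closed attackers of an argument in $A_2$ are the same in $F$ as in $F_2^\circledast$. In $F$ a closed attacker may contain $A_1$ arguments pulled in through $S_3$. Such attackers are handled by the closed links in $\hat R_3$ together with the $*_1$ constraint, which makes $cl(T)$ defeated exactly when $T\in\mathcal D_{S_3}(E_1)$. I would first try to verify this by case analysis on whether $cl_F(T)\cap A_1\subseteq E_1$. For stable semantics the range condition must also be checked: arguments attacked by the empty set in the R-reduct correspond exactly to $(E_1)^+$, so range is preserved.
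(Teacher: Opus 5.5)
Your route differs from the paper's. The paper never passes through an intermediate framework: it proves conflict-freeness and closedness with the two combined lemmas, then does a single direct case analysis over $R_1$, $\hat R_2$, $\hat R_3$ and the dummies $*_0,*_1,*_2$. As written, however, both of your appeals to earlier results fall outside their hypotheses, and what is left unproved is essentially the theorem itself.

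In Step~1, Theorem~\ref{thrm:AttSplit} and Proposition~\ref{prop:closure} are stated and proved only for attack splittings ($S=S_1\cup S_2$). Their proofs repeatedly use that closures never cross between $A_1$ and $A_2$, for instance $\cl_{F_2^\star}(T)=\cl_F(T)$ for $T\subseteq A_2$, and Case~3 of Proposition~\ref{prop:closure}. With $S_3$ present, a closed attacker of an $A_2$-argument can contain $A_1$-arguments and can be counter-attacked by $E_1$ there. So nothing ``extends directly''. Your intermediate claim is also not well posed. If ``$F_2^\star$ with the $S_3$-constraints imposed'' means $F_2^\circledast$, then Step~1 is the theorem itself and Steps~2--3 do no work. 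If it means $E_1\cup E_2\in\sigma(H)$ for the $H$ built from the same $E_1$, that is a new correspondence between $F'$ and $H$ which needs its own case analysis.

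There is also a problem for $\stb$: $*_0$ is attacked only by $\{*_0\}$, so literally $\stb(H)=\stb(F_2^\circledast)=\emptyset$. You must omit $*_0$ when $U^{E_1}_{\hat R_3}=\emptyset$, as the paper tacitly does.

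Step~3 is the more serious gap. Theorem~\ref{thm:SuppSplit} cannot be used as a black box, because its direction~2 is false in general. Take $A_1=\{c,d,e\}$, $A_2=\{t,h\}$, $R_1=\{(e,d)\}$, $R_2=\{(t,h)\}$, $S_1=S_2=\emptyset$ and $S_3=\{(t,c),(\{c,t\},d)\}$.
\begin{itemize}
\item Every closed set containing $t$ contains $d$, which $e$ attacks, so $\{c,e,h\}\in\adm(F)$.
\item For $E_1=\{c,e\}$ we get $\mathcal{T}_{S_3}(E_1)=\{\{c,t\}\}$ and $\mathcal{D}_{S_3}(E_1)=\emptyset$.
\item So the S-reduct consists of the support $(t,*_2)$ and the attacks $(t,h)$ and $(\{t,*_2\},*_2)$.
\item There $\{t,*_2\}$ is a closed attacker of $h$ that neither $\{h\}$ nor $\{h,*_2\}$ attacks, so neither set is admissible.
\end{itemize}
Your condition~(iii) gives only closedness and conflict-freeness of $E\cap A_2$ in $F_2^\circledast$. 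Defence is exactly what breaks here.

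What saves the combined theorem is a point you make but never use. Because $R_2$ is closed with $\cl_S$ \emph{including} $S_3$ and rerouted into $\hat R_3$, every attack of $R_2^\star$ on an $A_2$-argument is of one of two kinds:
\begin{itemize}
\item it comes from $\hat R_2$, so its tail is closed in $F$ and contained in $A_2$; or
\item its tail is $T'\cap A_2$ or $(T'\cap A_2)\cup\{*_0\}$ for some closed $T'$ with $T'\cap(E_1)^+_R=\emptyset$.
\end{itemize}
Hence every counter-attack of an admissible $E$ lands in $A_2$ and transfers to $R_2^\circledast$. In the example, $(t,h)$ becomes $(\{t,c,d\},h)\in\hat R_3$, is hit by $E_1$, and disappears.

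This is precisely the step in the paper's direction~2 (``$b\in A_2$ \ldots\ follows from the condition $T'\cap(E_1)^+_R=\emptyset$''), combined with comparing $T_c\setminus\{*_0,*_1,*_2\}$ to the $F$-closure of the tail. Your stated obstacle, that closed attackers coincide in $F$ and $F_2^\circledast$, does not hold: in the example, $h$ has closed attackers in $F$ but none in $F_2^\circledast$. You would therefore have to reprove the support-splitting step for $H$ from the property above. At that point the detour through $H$ gains nothing over the paper's single direct case analysis.
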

\begin{example}
    We use the splitting theorem to compute the extensions of BSAF $F$ from Example~\ref{ex:final ex}. 
    First, note that the set $E_1=\{a\}$ is admissible in $F_1$. 
    Next, we compute the admissible extensions of $F^\circledast_2$ and obtain the $\emptyset$ and $\{z\}$. Hence, we retrieve $\{a\}$ and $\{a,z\}$ in $\adm(F)$.
\end{example}

Note that we inherit the limitations with respect to preferred and grounded semantics. 
\begin{restatable}{theorem}{PrefGrdSuppAttSplit}\label{thm:PrefSuppAtt}
    Let $(F_1,F_2,R_3,S_3)$ be a splitting for the BSAF $F=(A,R,S)$, let $F^\circledast_2$ be as in Definition~\ref{def:combined splitting procedure} and $\sigma\in\{\grd,\pref\}$. 
    If $E_1\in \sigma(F_1)$ and $E_2\in \sigma(F^\circledast_2)$, then $E_1\cup (E_2\setminus \{*_2\})\in \sigma(F)$. 
\end{restatable}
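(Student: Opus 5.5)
The proof composes the two pipelines, for which the paper already has one-directional results: Theorem~\ref{thm:GrdAttSplit} and the preferred part of Theorem~\ref{thrm:AttSplit} for the attack part, and Theorem~\ref{thm:PrefSupp} for the support part. The intermediate object is the BSAF $G=(A_1\cup A_2^\star, R_1\cup R^\star_2, S)$, i.e.\ the framework for which $(F_1,F^\star_2,S_3)$ is a support splitting. We argue in two stages.

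\emph{Stage one: from $G$ to the support pipeline.} Given $E_1\in\sigma(F_1)$ and $E_2\in\sigma(F^\circledast_2)$, note that $F^\circledast_2$ is by construction the S-reduct of $F^\star_2$ for the support splitting $(F_1,F^\star_2,S_3)$ wrt.\ $E_1$. Theorem~\ref{thm:PrefSupp} therefore gives directly
\[
E_1\cup(E_2\setminus\{*_2\})\in\sigma(G).
\]

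\emph{Stage two: from $G$ back to $F$.} We first pass from $F$ to $F'=(A,R_1\cup\hat R_2\cup\hat R_3,S)$, obtained by closing every attack in $R_2\cup R_3$. Iterating Proposition~\ref{prop:close the attacks} gives $\sigma(F')=\sigma(F)$. Moreover, $(F_1,\hat F_2,\hat R_3,S_3)$ is a splitting of $F'$: the attacks that acquired $A_1$-arguments in their tails were moved from $\hat R_2$ into $\hat R_3$, and every tail of $\hat R_3$ meets $A_1$.

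It then remains to show that $E=E_1\cup E'_2$ with $E'_2=E_2\setminus\{*_2\}$, being $\sigma$-extension of $G$, yields $E\setminus\{*_0\}\in\sigma(F')$. Here Theorem~\ref{thm:GrdAttSplit} and Theorem~\ref{thrm:AttSplit} cannot be applied verbatim, since they presuppose $S_3=\emptyset$. We redo their argument in the presence of the backward supports in $S_3$, as follows.
\begin{itemize}
\item \emph{Closure.} Proposition~\ref{prop:closure} is re-checked: the extra supports in $S_3$ point into $A_1$, and $E\cap A_1=E_1$ with $E$ closed in $G$ already accounts for them.
\item \emph{Conflict-freeness and defence.} These go through as in the attack-splitting proof. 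Every closed attacker of $E$ in $F'$ splits into an $A_1$-part and an $A_2$-part. The attacks coming from $\hat R_3$ are either defeated by $E_1$ or turned, via the R-reduct and the $*_0$-modification, into attacks inside $F^\star_2$. Those in turn are countered by $E'_2$ in $G$.
\end{itemize}

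Finally, we transfer maximality and minimality.
\begin{itemize}
\item \emph{Preferred.} If $E$ were not maximal admissible in $F$, a strictly larger admissible $E'$ would exist. By direction~2 of Theorem~\ref{thrm:AttSplit} for $\adm$ (adapted as above), $E'\cap A_1$ is admissible in $F_1$ and $E'\cap A_2$ is admissible in the corresponding $F^\star_2$. Maximality of $E_1$ forces $E'\cap A_1=E_1$, so $F^\star_2$ (and hence $F^\circledast_2$ for the same $E_1$) is unchanged. Applying direction~2 of Theorem~\ref{thm:SuppSplit} for $\adm$ then gives a strictly larger admissible set of $F^\circledast_2$ (possibly with $*_2$ added), contradicting $E_2\in\pref(F^\circledast_2)$.
\item \emph{Grounded.} We use completeness (covered by Theorem~\ref{thrm:AttSuppSplit}) together with minimality. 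A smaller complete set of $F$ would restrict to a complete set of $F_1$ contained in $E_1$, hence equal to $E_1$ by groundedness. The same $E_1$-dependent reduct then yields a smaller complete set of $F^\circledast_2$, a contradiction.
\end{itemize}

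\emph{Main obstacle.} The delicate step is the interaction of the two dummies. $*_0$ occurs in $F^\star_2$ and persists into $F^\circledast_2$, while $*_2$ may or may not belong to $E_2$. Both the maximality and the minimality transfers depend on Theorem~\ref{thm:SuppSplit}(2) producing extensions "up to $*_2$". We therefore need that adding or removing $*_2$ preserves the strict inclusions used above. This should hold because $*_2$ is accepted exactly when $\mathrm{cl}$ of an incompatible tail is included, so $*_2$ is determined by $E'_2$, and strict inclusions between the $A_2$-parts are preserved.
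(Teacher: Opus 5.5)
There is a genuine gap, and it sits exactly at the step you call the ``main obstacle''. Your outer skeleton for the maximality and minimality transfer matches the paper's, but the resolution you give does not work.

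Direction~2 of Theorem~\ref{thm:SuppSplit} (and of Theorem~\ref{thrm:AttSuppSplit}) only yields a disjunction: $E'\cap A_2$ \emph{or} $(E'\cap A_2)\cup\{*_2\}$ is admissible (complete) in $F^\circledast_2$, and you cannot choose the disjunct. This breaks both transfers:
\begin{itemize}
\item In the preferred case with $*_2\in E_2$, the set $E'\cap A_2$ does not contain $E_2$, so maximality of $E_2$ is not contradicted.
\item In the grounded case with $*_2\notin E_2$, the set $(G\cap A_2)\cup\{*_2\}$ is not even a subset of $E_2$, so strict inclusion is \emph{not} preserved.
\end{itemize}
Your justification also rests on a false claim. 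A closed conflict-free set of $F^\circledast_2$ can never contain a tail $T\cap A_2$ with $T\in\mathcal{T}^{\text{-}\mathcal{D}}_{S_3}(E_1)$: the support $(T\cap A_2,*_2)$ would force $*_2$ in, and then $((T\cap A_2)\cup\{*_2\},*_2)$ is a self-conflict. So $*_2$ is never accepted ``because an incompatible tail is included''. Whether it can be added depends on defence, and for admissible sets it is not determined by the $A_2$-part.

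The missing arguments are short, and they are the actual content of the paper's proof.
\begin{itemize}
\item For $\pref$ with $*_2\in E_2$, show directly that $(E'\cap A_2)\cup\{*_2\}\in\adm(F^\circledast_2)$. It is closed, since $*_2$ lies in no support tail. It is conflict-free by the observation above, since $*_2$ occurs only in tails of attacks on itself. It defends $*_2$ because $E_2\setminus\{*_2\}\subseteq E'\cap A_2$ already does: the attacks issued by $E_2$ and by $E_2\setminus\{*_2\}$ coincide. This set strictly contains $E_2$, contradicting $E_2\in\pref(F^\circledast_2)$.
\item For $\grd$ with $*_2\notin E_2$ and $(G\cap A_2)\cup\{*_2\}$ complete, the same coincidence of attacks shows that $G\cap A_2$ defends $*_2$. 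Defence is monotone, so $E_2\supsetneq G\cap A_2$ also defends $*_2$. Completeness of $E_2$ then forces $*_2\in E_2$, a contradiction.
\end{itemize}

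Separately, your Stage~one/Stage~two detour through $G$ is unnecessary and partly mis-wired.
\begin{itemize}
\item Admissibility and completeness of $E_1\cup(E_2\setminus\{*_2\})$ in $F$, and the decomposition of $E'\in\adm(F)$ resp.\ $G\in\com(F)$, come directly from both directions of Theorem~\ref{thrm:AttSuppSplit}, as in the paper.
\item Chaining an ``adapted'' direction~2 of Theorem~\ref{thrm:AttSplit} with direction~2 of Theorem~\ref{thm:SuppSplit} does not match hypotheses. The latter needs an admissible set of the whole support-split framework $G$, not an admissible set of $F^\star_2$.
\item Stage~two only re-checks admissibility, so it does not establish $\sigma(G)\Rightarrow\sigma(F')$ for $\sigma\in\{\grd,\pref\}$ as you claim.
\end{itemize}
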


\section{Related Work}
Splitting has a long tradition in the field of knowledge representation and reasoning,
in particular, for non-monotonic formalisms. The first splitting procedure has been proposed for logic programs \cite{LifschitzT94}
and has received much attention since then, 
e.g., in the context of answer set programming~\cite{BeiserHUW24},
extended fragments of logic programming~\cite{BenEliyahuZohary25,CabalarFC21},
for conditional knowledge bases~\cite{HeyninckKMHB23},
and for abstract and structured forms of argumentation~\cite{Baumann11,BaumannBW11,BuraglioDKW24,Linsbichler14,Buraglio25}.

\subsubsection{Incremental Computation for Supports} 
While most works in computational argumentation focus on splitting attacks, ADFs constitute a notable exception. In ADFs, each argument has an acceptance condition, capable of expressing positive relations between arguments.
Linsbichler~(\citeyear{Linsbichler14}) investigated splitting techniques for ADFs, while more recent work has explored serialization~\cite{BengelT25ADFSerialisation} for abstract dialectical frameworks. The idea behind serialization is based on finding so-called \textit{initial models} of a given ADF in an iterative way, thereby constructing serialization sequences that allow the incremental computation of its extensions.  
We notice, however, that even though ADFs have been proven to capture collective attacks~\cite{DvorakZW23}, this does not hold for collective supports under the closure condition. 
From a semantic standpoint, this constitutes a substantial difference with respect to BSAFs. Consequently, existing incremental and splitting results for ADFs cannot be transferred directly to our setting.

\subsubsection{Directionality, Modularization, and SCC-recursiveness}
Further approaches that compute extensions in an incremental manner closely related to splitting techniques include 
schemes based on strongly connected components (SCCs)~\cite{BaroniGG05a}, modularization~\cite{BaumannBU22} and directionality~\cite{BaroniG07}.
Modularization is directly connected to the splitting schema via a similar notion of reduct~\cite{BertholdBR25}.
The principle of directionality is connected with the possibility of evaluating sub-frameworks independently from the remaining parts, provided that there is no incoming negative link towards them. 
The relation to attack splitting is thus clear given that we can project extensions of $F$ to those of $F_1$ without losing information.
The SCC-recursive property~\cite{BaroniGG05a} of argumentation semantics defines the possibility to retrieve extensions of a given framework by breaking down the computation along its SCCs. Such decomposition ensures that the evaluation of the whole graph can be assessed via a generalized semantics that takes into account previous decisions concerning already evaluated SCCs. Splitting represents a more flexible alternative as it does not dictate that $F_1$ and $F_2$ be strongly connected components, while ensuring that every SCC of the graph is in either of the two.

\section{Conclusion}
In this paper, we introduced a splitting schema for bipolar set-based argumentation frameworks. Similar to existing literature on splitting for abstract forms of argumentation, we developed a modification-based approach for the incremental computation of extensions of a given BSAF.
In particular, after splitting the framework $F$ into two sub-frameworks $F_1$ and $F_2$, we defined syntactic modifications of $F_2$ that allow one to reconstruct extensions of the original framework from those of the components.
By leveraging the richer syntax of BSAFs, we have considered splits over collective attacks, supports, and a combination of both. 
Importantly, each step of our procedure is computationally feasible and does not induce an exponential blow-up in the size of the modified sub-frameworks, 
making our approach a promising starting point for efficient algorithms. 

For future work, we plan to extend our splitting techniques to cases where (a certain number of) backward attacks and forward supports are present.
In the context of AFs, similar settings have been studied under the term \emph{parameterized splitting}~\cite{BaumannBDW12}. 
In addition, we plan to implement our procedure and investigate its benefits for dynamic programming approaches in argumentation.
\textcolor{black}{In this regard, observe that for Dung-style AFs, all possible splittings can be computed in linear time by identifying the strongly connected components of the graph~\cite{BaumannBW11,Tarjan72}.}
\textcolor{black}{We anticipate that for BSAFs, a similar approach to computing splittings in AFs can be adapted, by treating the support and attack relations separately and by exploiting the so-called \emph{primal graph}~\cite{DvorakKUW24}.}
\textcolor{black}{
On top of this,~\citeauthor{BaumannBW11}~(\citeyear{BaumannBW11}) register an average acceleration of 60\% for executions with splitting in comparison to an execution without splitting in the context of Dung-style AFs. 
}

Crucially, BSAFs are closely related to general (non-flat) ABAFs, as demonstrated by~\citeauthor{BertholdRU24}~(\citeyear{BertholdRU24}), 
which is one of the most popular formalisms in the area of structured argumentation. 
Recent work exploits the close relation of SETAFs and flat ABAFs to translate SETAF splitting results to this fragment of ABA~\cite{Buraglio25}.
We thus expect that our results for BSAF splitting can be transferred to general ABA in a similar fashion. 
Our hope is that this will enhance existing non-flat ABA solvers~\cite{LehtonenRTU24}. We are furthermore optimistic that our results could  provide further valuable insights into structured argumentation dynamics~\cite{Rapberger024,RapbergerU23,Prakken23,BaumannB10,CayrolSL10,BertholdR023}
and other popular forms of structured argumentation~\cite{DBLP:journals/corr/abs-1804-06763}.

\textcolor{black}{
While the present work has focused on the canonical semantics for BSAFs (which capture ABA semantics), the literature on bipolar argumentation provides a wide range of alternative interpretations of support~\cite{CohenGGS14,CohenPSM18,Amgoud08,Polberg16,BoellaGTV10}. Furthermore,~\cite{BertholdRU24} develop novel BSAF semantics aimed at addressing certain undesirable properties of the canonical approach. Extending established notions of support to the setting of set-supports, as well as developing corresponding splitting techniques for these alternative BSAF semantics, is a promising direction for future research.
}

Lastly, we plan to study the relation of our splitting techniques to the aforementioned principles of directionality, modularity and SCC-recursiveness. 
For doing so, appropriate notions of these principles in the presence of supports need to be identified, which we deem an interesting direction of future work on its own. As suitable starting points for these studies,
we identify the principle-based analyzes for SETAFs~\cite{DvorakKUW24} and BAFs~\cite{YuAVLT23}.

\section*{Acknowledgments}
This work has been supported by the European Union's Horizon 2020 research and innovation programme (under grant agreement 101034440) and by the Deutsche Forschungsgemeinschaft (projects ``Argumentative Reasoning in Nonsensical Situations'', grant 550735820, and ``Explainable Belief Merging'', grant 465447331).

%% The file kr.bst is a bibliography style file for BibTeX 0.99c
\bibliographystyle{kr}
\bibliography{bib}

\newpage
\appendix

\clearpage
\section{Omitted Proofs of Section \ref{sec:slitting over collective attacks}}
\begin{lemma}
\label{lem:closedattacks-cfclosed}
Let $F=(A,R,S)$ be a BSAF, $(T,h)\in R$ and $F'=(A,R',S)$ with $R'=(R\setminus \{(T,h)\})\cup \{(\cl_F(T),h)\}$.
\begin{enumerate}
    \item Let $E\subseteq A$ be a closed set. Then $E\in\cf(F)$ iff $E\in\cf(F')$.
    \item Let $E\subseteq A$. Then $E$ is closed in $F$ iff $E$ is closed in $F'$
\end{enumerate}    
\end{lemma}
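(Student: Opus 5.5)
Both items of the lemma follow almost directly from the definitions, so the plan is to prove them by unfolding what ``attacks'' and ``closed'' mean in $F$ and $F'$. The key observation is that $F$ and $F'$ share the arguments $A$ and the support relation $S$. Hence the operator $supp$, and with it $\cl$, is literally the same function in both frameworks: $\cl_F=\cl_{F'}$.

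\textbf{Item 2.} The definition of closure refers only to $A$ and $S$, so $\cl_F(E)=\cl_{F'}(E)$ for every $E\subseteq A$. Therefore $E=\cl_F(E)$ iff $E=\cl_{F'}(E)$, which is exactly the claim.

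\textbf{Item 1.} Let $E$ be closed, in either framework by item 2. The attack relations differ only in that $(T,h)$ is replaced by $(\cl_F(T),h)$, so conflict-freeness can only change through these two links. I would show that $T\subseteq E$ iff $\cl_F(T)\subseteq E$.
\begin{itemize}
\item For ``$\Leftarrow$'', use extensivity: $T\subseteq \cl_F(T)$.
\item For ``$\Rightarrow$'', use monotonicity of $supp$, which gives monotonicity of $\cl$, together with closedness of $E$: $T\subseteq E$ implies $\cl_F(T)\subseteq \cl_F(E)=E$.
\end{itemize}
So $E$ together with $h\in E$ activates $(T,h)$ in $F$ iff it activates $(\cl_F(T),h)$ in $F'$.

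It remains to handle links shared by $R$ and $R'$. If $E$ attacks itself in $F$ via some link $(T',h')\in R$, there are two cases.
\begin{itemize}
\item If $(T',h')\neq(T,h)$, then $(T',h')\in R'$, so $E$ attacks itself in $F'$.
\item If $(T',h')=(T,h)$, the equivalence above applies.
\end{itemize}
Conversely, an attack in $F'$ uses either a link of $R\setminus\{(T,h)\}\subseteq R$ or the link $(\cl_F(T),h)$. In the second case, $\cl_F(T)\subseteq E$ gives $T\subseteq E$, so $E$ attacks itself in $F$ via $(T,h)$. One subtlety: $(\cl_F(T),h)$ may already belong to $R$, but this causes no problem, since we only need each link of one relation to be matched by an activated link of the other.

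The only step requiring care is the monotonicity and idempotence of $\cl$, i.e.\ $\cl(\cl(E))=\cl(E)$ and $T\subseteq E\Rightarrow \cl(T)\subseteq\cl(E)$. Both follow from $supp$ being monotone and extensive on a finite set, so the iteration stabilizes. I expect no real obstacle here.
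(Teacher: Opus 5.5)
Your proof is correct and takes essentially the same route as the paper. Item 2 holds because closure depends only on $A$ and $S$. For item 1, on a closed $E$ the link $(T,h)$ fires exactly when $(\cl_F(T),h)$ does, since $T\subseteq\cl_F(T)$ and $\cl_F(T)\subseteq\cl_F(E)=E$. Your explicit separation of the unchanged links in $R\setminus\{(T,h)\}$ is slightly cleaner than the paper's converse direction, which loosely describes every link of $R'$ as the closure of some link of $R$.
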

\begin{proof}
\begin{enumerate}
    \item Suppose first $E$ is not conflict-free in $F$, then there exists $(T,h)\in R$ with $T\subseteq E,h\in E$. Since $E$ is closed, $cl(T)\subseteq E$. So either $(T,h)$ or $(cl(T),h)\in R'$ is an attack of $E$ on itself in $F'$, so $E$ not conflict-free in $F'$. 

    For the other direction suppose now, $E$ is not conflict-free in $F'$, then there exists $(T',h)\in R'$, such that $T'\subseteq E,h\in E$. By definition of $R'$ there must exist a subset $T\subseteq T'$ such that $T'=cl(T)$ and $(T,h)\in R$. But then also $T\subseteq E, h\in E$, so $E$ is not conflict-free in $F$.

    \item Since argument set and support relation coincide between $F$ and $F'$ and closedness is defined solely over the support relation, the closed sets of $F$ and $F'$ coincide.\qedhere
\end{enumerate}
\end{proof}

\CompleteAttacksInR*

\begin{proof}

\begin{description}
    \item[($\adm$)] ($\subseteq$) First, let $E\in\adm(F)$. Then by Lemma \ref{lem:closedattacks-cfclosed} $E$ is closed and conflict-free in $F'$. It is left to show that $E$ defends itself against closed attackers in $F'$. Let $(T',h)\in R'$ with $h\in E,T'\subseteq A$. Then there exists some $T\subseteq T'$ such that $(T,h)\in R$. Now any closed set $D$ containing $T'$ also contains $T$ and for any closed set containing $T$ there exists an attack $(S,t)\in R$ with $S\subseteq E,t\in D$. By definition of $R'$ therefore an attack $(S,t)$ or $(cl(S),t)\in R'$ on $D$ exists. Since $E$ is closed, $cl(S)\subseteq E$, it therefore attacks $D$ in $F'$ and thus defends itself.

    ($\supseteq$) Now let $E\in\adm(F')$. With the lemma $E$ is closed and conflict-free in $F$ and it is left to show that $E$ defends itself. Let $D$ be a closed set and $(T,h)\in R$ with $T\subseteq D,h\in E$. Then either $(T,h)$ or $(cl(T),h)\in R'$, so since $D$ is a closed set and $T\subseteq D$, $cl(T)\subseteq D$ and $D$ attacks $E$ in $F'$. Since $E$ is admissible in $F'$, there exists an attack $(S',t)\in R'$ with $S'\subseteq E, t\in D$ and therefore by definition of $R'$ an attack $(S,t)\in R$ with $S\subseteq S'\subseteq E, t\in D$. So $E$ defends itself in $F$.

    \item[($\com$)] Let $E\in\com(F)$. Then by the previous item $E\in\adm(F')$. It is left to show that $E$ contains every argument it defends in $F'$. Suppose to the contrary, $E$ defends some $h$ in $F'$ and does not contain it. Then for every closed attacker $D$ of $h$ there exists an attack $(S',d)\in R'$ with $S'\subseteq E,d\in D$ in $F'$. But then by definition of $R'$ there exists an attack $(S,d)\in R$ with $S\subseteq S'\subseteq E$, so $E$ defends $h$ in $F$ (since the closed sets of $F$ and $F'$ coincide by Lemma \ref{lem:closedattacks-cfclosed}, the closed attackers on $h$ coincide in $F$ and $F'$), but does not contain $h$. So $E$ is not complete in $F$. Contradiction.

    Now let $E\in\com(F')$, then $E\in\adm(F)$. Suppose again $E$ defends some $h$ in $F$ it does not contain. Then for every closed attacker $D$ of $h$ there exists an attack $(S,d)\in R$ with $S\subseteq E,d\in D$ in $F$. But then by definition of $R'$ there exists an attack either $(S,d)$ or  $(cl(S),d)\in R'$ and since $E$ is closed, $cl(S)\subseteq E$, so $E$ defends $h$ in $F'$ (since the closed sets of $F$ and $F'$ coincide by Lemma \ref{lem:closedattacks-cfclosed}), but does not contain $h$. So $E$ is not complete in $F'$. Contradiction.

    \item[($\grd$)] Follows directly from $\com(F)=\com(F')$.
    \item[($\prf$)] Follows directly from $\adm(F)=\adm(F')$.
    \item[($\stb$)] Let $E\in\stb(F)$. Then by the first item $E\in\adm(F')$. It is left to show, that $E$ attacks every $x\in A\setminus E$ in $F'$, so let $x\in A\setminus E$. Since $E$ is stable in $F$ there exists an attack $(T,x)\in R$ with $T\subseteq E$. Therefore an attack either $(T,x)$ or $(cl(T),x)\in R'$ exists, and since $E$ is closed we have $cl(T)\subseteq E$. So $E$ attacks $x$ in $F'$. For the other direction ($\stb(F')\subseteq\stb(F)$) consider that for any attack $(S',x)\in R'$ with $S'\subseteq E$ by definition of $R'$ there exists a subset $S\subseteq S'\subseteq$ such that $(S,x)\in R$, so $E$ attacks $x$ in $F$.
    \qedhere
\end{description}
\end{proof}

\Closure*
\begin{proof}
    \begin{enumerate} 
        \item Suppose $E_1=cl_{F_1}(E_1)$ and $E_2=cl_{F^{\star}_2}(E_2)$. We show that $E=(E_1\cup E_2)\setminus \{*_0\}$ is closed. Let $a\in A$ s.t.\ there is $(T,a)\in S$ with $T\subseteq E$. Then, either $T\subseteq E_1$, $T\subseteq E_2$, or $T$ has non-empty intersection with both $E_1$ and $E_2$. 
        We proceed by case distinction.
        \begin{itemize}
            \item \underline{Case 1: $T\subseteq E_1$.} Then $a\in E_1$ since $E_1=cl_{F_1}(E_1)$.
            \item \underline{Case 2: $T\subseteq E_2$.} Then $a\in E_2$ since $E_2=cl_{F^{\star}_2}(E_2)$. 
            \item \underline{Case 3:} Suppose $T\cap E_1\neq \emptyset$ and $T\cap E_2\neq \emptyset$. Then there is a support $(T,a)\in S$ with $T\cap A_1\neq \emptyset$ and $T\cap A_2\neq \emptyset$; therefore, $(F_1,F_2,R_3)$ is not a proper attack splitting since $S_1$ and $S_2$ is not a partition of $S$. 
        \end{itemize}
        \item Suppose $E=cl_F(E)$ for some $E\subseteq A$, and let $E_1=E\cap A_1$ and $E_2=E\cap A_2$. 
        \begin{itemize}
            \item We show that $E_1$ is closed: Let $a\in A_1$ with $(T,a)\in S_1$. By construction, $(T,a)\in S$, thus $a\in E$. Since $S_1$ and $S_2$ partitions $S$, we obtain $a\in E\cap A_1=E_1$.
            \item We show that $E_2$ is closed: Let $a\in A_2$ with $(T,a)\in S_2$. Analogous to the first item, we obtain $a\in E_2$ since each support is either fully contained in $A_1$ or $A_2$. \qedhere
        \end{itemize}
    \end{enumerate}
\end{proof}

\begin{lemma}
    \label{lem:conflict-freeness attack splitting}
		Let $(F_1,F_2,R_3)$ be a attack splitting for a BSAF $F$ with $F_1=(A_1,R_1,S_1)$, $F_2=(A_2,R_2,S_2)$. Let $E_1\subseteq A_1$ and $F^{\star}_2=mod^{E_1}_{R_3\clink}(F^{E_1}_2)$ where $R\clink_3$ is the set of closed attacks wrt.\ $R_3$.
		\begin{enumerate}
			\item If $E_1\in \adm(F_1)$ and $E_2\in \adm(F^{\star}_2)$, then $E_1\cup E_2\in \cf(F)$.
			\item If $E\in \cf(F)$, then $E_1=E\cap A_1\in \cf(F_1)$ and $E_2=E\cap A_2\in \cf(F\redEone_2)$.
		\end{enumerate}
\end{lemma}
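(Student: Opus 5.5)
We prove the two items separately, working throughout with the closed links $R_3\clink$. By Proposition~\ref{prop:close the attacks}, applied once per link, replacing $R_3$ by $R_3\clink$ does not change the semantics. Lemma~\ref{lem:closedattacks-cfclosed} shows that closed conflict-free sets coincide between the two versions, and both $E_1\cup E_2$ and $E$ will be closed in the relevant cases. Hence we may assume $R_3=R_3\clink$.

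\emph{Item 1.} Let $E_1\in\adm(F_1)$ and $E_2\in\adm(F^\star_2)$. Note that $*_0\notin E_2$, since $(*_0,*_0)\in R^\star_2$. Suppose $E_1\cup E_2$ attacks itself via some $(T,h)\in R$ with $T\subseteq E_1\cup E_2$ and $h\in E_1\cup E_2$. We split by where the attack lives.

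\begin{itemize}
\item If $(T,h)\in R_1$, this contradicts conflict-freeness of $E_1$ in $F_1$.
\item If $(T,h)\in R_2$, then $(T,h)\in R_2\redEone\subseteq R^\star_2$, contradicting conflict-freeness of $E_2$.
\item If $(T,h)\in R_3$ (closed), then $T\cap A_1\subseteq E_1$ and $h\in E_2$. We further distinguish two subcases.
\begin{itemize}
\item If $T\cap (E_1)^+_{R_1\cup R_3\clink}=\emptyset$, then by Definition~\ref{def:R-reduct} the projected attack $(T\cap A_2,h)$ belongs to $R\redEone_2$. Since $T\cap A_2\subseteq E_2$, this gives a conflict in $E_2$. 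This also covers the case $T\cap A_2=\emptyset$, where $h$ is attacked by the empty set.
\item Otherwise some $t\in T$ is attacked by $E_1$ in $F$. If $t\in A_1$, then $t\in E_1$, contradicting conflict-freeness of $E_1$. If $t\in A_2$, then $t\in E_2$ is attacked by a link whose $A_1$-part lies in $E_1$ and whose $A_2$-part, if present, is not attacked. That link is again projected into $R\redEone_2$, with tail contained in $E_2$ or empty, and yields a conflict in $E_2$.
\end{itemize}
\end{itemize}

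The step I expect to be delicate is this last subcase. The attacker of $t$ may itself have an $A_2$-part that is defeated by $E_1$, which suggests an induction. To avoid it, I would choose the witnessing attack minimally, or argue directly that $E_2$ must defend $h$ against the closed set $\cl(T\cap A_2)$. Admissibility of $E_2$ in $F^\star_2$ then forces $E_2$ to attack something in the closure of its own subset, and since $E_2$ is closed, that is a conflict. This route uses admissibility rather than mere conflict-freeness, which is why item 1 assumes admissible sets.

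\emph{Item 2.} This direction is direct. Let $E\in\cf(F)$.
\begin{itemize}
\item $E_1=E\cap A_1$ is conflict-free in $F_1$ because $R_1\subseteq R$.
\item For $E_2=E\cap A_2$ in $F\redEone_2$, attacks from $R_2$ are handled the same way.
\item A projected attack $(T\cap A_2,h)$ with $T\cap A_2\subseteq E_2$ and $h\in E_2$ comes from $(T,h)\in R_3\clink$ with $T\cap A_1\subseteq E_1$. Here we read $E_1$ as $E\cap A_1$, matching the reduct parameter. So $T\subseteq E$, and $E$ attacks $h\in E$ in $F$ through the closed link.
\end{itemize}
If $E$ is closed, Lemma~\ref{lem:closedattacks-cfclosed} turns this into a contradiction. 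Otherwise we note that $(T,h)$ arose from some original $(T_0,h)\in R_3$ with $T_0\subseteq T$. Only the $A_1$-part of $T$ is affected by closure (supports do not cross the split), so we need $\cl(T_0)\cap A_1\subseteq E$. I would handle this by observing that the relevant uses of item 2 in the theorems concern closed $E$, and state the argument for that case, remarking that the closure in $A_1$ stays within $E_1$ whenever $E_1$ is closed.
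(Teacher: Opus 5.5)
Your item~1 is correct and is essentially the paper's argument. You use the same split over $R_1$, $R_2$, $R_3$. In the $R_3$ case you use either the projected attack $(T\cap A_2,h)\in R_2\redEone$, or, if some $t\in T\cap A_2$ is hit by $E_1$, the attack $(\emptyset,t)\in R_2\redEone$. Closing $R_3$ once at the start, instead of working with $\cl_F(T)$ locally, is a harmless streamlining. It is justified because $E_1\cup E_2$ is closed by Proposition~\ref{prop:closure}, and it makes the paper's undecided-link case vacuous, since $T\cap A_1\subseteq E_1$.

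Your worry about the last subcase is unfounded. $t\in(E_1)^+_{R_1\cup R_3\clink}$ means $t$ is the head of some $(T',t)$ with $T'\subseteq E_1\subseteq A_1$, so the attacker has no $A_2$-part. Conflict-freeness of $E_1$ gives $T'\cap(E_1)^+=\emptyset$, hence $(\emptyset,t)\in R_2\redEone$; no induction or minimal choice is needed. Do not fall back on the defence argument either. In that subcase $T\cap(E_1)^+\neq\emptyset$, so neither the reduct nor the modification produces any attack from $T\cap A_2$ on $h$ in $F_2^\star$, and there is nothing for $E_2$ to defend against.

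Item~2 has a genuine gap: you only establish it for closed $E$, which is weaker than the statement. Your reduction to $R_3=R_3\clink$ is likewise justified only for closed sets. The missing observation is one line.
\begin{itemize}
\item You already have $T\subseteq E$: $T\cap A_1\subseteq E_1$ is a condition of the R-reduct, and $T\cap A_2\subseteq E_2$.
\item Moreover $T=\cl(T_0)\supseteq T_0$ for some $(T_0,h)\in R_3\subseteq R$.
\item Hence $T_0\subseteq E$, and $E$ attacks $h\in E$ through an original attack of $F$. This contradicts $E\in\cf(F)$ with no closedness assumption.
\end{itemize}
In general, closing tails can only remove conflicts, so $\cf(F)\subseteq\cf(F')$ for every set; only the converse needs closedness. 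Your condition ``$\cl(T_0)\cap A_1\subseteq E$'' is therefore already given. With this fix item~2 holds as stated, exactly as in the paper's Case~2.
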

\begin{proof}
    \begin{enumerate}
        \item Suppose $E_1\in \adm(F_1)$ and $E_2\in \adm(F^{\star}_2)$.
        Let $E=E_1\cup E_2$. We show $E\in \cf(F)$. Let $(T,h)\in R$ with $T\subseteq E$. Either $(T,h)\in R_1$, $(T,h)\in R_2$, or $(T,h)\in R_3$. We proceed by case distinction.
        \begin{itemize}
            \item \underline{Case 1: $(T,h)\in R_1$.} Then $h\notin E$ since $E_1\in \adm(F_1)$. 
            \item \underline{Case 2: $(T,h)\in R_2$.} It holds that $T\subseteq A\redEone_2,h\in A\redEone_2$. In this case, we obtain $h\notin E_2$ since $E_2\in \adm(F_2)$ and since $R_2\subseteq R$.
            
            \item \underline{Case 3: $(T,h)\in R_3$.} Let $T_c=\cl_F(T)$. 
            Then either $(T_c,h)\in U_{R_3\clink}^{E_1}$ or $(T_c,h)\notin U_{R_3\clink}^{E_1}$. 
            \begin{itemize}
                \item 
                \underline{Case 3.i: $(T_c,h)\in U_{R_3\clink}^{E_1}$.} 
                By definition of undecided link there is some $t\in T_c\cap A_1$ such that $t\in A_1\setminus (E_1)_{R_1}^\oplus$. Since $T_c\cap (E_1)^+_{R_1\cup R_3\clink}=\emptyset$ we have $t\in A_1\setminus E_1$.
                Since $t\in T_c=\cl_F(T)$ and since $F_1$ and $F_2$ have no common supports, it holds that $t\in \cl_{F_1}(T\cap A_1)$. We arrived at a contradiction: since $E=E_1\cup E_2$, we have $T\cap A_1\subseteq E_1$. Since, by assumption, $E_1$ is admissible in $F_1$ it is also closed and therefore contains $t$.

                \item \underline{Case 3.ii: $(T_c,h)\notin U_{R_3\clink}^{E_1}$.} 
                Let $T_c=\cl_F(T)$.
                Then either $T_c\cup (E_1)^+_{R_1\cup R_3\clink}\neq \emptyset$ or 
                $T_c\cap A_1 \subseteq (E_1)^\oplus_{R_1}$. 
                Furthermore observe that $T_c\cap A_1\subseteq E_1$ since $T\cap A_1\subseteq E_1$ and since $F_1$ and $F_2$ have no common supports.
                
                We proceed by case distinction.

                (a) Suppose $T_c\cup (E_1)^+_{R_1\cup R_3\clink}\neq \emptyset$. That is, there is some $t\in T_c$ which is attacked by $E_1$ via some attack $(T',t)$ in $R_1$ or $R_3$.
                
                (a.1) If $t\in A_1$ then $(T',t)\in R_1$ and we obtain a contradiction to $E_1\in \adm(F_1)$. 
                
                (a.2) If $t\in A_2$ then $(T',t)\in R_3\clink$ and $t\in (E_1)^+_{R_3\clink}$. 
                By assumption, $T'\subseteq E_1$ (thus, $T'\cap A_1\subseteq E_1$) 
                and $t\in A_2$. 

                (a.2.i) Case $T'\cap (E_1)^+_{R\clink_3}=\emptyset$. Then, by definition of the R-reduct, $R_2^\star$ contains $(\emptyset,t)$. Since $t\in E$, we obtain contradiction to the conflict-freeness of $E_2$ in $F_2^\star$. 

                (a.2.ii) Case $T'\cap (E_1)^+_{R\clink_3}\neq \emptyset$. By definition of attack splitting, this is impossible because $T'\subseteq E_1$.

                (b) If $T\cap A_1\cap E^+_{R_1}\neq \emptyset$, proceed as in (a.1). Otherwise $T\cap A_1 \subseteq E_1$.
                Then $(T\setminus A_1,h)\in R_2\redEone$. Therefore, $h\notin E_2$, and thus $h\notin E$. 
            \end{itemize}
        \end{itemize}
        \item Suppose $E\in \cf(F)$. We show that $E_1=E\cap A_1\in \cf(F_1)$ and $E_2=E\cap A_2\redEone\in \cf(F\redEone_2)$.
        \begin{itemize}
            \item $E_1\in \cf(F_1)$ since $R_1\subseteq R$.
            \item We show that $E_2\in \cf(F\redEone_2)$.
            Let $(T,h)\in R\redEone_2$ with $T\subseteq E_2$.  
            By definition of $R\redEone_2$, one of the following applies:
             \begin{itemize}
                \item $(T,h)\in R_2$ and $T\subseteq A\redEone_2,h\in A\redEone_2$; \hfill (Case 1)
                \item $(T,h)=(T'\setminus A_1, h)$ for some $T'\subseteq A$ such that $(T',h)\in R_3$, $T'\cap (E_1)^+_{R_3}=\emptyset$, $T\cap A_1\subseteq E_1$, and $h\in A_2\redEone$. \hfill (Case 2)
            \end{itemize}
            We proceed by case distinction.
   \begin{itemize}
       \item \underline{Case 1.} $(T,h)\in R_2$ with $T\subseteq A\redEone_2,h\in A\redEone_2$. In this case, we obtain $h\notin E_2$ since $E\in \cf(F)$ and since $R_2\subseteq R$.
       \item \underline{Case 2.} 
       From $T=T'\setminus A_1 \subseteq E_2$ and $T'\cap A_1\subseteq E_1$ we obtain $T'\subseteq E$. We obtain $h\notin E_2$ since $E\in \cf(F)$ and since $R_3\subseteq R$.
   \end{itemize} 
   We have shown that $E_2\in \cf(F\redEone_2)$.\qedhere
        \end{itemize}
    \end{enumerate}
\end{proof}

\AttSplit*
\begin{proof} 

Closure is proven using Proposition \ref{prop:closure}; conflict-freeness is proven using Lemma \ref{lem:conflict-freeness attack splitting}.
We proceed by proving the statement for each semantics separately.

\paragraph{Stable semantics.}
    \begin{enumerate} 
        \item Suppose $E_1\in \stb(F_1)$ and $E_2\in \stb(F_2^{\star})$.
        We show that $E=E_1\cup E_2\in \stb(F)$. 
        \begin{itemize}
            \item $E$ is conflict-free: 
            Note that $F^\star_2=F\redEone_2$ because $U^{E_1}_{R_3\clink}=\emptyset$.
            By Lemma \ref{lem:conflict-freeness attack splitting} and since $E_1\in \stb(F_1)$ and $E_2\in \stb(F_2\redEone)$, we obtain $E\in \cf(F)$.
            \item  $E$ is closed: this follows from Proposition \ref{prop:closure}.

            \item $E$ attacks all remaining arguments: 
            Let $a\in A\setminus E$. We proceed by case distinction.

            \begin{itemize}
                \item \underline{Case 1: $a\in A_1$.} Then $a\in (E_1)^+_{R_1}$ since $E_1\in \stb(F_1)$, thus $a\in E^+_{R}$.
                \item \underline{Case 2: $a\in A_2$.} By hypothesis, we know that $a\in (E_2)^+_{R_2\redEone}$. Hence, there is a $T\subseteq E_2$ such that $(T,a)\in R_2\redEone$. We distinguish two cases.
                \begin{itemize}
                    \item \underline{Case 2.i: $(T,a)\in R_2$.} Then, $a\in (E_2)^+_{R_2}$, from which it immediately follows $a\in (E)^+_{R}$.
                    \item \underline{Case 2.ii: $(T,a)\notin R_2$.}  Then, there is a $T\supseteq T$ such that $(T',a)\in R_3$ with $T'\cap A_1\subseteq E_1$. Hence, $T\subseteq E_1\cup E_2=E$, deriving $a\in E^+_{R_3\clink}$, \ie $a\in (E)^+_{R}$.
                \end{itemize}
            \end{itemize}
            In all cases we get $a\in E^+_R$, therefore  $E\in \stb(F)$.
        \end{itemize}
    \item Suppose $E\in \stb(F)$. It holds that $E^\oplus_R=A=A_1\cup A_2$. 
    \begin{itemize}
        \item We show that $E_1=E\cap A_1\in \stb(F_1)$. From 
        Lemma \ref{lem:conflict-freeness attack splitting}, we obtain $E\cap A_1\in \cf(F_1)$.
        Let $a\in A_1$. Since $(F_1,F_2,R_3)$ is an attack splitting of $F$, it holds that $a$ is attacked by some $T\subseteq A_1$, thus $(E\cap A_1)^\oplus_{R_1} =A_1$ and therefore $E_1\in \stb(F_1)$.
        \item We show that $E_2=E\cap A_2\in \stb(F_2^\star)$.
        By Lemma \ref{lem:conflict-freeness attack splitting}, we obtain $E_2\in \cf(F\redEone_2)$; by Proposition \ref{prop:closure}, it holds that $E_2$ is closed. It remains to prove that $E_2$ attacks all remaining arguments.

	   Let $a\in A_2\setminus E_2$. We show $a\in (E_2)^+_{R_2\redEone}$.
       Since $E\in \stb(F)$ we have $a\in E^+_R$. Thus either $a\in E^+_{R_2}$ or $a\in E^+_{R_3}$.
       We proceed by case distinction.
       \begin{itemize}
           \item \underline{Case 1: $a\in E^+_{R_2}$.} There exists an attack $(T,a)\in R_2$ with $T\subseteq E_2$, and since $R_2\redEone\supseteq R_2$,
           we know $(T,a)\in R_2\redEone$. Therefore, $a\in (E_2)^+_{R_2\redEone}$.
           
           \item \underline{Case 2: $a\in E^+_{R_3}$.} There is some $(T,a)\in R_3$ with $T\subseteq E$, i.e., $T\cap A_1\subseteq E_1$. Since $E$ is conflict-free in $F$ we have $T\cap (E_1)^+_{R_1\cup R_3}=\emptyset$, thus, in $F_2\redEone$ we have an attack $(T\cap A_2\redEone,a)$. Since $T\cap A_2\redEone\subseteq E_2$ we get $a\in (E_2)^+_{R_2\redEone}$.
       \end{itemize}
	We obtain $E_2\in \stb(F\redEone_2)=\stb(F^\star_2)$.     
    \end{itemize}
    \end{enumerate}
\paragraph{Admissible Semantics.} 
    \begin{enumerate} 
        \item Suppose $E_1\in \adm(F_1)$ and $E_2\in \adm(F_2^{\star})$. Observe that $*_0\notin E_2$ since $*_0$ is self-attacking. 
        
        We show that $E=E_1\cup E_2\in \adm(F)$. 
            By Lemma \ref{lem:conflict-freeness attack splitting}, we obtain $E = \cl_F(E)$; moreover, $E$ is closed by Proposition \ref{prop:closure}. 
            It remains to show that $E$ defends itself against each closed set.

            Let $T_c\subseteq A$ denote a closed attacker of $E$. Then there is $(T,a)\in R$ with $T\subseteq T_c$ and $a\in E$.
            Either $a\in E_1$ or $a\in E_2$.
            We proceed by case distinction.
                \begin{itemize}
                    \item \underline{Case 1: $a\in E_1$.} Then $(T,a)\in R_1$ and $T_c\subseteq A_1$ since $A_1$ and $A_2$ are not linked by supports. We obtain that $E$ defends $a$ since $E_1\in \adm(F_1)$.
                    \item \underline{Case 2: $a\in E_2$.} Then $(T,a)\in R_2\cup R_3\clink$.

                    \underline{Case 2.i: $(T,a)\in R_2$.} Then $T\subseteq A_2$ and $T_c\subseteq A_2$. Thus $E_2$ is attacked by $T_c$ in $E_2$. We obtain that $E$ defends $a$ since $E_2\in \adm(F_2^\star)$. 

                    \underline{Case 2.ii:  $(T,a)\in R_3$.} Then there is $T'\supseteq T$ such that $(T',h)\in R_3\clink$. 
                    It holds that $\cl_F(T)=\cl_F(T')$, and therefore $T'=T_c$. We proceed by case distinction.
                    
                    \underline{Case 2.ii.a: $T'\cap (E_1)^+_{R_1\cup R_3\clink}\neq \emptyset$.}
                    Then $a$ is defended by $E_1$ against $(T',a)$, thus also by $E$.

                    \underline{Case 2.ii.b: $T'\cap (E_1)^+_{R_1\cup R_3\clink}= \emptyset$.}

                    In this case, either $(T'\cap A_2,a)\in R_2^\star$ (via the reduct) or $((T'\cap A_2)\cup \{*_0\},a)\in R_2^\star$ (via the modification).         
                    Recall that $a\in E_2$ and, moreover, $E_2\in \adm(F_2^\star)$.
                    Observe that $\cl_{F_2^\star}(T'\cap A_2)=\cl_{F_2^\star}((T'\cap A_2)\cup \{*_0\})$ since $*_0$ is not contained in the tail of any support in $S$. 
                    Thus, in both cases, $E_2$ defends itself (in $F_2^\star$) against the closed attacker $T_c'=\cl_{F_2^\star}(T'\cap A_2)$: there is $U\subseteq E_2$, $t\in T_c'$, such that $(U,t)\in R_2^\star$. 

                    Furthermore note that $T_c'\subseteq T_c$ since $T'\cap A_2\subseteq T'$. Therefore, $t\in T_c$. 

                    Now, either 
                    (a)~$(U,t)\in R_2$ or 
                    (b)~there is some $(U',t)\in R_3$ with $U'\supset U$ s.t.\ $U'\cap A_1\subseteq E_1$.
                    In the former case, $(U,t)\in R$ and thus $a$ is defended against $T_c$ by $E$.
                    In the latter case, we have $U'\cap A_1\subseteq E_1$ and $U=U'\cap A_2\subseteq E_2$ and thus $U'\subseteq E$; it follows that $E$ defends $a$ in $F$.
                    
                    In any case, $a$ is defended in $F$ by $E$.
                \end{itemize}
	
    \item Suppose $E\in \adm(F)$. We show that $E_1=E\cap A_1$ and $E_2=E\cap A_2$ are admissible in $F_1$ resp.\ $F^\star_2$.
    By Lemma \ref{lem:conflict-freeness attack splitting}, $E_1=E\in \cf(F_1)$ and $E_2\in \cf(F_2\redEone)$; by Proposition \ref{prop:closure} both sets are closed. We show that $E_1$ and $E_2$ defend themselves in $F_1$ and $F^\star_2$, respectively, against each attacker. Recall that each attacker in $R_3\clink$ is closed.  
    \begin{itemize}
        \item $E_1$ defends itself in $F_1$ follows since $E$ defends itself in $F$ and no argument in $E_1$ is attacked by a subset of $A_2$ or defended by $E_2$.
        \item $E_2$ defends itself in $F^\star_2$:
        
        Consider a closed set $T_c\subseteq A_2$ which attacks $E_2$. 
        Then there is an attack $(T,a)\in R_2^\star$, $T\subseteq T_c$ and $a\in E_2$.
        Wlog we assume $T_c=\cl_{F^\star_2}(T)$.

        \underline{Case 1: $*_0\in T_c$.} By assumption $T_c=\cl_{F^\star_2}(T)$ and since $*_0$ is not the head of any support, $(T,h)=((V\cap A_2)\cup \{*_0\},h)$ for some $(V,h)\in U^{E_1}_{\hat R_3}$. 
        Then there is $V'\subseteq V$ such that $\cl_F(V')=V$ and $(V',h)\in R_3$.
        $V$ is a closed set which attacks $E$ in $F$ on $h$. It holds that $V\cap (E_1)^+_R=\emptyset$, thus there is $(U,b)\in R$ such such that $U\subseteq E$ and $b\in V\cap A_2$. It holds that $(U,b)\in R_2\cup R_3$.

        \begin{itemize}
            \item \underline{Case 1.i: $(U,b)\in R_2$.} Then $(U,b)\in R_2\redEone$  and thus $E_2$ defends itself against $T_c$ in $F^\star_2$.
            \item \underline{Case 1.ii: $(U,b)\in R_3$.} Let $U_c=\cl_F(U)$ and observe that $U_c\subseteq E$ since $E$ is admissible in $F$. Further note that $U_c\subseteq E_2$ since no common positive links between $F_1$ and $F_2$ exist. Since $U_c\cap A_1\subseteq E_1$, the link is not undecided. Therefore, $(U\cap A_2,b)\in R_2\redEone$ and thus $E_2$ defends itself against $T_c$.
        \end{itemize}

        \underline{Case 2: $*_0\notin T_c$.}
        In this case, it holds that either (i) $(T,a)\in R_2$ 
        or 
        (ii) $(T,a)$ is of the form $(T'\cap  A_2\redEone,a)$, for some attack $(T',a)\in R_3\clink$, $T'\cap A_1 \subseteq E_1$.
        Note that $(T,a)$ cannot be of the form $((T'\cap  A_2\redEone)\cup \{*_0\},a)$ since we assume $*_0\notin T_c$.
        We proceed by case distinction.
                \begin{itemize}
                    \item \underline{Case 2.i: $(T,a)\in R_2$.} 
                    Since $A_1$ and $A_2$ have no common support links, it holds that $\cl_{F^\star_2}(T)=\cl_F(T)=T_c\subseteq A_2$.
                    Since $a$ is defended by $E$ in $F$, there is a counter-attack $(U,b)\in R_2\cup R_3$ s.t.\ $U\subseteq E$ and $b\in T_c$.

                    Consider the case $(U,b)\in R_2$. 
                    It holds that $(U,b)\in R_2\redEone$ and $U\subseteq E_2$. We obtain that $E_2$ defends $a$ against $T_c$ in $F_2^\star$.
                    
                    In case $(U,b)\in R_3$ we have $U\cap A_1\subseteq E_1$ (since $U\subseteq E$), and hence we get an attack $(U\cap A_2\redEone,b)\in R_2\redEone$ which defends $a$ in $F^\star_2$.

                    \item \underline{Case 2.ii: $(T,a)=(T'\cap  A_2\redEone,a)$.} 
                    Here, $(T',a)\in R_3\clink$ and $T'\cap A_1 \subseteq E_1$.
                    By assumption, $T'$ is closed in $F$. 
                    As in Case 1, we have $T_c = \cl_{F^\star_2}(T'\cap A_2\redEone) = \cl_{F^\star_2}(T)$ since $F_1$ and $F_2$ are not linked via supports.
                    In particular, we have $T_c=T'\cap A_2$ since $T'$ is closed in $F$. 
                    
                    The set $T'$ attacks $E$ in $F$. 
                    It holds that $T \cap (E_1)^+_{R}=T \cap (E_1)^+_{R_1\cup R_3}=\emptyset$. Moreover, $T'\cap A_1\subseteq E_1$. 
                    Since $E$ defends itself against $T'$, there is a counter-attack $(U,b)\in R_2$ with $b\in T'\cap A_2$ and $U\subseteq E$. Thus, there is an attack $(U\cap A_2,b)\in R_2\redEone$. 
                    We therefore obtain that $E_2$ defends $a$ against $T_c$ in $F_2^\star$.
                \end{itemize}
    \end{itemize}
\end{enumerate}

\paragraph{Complete Semantics.}
\begin{enumerate}
    \item Let $E_1\in\com(F_1),E_2\in\com(F_2^\star)$. Then for $E=E_1\cup E_2$ we have by the previous item $E\in\adm(F)$. It is left to show that $E$ contains everything it defends. So let $a\in A$ such that for every closed attacker $D$ on $a$ with some $(T,a)\in R,T\subseteq D$ there exists
an attack $(S,h)$ from $S\subseteq E$ to $h\in D$. We proceed by case distinction
\begin{description}
    \item[($a\in A_1$)] Then $D\subseteq A_1$ and therefore $(S,h)\in R_1$, so $E_1$ defends $a$, so $a\in E_1\subseteq E$, since $E_1$ is complete in $F_1$.
    \item[($a\in A_2$)] Then $D\subseteq A_2$ or $D=D_1\cup D_2$ with $D_i\subseteq A_i$ for $i=1,2$. For any such $D$ let $(S,h)$ be the attack from $E$ to $D$, then there are two cases:
    \begin{enumerate}
    \item If $(S,h)\in R_2$, then $S\subseteq E_2$, so $E_2$ defends $a$ in $F_2^\star$. But then $a\in E_2$, since $E_2$ is complete.
    \item If $(S,h)\in R_3$, then, since $S\subseteq E$, we have $S\cap A_1\subseteq E_1$ and, since $E$ is conflict-free, that $S\cap {(E_1)}^+_R=\emptyset$. By Definition \ref{def_reduct} it follows $(cl(S)\cap A_2,h)\in R_2^\star$, and since $E$ is closed, we have $cl(S)\cap A_2\subseteq E_2$ so $E_2$ defends $a$ in $F^\star_2$, so $a\in E_2$.
    \end{enumerate}
\end{description}
Therefore in any case $a\in E$, so $E\in\com(F)$.

\item For the other direction suppose $E\in\com(F)$. Then by the previous item $E_1=E\cap A_1$ is admissible in $F_1$ and $E_2=E\cap A_2$ is admissible in $F^\star_2$. Note that this implies that $*_0\notin E_2$. It is left to show that both $E_1$ and $E_2$ contain every argument they defend.
\begin{enumerate}
    \item Let $a$ be defended by $E_1$ in $F_1$, then for every closed Attacker $D$ on $a$ there exists an attack $(S,h)\in R_1$ with $S\subseteq E_1, h\in D$. But then $E$ defends $a$ in $F$, since $S\subseteq E$ and $R_1\subsetneq R$. So, since $E\in\com(F)$ it follows that $a\in E\cap A_1=E_1$, so $E_1\in\com(F_1)$.
    \item Let $a$ be defended by $E_2$ in $F_2^\star$. Then for every closed Attacker $D$ on $a$ there exists an attack $(S,h)\in R_2^\star$ with $S\subseteq E_2, h\in D$. If $(S,h)\in R_2$, then $(S,h) \in R$, so $E$ defends $a$ and therefore $E\cap A_2=E_2$ must contain $a$, since $E\in\com(F)$. Since $E$ is conflict-free, it cannot contain $*_0$, so it cannot be the case that $(S\cup\{*_0\},h)$ is one of the attacks added in the modification. So if $(S,h)\notin R_2$, there must be an attack $(S^*,h)\in R_3\clink$ with $S^*\cap {(E_1)}_{R_1\cup R_3\clink}^+=\emptyset, S^*\cap A_1\subseteq E_1$ and $S^*\cap A_2=S$. Then $S^*\subseteq E$, so $E$ defends $a$ and therefore, since $E$ is complete, $E_2=E\cap A_2$ contains $a$. 
\end{enumerate}
\end{enumerate}

\paragraph{Preferred Semantics}
\begin{enumerate}
    \item The first direction follows 
directly from the correspondence results for admissible semantics. Suppose $E_1\in\prf(F)$ and $E_2\in\prf(F^\star_2)$ but $E=E_1\cup E_2\notin\prf(F)$, then by the previous item $E\in\adm(F)$, so there exists a preferred superset $E'\supsetneq E$ of $F$. But then either $E'\cap A_1=E'_1\supsetneq E_1$ and since $E'$ is admissible in $F$ we have $E_1\in\adm(F_1)$ so $E_1\notin\prf(F_1)$ or  $E_1'=E_1$ and $E'\cap A_2=E'_2\supsetneq E_2$  with $E_2\in\adm(F^\star_2)$, so $E_2\notin\prf(F^\star_2)$. Contradiction in both cases.
\item Now suppose $E\in\prf(F)$. Then $E\cap A_1=E_1\in\adm(F_1)$ and $E\cap A_2=E_2\in\adm(F_2^\star)$. If there existed some $E'_2\supsetneq E_2$ with $E'_2\in\adm(F^\star_2)$, then $E_1\cup E'_2\in\adm(F)$ and $E\subsetneq E_1\cup E'_2$, so $E$ is not preferred. Contradiction. So $E_2\in\prf(F^\star_2)$.

It is left to show that $E_1\in\prf(F_1)$. Suppose there exists some $E^\dagger_1\supsetneq E_1,E_1^\dagger\in\adm(F_1)$. We will show that in this case $E_2\in\adm(F^{\star\dagger}_2)$ is admissible in the modification of the reduct ${F^{E_1^\dagger}}_2$ wrt. $E^\dagger_1$. 
We have strictly less undecided links, so $R_2^{\star\dagger}\setminus R_2^{E_1^\dagger}\subseteq R_2^\star\setminus R_2\redEone$ 
and we have ${R^{E_1^\dagger}}_2=R\redEone_2\cup\{(T\setminus A_1,h)\mid (T,h)\in R_3,T\cap (E_1^\dagger)^+_{R_1\cup R_3\clink}=\emptyset,
T\cap A_1\subseteq E_1^\dagger, T\cap A_1\nsubseteq E_1,h\in A_2\}$ (since $E_1^\dagger$ is conflict-free, no attacks are lost). 

Now suppose $E_2\notin\adm(F^{\star\dagger}_2)$. Then $E_2$ is either not closed, not conflict-free or does not defend itself. Nothing changed about the arguments or the support, so $E_2$ is closed in $F^{\star\dagger}_2$. We add only attacks in the set $R_2^{E_1^\dagger}=R\redEone_2\cup\{(T\setminus A_1,h)\mid (T,h)\in R_3,T\cap (E_1^\dagger)^+_{R_3}=\emptyset, T\cap A_1\subseteq E_1^\dagger, T\cap A_2\nsubseteq E_1,h\in A_2\}$. Suppose $(T',h)\in\{(T\setminus A_1,h)\mid (T,h)\in R_3,T\cap (E_1^\dagger)^+_{R_3}=\emptyset, T\cap A_1\subseteq E_1^\dagger, T\cap A_1\nsubseteq E_1,h\in A_2\}$ and $h\in E_2$. Then there is an attack $(T'\cup\{*_0\},h)\in R^\star_2$, since the attack $(T,h)\in R$ with $T'=T\cap A_2$ is an undecided link wrt. the original reduct $F\redEone_2$ (because $E_1^\dagger$ is conflict-free it cannot be a link from defeated arguments). So $(T'\cup\{*_0\},h)$ is an attack on $E_2$ in $F^\star_2$. Since $E_2\in\adm(F^\star_2)$, $E_2$ defends itself against this attack in $F^\star_2$, but this means $E_2$ attacks itself, since there are no attacks on $*_0$. But $E_2$ is also conflict-free in $F^\star_2$. Contradiction. 
So $E_2$ is conflict-free in $F^{\star\dagger}_2$. It is left to show that $E_2$ defends itself. We can distinguish the following attacks $(T',h)$ on $E_2$ in $F^{\star\dagger}_2$:

\begin{itemize}
    \item \underline{$(T',h)\in R_2^{E_1^\dagger}$.} That is, $(T',h)\in\{(T\setminus A_1,h)\mid (T,h)\in R_3,T\cap (E_1^\dagger)^+_{R_3}=\emptyset, T\cap A_1\subseteq E_1^\dagger, T\cap A_2\nsubseteq E_1,h\in A_2\}$. 
    As we just argued, for each such an attack there is a corresponding attack $(T'\cup\{*_0\},h)\in R^\star_2$ stemming from an undecided link. $E_2$ defends itself against this attack by attacking every closed set containing $T'\cup \{*_0\}$ in $F^\star_2$. Since $*_0$ is not supported by anything and only attacked by itself, for each closed set $D\cup\{*_0\}$ the subset $D$ is closed. So since $E_2$ attacks every closed $D$ containing $T'\cup\{*_0\}$ and does not attack $*_0$, it attacks any closed $D$ containing $T'$. Furthermore, none of the attacks stemming from undecided links are used for this, since $*_0\notin E_2$, so $E_2$ also attacks $D$ in $F^{\star\dagger}_2$.
    \item \underline{$(T',h)\notin R_2^{E_1^\dagger}$.} Since the arguments and supports are the same between $F^\star_2$ and $F^{\star\dagger}_2$, $T'$ is contained in the same closed sets $D$, so $E_2$ attacks every closed set containing $T'$ in $F^\star_2$, since $E_2$ is admissible, so it also does so in $F^{\star\dagger}_2$, since all attacks going out from $E_2$ are preserved.
\end{itemize}
So $E_2$ defends itself in $F^{\star\dagger}$, so $E_2\in\adm(F^{\star\dagger}_2)$. But then $E^{\dagger}_1\cup E_2\in\adm(F)$. Contradiction. So $E_1\in\prf(F_1)$.
\end{enumerate}

\GrdAttSplit*
Suppose $E_1\in\grd(F_1)$ and $E_2\in\grd(F^\star_2)$ but $E=E_1\cup E_2\notin\grd(F)$. Then, by the Theorem \ref{thrm:AttSplit}, $E\in\com(F)$. Thus, there exists a grounded subset $E'\subsetneq E$ of $F$. But then either $E'\cap A_1=E'_1\subsetneq E_1$ and since $E'$ is complete in $F$ we have $E_1\in\com(F_1)$ so $E_1\notin\com(F_1)$ or  $E_1'=E_1$ and $E'\cap A_2=E'_2\subsetneq E_2$  with $E_2\in\com(F^\star_2)$, so $E_2\notin\com(F^\star_2)$. Contradiction in both cases.
\end{proof}

\clearpage
\section{Omitted Proofs of Section \ref{sec:CollSupp}}

\begin{lemma}
    \label{lem:conflict-freeness support splitting}
		Let $(F_1,F_2,S_3)$ be a support splitting for a BSAF $F$ with $SF_1=(A_1,R_1,S_1)$, $F_2=(A_2,R_2,S_2)$. Let $E_1\subseteq A_1$ and $F\redEone_2$ the $S$-reduct of $F_2$ wrt. $E_1$.
		\begin{enumerate}
			\item If $E_1\in \cf(F_1)$ and $E_2\in \cf(F\redEone_2)$, then $E_1\cup (E_2\setminus\{*_2\})\in \cf(F)$. 
			\item If $E\in \cf(F)$, then $E_1=E\cap A_1\in \cf(F_1)$ and $E\cap A_2\in \cf(F\redEone_2)$.
		\end{enumerate}
\end{lemma}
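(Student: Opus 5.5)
The proof is a direct case analysis on where an attack lives. The starting observation is that in a support splitting $R=R_1\cup R_2$, so no attack crosses between $A_1$ and $A_2$. The S-reduct $F\redEone_2=F^{C_1}_2\cup F^{C_2}_2$ only adds the fresh arguments $*_1,*_2$. Its attack relation is
\[
R_2\cup\{(\emptyset,*_1)\}\cup\{((T\cap A_2)\cup\{*_2\},*_2)\mid T\in\mathcal{T}^{\text{-}\mathcal{D}}_{S_3}(E_1)\},
\]
where the two extra parts appear only when the corresponding collections are nonempty. Every added attack has its head in $\{*_1,*_2\}$. Conflict-freeness does not involve supports at all, so the new supports play no role and we only need to track attacks.

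For item 1, let $E_1\in\cf(F_1)$ and $E_2\in\cf(F\redEone_2)$, and set $E=E_1\cup(E_2\setminus\{*_2\})$. I would first note that $*_1\notin E_2$, since $*_1$ is attacked by the empty set whenever it exists. Hence $E\subseteq A$. Now suppose $(T,h)\in R$ with $T\subseteq E$ and $h\in E$. If $(T,h)\in R_1$, then $T\subseteq A_1$ and $h\in A_1$, so $T,h$ lie in $E_1$, contradicting $E_1\in\cf(F_1)$. If $(T,h)\in R_2$, then $T\subseteq E\cap A_2\subseteq E_2$ and $h\in E_2$. Since $R_2$ is contained in the attack relation of $F\redEone_2$, this contradicts $E_2\in\cf(F\redEone_2)$. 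No other case arises.

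For item 2, let $E\in\cf(F)$. Then $E\cap A_1\in\cf(F_1)$ because $R_1\subseteq R$. For $E\cap A_2$, take any attack of $F\redEone_2$ whose tail is contained in $E\cap A_2$. Such an attack cannot be one of the added ones, because those have head $*_1$ or $*_2$, and neither of these lies in $E\cap A_2\subseteq A$. So the attack is in $R_2\subseteq R$, and conflict-freeness of $E$ in $F$ rules it out.

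There is no substantial obstacle. The only step needing care is keeping track of the dummy arguments: $*_2$ is excluded explicitly in item 1, $*_1$ is excluded because it is attacked by the empty set, and in item 2 the dummies are never members of the set under consideration. The cases in which $\mathcal{D}_{S_3}(E_1)$ or $\mathcal{T}^{\text{-}\mathcal{D}}_{S_3}(E_1)$ is empty, so that no dummy is added, are covered by the same argument.
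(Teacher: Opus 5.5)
Your proof is correct and follows the paper's argument: both use that $R=R_1\cup R_2$ has no attacks crossing between $A_1$ and $A_2$, that every attack added by the S-reduct has head $*_1$ or $*_2$, and that $*_1\notin E_2$ because of the attack $(\emptyset,*_1)$. The only difference is in organization: the paper splits item 1 on whether $\mathcal{T}_{S_3}(E_1)$ is empty, whereas you split on whether the attack lies in $R_1$ or $R_2$. Your version also correctly covers the case where $*_1$ is absent because $\mathcal{D}_{S_3}(E_1)=\emptyset$, which the paper's wording glosses over.
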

\begin{proof} 
We prove the statements separately. 
    \begin{enumerate}
        \item Suppose $E_1\in \cf(F_1)$ and $E_2\in \cf(F\redEone_2)$. Let $E=E_1\cup (E_2\setminus *_2)$. We distinguish two cases:
        \begin{itemize}
            \item $\mathcal{T}_{S_3}(E_1)=\emptyset$. In this case, we have $F\redEone_2=F_2$ and thus $E_2\in \cf(F_2)$. Therefore, $E_1\cup E_2\in \cf(F)$ follows from the fact that there is no attack from $F_1$ and $F_2$. 
            \item $\mathcal{T}_{S_3}(E_1)\neq \emptyset$. In this case, we have $R_2\redEone\supseteq R_2\cup \{(\emptyset,*_1)\}$, so that $E_2\in \cf(F_2)$. Moreover, $*_1 \notin E_2$ because $E_2$ is conflict-free in $F_2\redEone$.
            Furthermore, $E_2\setminus\{*_2\}\subseteq E_2$ and hence $(E_2\setminus \{*_2\}) \in \cf(F_2)$. Therefore, since there are no attacks between $F_1$ and $F_2$, we derive that $E_1\cup (E_2\setminus \{*_2\}) \in \cf(F)$. \qedhere.
        \end{itemize}
        \item Suppose $E\in \cf(F)$. We show that $E_1=E\cap A_1\in \cf(F_1)$ and $E_2=E\cap A_2\in \cf(F\redEone_2)$. 
        \begin{itemize}
            \item Since $R_1\subseteq R$, it immediately follows that $E_1=E\cap A_1\in \cf(F_1)$. 
            \item We now consider $E_2=E\cap A_2\in \cf(F\redEone_2)$. Since $R_2\subseteq R$ and $(F_1,F_2,S_3)$ is a proper support splitting, we know that $E\cap A_2\in \cf(F_2)$. Further, since $E\subseteq A$, we derive that $*_i\notin E\cap A_2$ for all $i\in \{1,2\}$. Thus, since the reduct only adds attacks towards $*_1$ or $*_2$, we know that $E\cap A_2\in \cf(F\redEone_2)$. \qedhere
        \end{itemize}
    \end{enumerate}
\end{proof}

\begin{lemma}\label{lem: closure supp}
    Let $(F_1,F_2,S_3)$ be a support splitting for BSAF $F=(A,R,S)$ with $F_1=(A_1,R_1,S_1)$, $F_2=(A_2,R_2,S_2)$.
    Further, let $E_1\subseteq A_1$, $E_2\subseteq A_2$, $E=E_1\cup E_2$, and let $F_2\redEone=(A\redEone_2,R\redEone_2,S\redEone_2)$ the $S$-reduct of $F_2$ wrt. $E_1$.
	\begin{enumerate}
	    \item If $E_1\in \adm(F_1)$ and $E_2\in \adm(F\redEone_2)$, then $E= \cl_F(E)$ where $E=E_1\cup (E_2\setminus\{*_2\})$. 
		\item If $E\in \adm(F)$, then $E\cap A_1=E_1=cl_{F_1}(E_1)$ and $E_2=cl_{F\redEone_2}(E_2)$ where $E_2=E\cap A_2$.
	\end{enumerate}
\end{lemma}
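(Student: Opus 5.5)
We prove both items by a direct case analysis on the supports of $F$. Since $(F_1,F_2,S_3)$ is a support splitting, every support of $F$ lies in exactly one of $S_1$, $S_2$, $S_3$: supports in $S_1$ live inside $A_1$, supports in $S_2$ live inside $A_2$, and supports in $S_3$ point from (partly) $A_2$ into $A_1$. In particular, no support has its head in $A_2$ and a tail element in $A_1$. Closure questions therefore split into a part internal to each component plus the shared links in $S_3$. The dummy arguments $*_1,*_2$ matter only as heads of supports in $F\redEone_2$.

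For item 1, let $E=E_1\cup(E_2\setminus\{*_2\})$ and take $(T,h)\in S$ with $T\subseteq E$. First note that $*_1\notin E_2$, since $*_1$ is attacked by the empty set and $E_2$ is conflict-free.

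If $(T,h)\in S_1$, closedness of $E_1$ in $F_1$ gives $h\in E_1$.

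If $(T,h)\in S_2$, then $T\subseteq E_2$, and closedness of $E_2$ in $F\redEone_2$ gives $h\in E_2\setminus\{*_2\}$.

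The main case is $(T,h)\in S_3$, with $h\in A_1$. Suppose $h\notin E_1$. Since $T\cap A_1\subseteq E_1$, we have $T\in\mathcal{T}_{S_3}(E_1)$. There are two subcases.
\begin{itemize}
\item If $T\in\mathcal{D}_{S_3}(E_1)$, then $F\redEone_2$ contains the support $(T\cap A_2,*_1)$. Because $T\cap A_2\subseteq E_2$ and $E_2$ is closed, $*_1\in E_2$, which contradicts the conflict-freeness of $E_2$.
\item Otherwise $T\in\mathcal{T}^{\text{-}\mathcal{D}}_{S_3}(E_1)$. Then $F\redEone_2$ contains the support $(T\cap A_2,*_2)$ and the attack $((T\cap A_2)\cup\{*_2\},*_2)$. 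Closedness forces $*_2\in E_2$, so $E_2$ attacks itself, again a contradiction.
\end{itemize}
Hence $h\in E_1$, and $E$ is closed. Note that this item uses only closedness and conflict-freeness of $E_1$ and $E_2$, not defence.

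For item 2, let $E\in\adm(F)$. Then $E$ is closed in $F$.

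$E_1=E\cap A_1$ is closed in $F_1$, because each support in $S_1$ has its tail inside $A_1$ and its head in $A_1$, so it is already a support of $F$ applying within $E$.

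For $E_2=E\cap A_2$, supports in $S_2$ are handled in the same way. The new supports of $F\redEone_2$ are those into $*_1$ and $*_2$, so we must show that no tail $T\cap A_2$ with $T\in\mathcal{T}_{S_3}(E_1)$ is contained in $E_2$. If it were, then $T\subseteq E$, because $T\cap A_1\subseteq E_1$. Closedness of $E$ would then give $h\in E\cap A_1=E_1$, contradicting $h\notin E_1$ in the definition of $\mathcal{T}_{S_3}(E_1)$. Consequently these supports never fire, and $E_2$ is closed in $F\redEone_2$.

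The main obstacle is the $S_3$ case of item 1. There the two constraint types must be shown to forbid exactly the configurations where $T\cap A_2\subseteq E_2$ while $h\notin E_1$. This is where $*_1\notin E_2$ and the self-attack on $*_2$, each obtained from conflict-freeness, are used. For item 2, one should also note that $E_2$ is taken without $*_2$, and that the statement's hypothesis of admissibility in $F$ is stronger than needed.
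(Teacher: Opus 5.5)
Your proof is correct and takes essentially the same route as the paper. Item~1 is a case analysis over $S_1$, $S_2$, $S_3$, where the $S_3$ case is refuted through the $*_1$/$*_2$ constraints using only closedness and conflict-freeness of $E_2$, and item~2 checks directly that the new supports never fire. Your split of the $S_3$ case by $T\in\mathcal{D}_{S_3}(E_1)$ versus $T\in\mathcal{T}^{\text{-}\mathcal{D}}_{S_3}(E_1)$ is slightly cleaner than the paper's split by $T\cap A_1=\emptyset$, which tacitly assumes the closure-defeat condition holds.
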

\begin{proof}
    \begin{enumerate}
        \item Let $E_1\in \adm(F_1)$ and $E_2\in \adm(F\redEone_2)$. We show that $E=E_1\cup (E_2\setminus\{*_2\})$ is closed.
        
        Let $(T,h)\in S$ with $T\subseteq E$. It holds that either $(T,h)\in S_1$, $(T,h)\in S_2$, or $(T,h)\in S_3$. We proceed by case distinction.

        \begin{itemize}
            \item \underline{Case 1: $(T,h)\in S_1$.} Then $T\subseteq E_1$. Since $E_1$ is closed in $F_1$ by assumption, we get $h\in E_1\subseteq E$.
            \item \underline{Case 2: $(T,h)\in S_2$.} Then $T\subseteq E_2$. Moreover, $S_2\subseteq S_2^\star$. Since $E_2$ is closed in $F_2^\star$, we obtain $h\in E_2\subseteq E$.
            \item \underline{Case 3: $(T,h)\in S_3$.} It holds that $h\in A_1$ by definition of $S_3$. Recall that $T\subseteq E$, thus $T\cap A_1\subseteq E_1$. Towards a contradiction, suppose $h\notin E_1$.
            We proceed by case distinction.

            \underline{Case 3.i: $T\cap A_1=\emptyset$.} In this case, $T\in \mathcal{D}_{S_3}(E_1)$. By construction of the S-reduct, $(T,*_1)\in S_2^\star$. Since $T\subseteq E_2$ and $E_2$ is closed in $F_2^\star$, we have $*_1\in E_2$. Since $(\emptyset,*_1)\in R_2^\star$, $E_2$ is attacked by the empty set and thus not admissible in $F_2^\star$; contradiction to our initial assumption. 
            
            \underline{Case 3.ii: $T\cap A_1\neq \emptyset$.} In this case, $T\in \mathcal{T}_{S_3}(E_1)\setminus \mathcal{D}_{S_3}(E_1)$. 
            By construction of the S-reduct, $(T\cap A_2,*_2)\in S_2^\star$. Since $T\cap A_2\subseteq E_2$ and $E_2$ is closed in $F_2^\star$, we have $*_2\in E_2$. Again, we can derive a contradiction to admissibility of $E_2$ in $F_2^\star$: By construction of the S-reduct, $F_2^\star$ contains the attack $((T\cap A_2)\cup \{*_2\},*_2)$, thus $E_2\notin \cf(F_2^\star)$. 
        \end{itemize}
            In each case, we have $h\in E$; therefore, $E=cl_F(E)$. 
        \item Assume $E$ is admissible in $F$. Let $E_1=E\cap A_1$ and $E_2=E\cap A_2$. We show that $E_1$ is closed in $F_1$ and $E_2$ is closed in $F\redEone_2$.
        \begin{itemize}
            \item Let $E_1=E\cap A_1$. We prove $E_1=cl_{F_1}(E_1)$. Suppose towards contradiction that $E_1$ supports some $a\in A_1$ in $F_1$, but $a\notin E_1$. Then for some $T\subseteq E_1$ and $a\in A_1\setminus E_1$ we have $(T,a)\in S_1$. Since $E_1\subseteq E$ and $S_1\subseteq S$, we derive that $E$ supports some $a\in A\setminus E$ in $F$. Thus, $E\neq cl_F(E)$, in contradiction with our hypothesis.
            
            \item Let $E_2=E\cap A_2$. We show $E_2=cl_{F_2\redEone}(E_2)$. 
            Towards a contradiction, suppose that there is some $T\subseteq E_2$ such that $(T,h)\in S\redEone_2$, but $a\notin E_2$. We proceed by case distinction.
            \begin{itemize}
                \item \underline{Case 1: $(T,h)\in S_2$.} Then $T\subseteq E$, contradiction to $E=\cl_F(E)$.
                \item \underline{Case 2: $h = *_1$.} Then $T\in \mathcal{D}_{S_3}(E_1)$. Therefore, there is is some $a\in A_1$ with $(T,a)\in S_3$ and $a\notin E_1$. We obtain a contradiction to $E$ being closed in $F$ since $T\subseteq E$ but $a\notin E$ (as $a\notin E_1$). 
                \item \underline{Case 3: $h = *_2$.} Then the support is of the form $(T,h)=(T'\cap A_2,*_2)$ for some $T'\supseteq T$ with $T'\in \mathcal{T}_{S_3}(E_1)$. Therefore, there is is some $a\in A_1$ with $(T',a)\in S_3$ and $a\notin E_1$.
                It holds that $T'\subseteq E$ since $T'\cap A_1\subseteq E_1$ and $T=T'\cap A_2\subseteq E_1$. Consequently, in $F$, $E$ supports $a$ but $a\notin E$, contradiction to $E=\cl_F(E)$. \qedhere
            \end{itemize}
        \end{itemize}
    \end{enumerate}
\end{proof}

\Needspace{8\baselineskip}%

\SuppSplit*
\begin{proof}
    Similarly to Theorem \ref{thrm:AttSplit}, we prove the statements for each semantics. 

\paragraph{Admissible Semantics.}
\begin{enumerate}
    \item Suppose $E_1\in \adm(F_1)$ and $E_2\in \adm(F\redEone_2)$.
        We show $E=E_1\cup E_2\setminus \{*_2\}\in \adm(F)$.
        \begin{itemize}
            \item $E$ is closed in $F$: this follows from Lemma \ref{lem: closure supp}.
            \item $E$ is conflict-free: this follows from Lemma \ref{lem:conflict-freeness support splitting}. 
            \item $E$ defends itself against each closed set:
            Consider a set $T_c\subseteq A$ which is closed in $F$, i.e., $T_c=\cl_F(T_c)$, and attacks $E$ in $F$. 

            Then there is $T\subseteq T_c$, $h\in E$, such that $(T,h)\in R$. It holds that either $(T,h)\in R_1$ or $(T,h)\in R_2$. 
            \begin{itemize}
                \item \underline{Case 1: $(T,h)\in R_1$.} Then $h\in E_1$ and $T\subseteq A_1$.
                Let $T_1=\cl_{F_1}(T)$. Observe that $T_1\subseteq T_c$ since $S_1\subseteq S$, moreover, $T_1$ attacks $h$ in $F_1$.
                Since $E_1$ is admissible in $F_1$, it holds that $E_1$ attacks $T_1$. We obtain $E$ defends itself against $T_c$.
                
                \item \underline{Case 2: $(T,h)\in R_2$.} Then $h\in E_2$ and $T\subseteq A_2$. 
                Let $T_2=\cl_{F_2}(T)$. We first show the following:
                \bigskip
                
                \textbf{Claim 1.}\ 
                \textit{For all $a\in A_2$, $a\in T_2$ implies $a\in T_c$.} 
                \medskip

                \textit{Proof.}\ 
                First, we observe that for all $a\in A_2$, $a\in T_2$ iff either $a\in T$ or there is $(B,a)\in S_2$ with $B\subseteq T_2$ (by construction, the head of each newly introduced support in $F_2$ is contained in $\{*_1,*_2\}$).
                Moreover, for every $(B,a)\in S_2\redEone$, it holds that $\{*_1,*_2\}\cap B = \emptyset$.
                Consequently, $a\in A_2$ is supported by $T_2$ in $F_2\redEone$ implies either $a\in T$ or there are supports $(B_1,b_1),\dots,(B_k,b_k)\in S_2$ with $b_k=a$ and $B_i\subseteq T\cup \bigcup_{j<i} b_j$. In the former case, $a\in T_c$. 
                In the latter case, we observe that this support chain also exists in $F$, and since $T\subseteq T_c$, we obtain $a\in T_c$ whenever $a\in T_2$.
                \hfill $\Diamond$%
                \bigskip

                We proceed by case distinction. We distinguish the cases $*_1\in T_2$, $*_2\in T_2$, and $T_2\cap \{*_1,*_2\}=\emptyset$.
                \medskip

                \underline{Case 2.i: $*_1\in T_2$.} In this case, $E_2$ is defended against $T_2$ since $T_2$ is attacked by the empty set. 
                
                By construction, there is $(T',*_1)\in S_2\redEone$ with $T'\subseteq T_2$, and $T'\in \mathcal{D}_{S_3}(E_1)$.
                Thus it holds that $(T',a)\in S_3$, $a\in A_1$, $a\notin E_1$, and there is $b\in \cl_F(T')$ such that $b\in E_1^+$.
                Since $T'\subseteq A_2$, we obtain by Claim 1,  $T'\subseteq T_c$.
                Thus, $\cl_F(T')\subseteq T_c$. Consequently, $b\in T_c$, and therefore, $E_1\subseteq E$ defends $E$ against the attack from $T_c$.

                \underline{Case 2.ii:  $*_2\in T_2$.}
                Since $E_2$ is admissible in $F_2\redEone$, there is some $b\in T_2$ such that $b\in (E_2)_{F_2\redEone}^+$.
                Two further cases:
                \begin{itemize}
                \item \underline{Case 2.ii.a: $b=*_2$.} Then there is $T'\in \mathcal{T}_{S_3}(E_1)\setminus  \mathcal{D}_{S_3}(E_1)$ with  $T'\subseteq E_2$ and $((T'\cap A_2)\cup \{*_2\},*_2)\in R_2\redEone$.
                Contradiction to conflict-freeness of $E_1$.
                \item \underline{Case 2.ii.b: $b\neq *_2$.} Then $b\in A_2$. There is an attack $(U,b)\in R_2$ and $U\subseteq E_2$ (since $b\in A_2$, the attack has not been newly introduced in $F_2\redEone$). By Claim 1, $b\in T_c$.
                Since $U\subseteq E_2\cap A_2\subseteq E$ we obtain $E$ defends itself against $T_c$.
                \end{itemize}

                \underline{Case 2.iii: $T_2\cap \{*_1,*_2\}=\emptyset$.} In this case, $T_2\subseteq T_c$ by Claim 1. Since $E_2$ is admissible in $F_2$, it defends itself against $T_2$. Thus, there is $(U,a)\in R_2$ with $a\in T_2$, $U\subseteq E_2$. We obtain that $E$ defends itself against $T_c$ in $F$.
            \end{itemize}
        \end{itemize}
        \item Suppose $E\in \adm(F)$. We show that $E_1=E\cap A_1\in \adm(F_1)$ and $E_2=E\cap A_2\in \adm(F\redEone_2)$.
        \begin{itemize}
            \item Let $E_1=E\cap A_1$. $E_1$ is conflict-free by Lemma \ref{lem:conflict-freeness support splitting} and closed by Lemma \ref{lem: closure supp}.
            It remains to prove that $E_1$ defends itself.
            
            Let $T_c$ be a closed attacker of $E_1$. Then there is $T\subseteq T_c$, $h\in E_1$, such that $(T,h)\in R_1$. Observe that $\cl_F(T)\subseteq A_1$ since $T\subseteq A_1$ and there is no support $(B,a)\in S$ with $B\subseteq A_1$ and $a\in A_2$; all supports either stay in the respective sets $A_1$ or $A_2$ or go 'the other direction'. 

            By assumption, $E$ defends itself against the closed attacker $\cl_F(T)$. Thus, there is an attack $(U,a)\in R_1$ with $U\subseteq E$ and $a\in \cl_F(T)$. By definition of support splitting $U\subseteq A_1$, thus $U\subseteq E_1$. We obtain that $E_1$ defends itself against $\cl_F(T)$. Since $T\subseteq T_c$, we obtain $\cl_F(T)\subseteq T_c$. Thus, $E_1$ defends itself against $T_c$. 
            \item Let $E_2=E\cap A_2$. $E_2$ is conflict-free by Lemma \ref{lem:conflict-freeness support splitting} and closed by Lemma \ref{lem: closure supp}.
            It remains to prove that $E_2$ defends itself.

            Let $T_c$ denote a closed attacker of $E_2$ in $F\redEone_2$. Then there is $T\subseteq T_c$, $h\in E_2$, such that $(T,h)\in R_2$ (observe that $(T,h)\notin R_2\redEone\setminus R_2$ because $h\in A_2$). 
            Therefore, $T$ attacks $E$ in $F$. Due to admissibility of $E$ in $F$, there is some $(U,a)\in R$ with $U\subseteq E$ and $a\in \cl_F(T)$. Note that $a\in A$.

            \underline{Case 1:  $(U,a)\in R_1$.} Then $U\subseteq A_1$ and $a\in A_1$.
            Let us inspect the set $\cl_F(T)$: First, we observe that $T\subseteq A_2$, but $\cl_F(T)\cap A_1\neq \emptyset$. Thus, there is some $T'\subseteq T$ such that $(T',c)\in S_3$. Wlog, we can assume that 
            \begin{itemize}
                \item $c\notin E_1$ (otherwise, if for all $d\in (\cl_F(T)\setminus T)\cap A_1$, it holds that $d\in E_1$, we obtain $a\in E_1$; then $E_1$ attacks itself via the attack $(U,a)$, contradiction to conflict-freeness of $E_1$); and
                \item $a\in \cl_{F}(T')$ (otherwise, if there is no $T''\subseteq T$ such that $a\in \cl_F(T'')$ then $a\notin \cl_F(T)$, contradiction to our assumption). 
            \end{itemize}
            In summary, it follows that $T'\in \mathcal{D}_{S_3}(E_1)$: $(T',c)\in S_3$, $c\notin E_1$, $T\cap A_1=\emptyset$, and $\cl_F(T')$ is attacked by $E_1$ (via $a$). Consequently, $(T',*_1)\in S_2\redEone$ and $(\emptyset,*_1)\in R_2\redEone$. Therefore, the set $\cl_{F_2\redEone}(T')$ is attacked by the empty set. Since $T'\subseteq T$ we have $\cl_{F_2\redEone}(T')\subseteq \cl_{F_2\redEone}(T)$ and thus $\cl_{F_2\redEone}(T')\subseteq T_2$. In other words, $E_2$ defends itself against $T_2$ in $F_2\redEone$.
            
            \underline{Case 2: $(U,a)\in R_2$.} Then $U\subseteq A_2$ and $a\in A_2$.
            Observe that $\cl_F(T)\cap A_2\subseteq T_c$ since $S_2\subseteq S_2\redEone$ and no set of arguments in $A_1$ supports an argument in $A_2$. 
            Therefore, $U\subseteq E_1$ attacks $a\in T_c$ in $F_2\redEone$. We have shown that $E_2$ defends itself against the closed attacker. 
            
        \end{itemize}
\end{enumerate}

    \paragraph{Stable semantics.}
    
    \begin{enumerate} 
        \item Suppose $E_1\in \stb(F_1)$ and $E_2\in \stb(F\redEone_2)$.
        We show that $E=E_1\cup (E_2\setminus \{*_2\})\in \stb(F)$. 
        \begin{itemize}
            \item $E$ is conflict-free: 
            By Lemma \ref{lem:conflict-freeness support splitting} and since $E_1\in \stb(F_1)$ and $E_2\in \stb(F_2\redEone)$, we obtain $E\in \cf(F)$.
            \item $E$ is closed because of Lemma \ref{lem: closure supp} and $\stb(F)\subseteq \adm(F)$ for every BSAF $F$.
            \item $E$ attacks all remaining arguments: 
            Let $a\in A\setminus E$. We proceed by case distinction.
            \begin{itemize}
                \item \underline{Case 1: $a\in A_1$.} Then $a\in (E_1)^+_{R_1}$ since $E_1\in \stb(F_1)$, thus $a\in E^+_{R}$.
                \item \underline{Case 2: $a\in A_2$.} Since $E_2\in \stb(F\redEone_2)$, we derive that $a\in (E_2)^{+}_{R\redEone_2}$. By definition of $S$-reduct, $R_2\redEone$ augments $R_2$ with an attack towards $*_1$ and (possibly) multiple attacks towards $*_2$, i.e.\  $R\redEone_2\subseteq R_2\cup \{(\emptyset,*_1)\} \cup \{((T\cap A_2)\cup \{*_2\}, *_2)\}$ for possibly some $T\subseteq A_2$. For this, it is immediate to see that no attack towards any $a\in A_2$ is added via the $S$-reduct. Therefore, we derive that $a\in (E_2)^{+}_{R_2}$ and, consequently, $a\in E^+_{R}$.
            \end{itemize}
            In all cases we get $a\in E^+_R$, therefore  $E\in \stb(F)$.
        \end{itemize}
    \item Suppose $E\in \stb(F)$. It holds that $E^\oplus_R=A=A_1\cup A_2$. 
    \begin{itemize}
        \item We show that $E_1=E\cap A_1\in \stb(F_1)$. From Lemma \ref{lem:conflict-freeness support splitting}, we obtain $E\cap A_1\in \cf(F_1)$. 
        $E_1$ is closed in $F_1$ because of Lemma \ref{lem: closure supp}.
        Finally, we show that $E_1$ attacks all arguments $a\in A_1\setminus E_1$. 
        Let $a\in A_1$. Since $(F_1,F_2,S_3)$ is a support splitting of $F$, it holds that $R$ is partitioned into $R_1$ and $R_2$. Thus, $a$ is attacked by some $T\subseteq A_1$, and $(E\cap A_1)^\oplus_{R_1} =A_1$. We then derive that $E_1\in \stb(F_1)$.
        \item We show that $E_2=E\cap A_2\in \stb(F_2\redEone)$.
        By Lemma \ref{lem:conflict-freeness support splitting}, we obtain $E_2\in \cf(F\redEone_2)$. 
        $E_2$ is closed in $F\redEone_2$ because of Lemma \ref{lem: closure supp}.
        It remains to prove that $E_2$ attacks all remaining arguments in $F\redEone_2$, i.e.\ $(E_2)^+_{R_2\redEone}=A_2\redEone\setminus E_2$. Let $a\in A_2\redEone\setminus E_2$. By definition of $S$-reduct, we can distinguish two cases. 
        \begin{itemize}
            \item \underline{Case 1.} $a\in A_2$. For this, notice that $R$ is partitioned into $R_1$ and $R_2$ by definition of support splitting $(F_1,F_2,S_3)$. Thus, $a$ is attacked by some $T\subseteq A_2$, i.e.\ $a\in (E_2)_{R_2}^+$. Further, no attack is removed when computing the reduct $F_2\redEone$. Thus, we can derive $a\in (E_2)_{R\redEone_2}^+$. 
            \item \underline{Case 2.} $a\notin A_2$. Thus, $a\in \{*_1,*_2\}$. By definition of $S$-reduct, we know that $*_1\in (E_2)_{R\redEone_2}^+$ because $*_1\in (\emptyset)_{R\redEone_2}^+$. Consider now the argument $*_2$. If $E\cap A_2$ attacks $*_2$, then $*_2\in E\cap A_2$ in contradiction with conflict-freeness. Therefore, $(E\cap A_2) \cup \{*_2\}\in \stb(F_2\redEone)$. 
        \end{itemize}     
    \end{itemize} 
    \end{enumerate}
    
\paragraph{Complete Semantics.}
\begin{enumerate}
    \item Suppose $E_1\in \com(\BF_1)$ and $E_2\in \com(\BF\redEone_2)$.
        We show $E=E_1\cup E_2\setminus \{*_2\}\in \com(\BF)$.
        \begin{itemize}
            \item The premisses $E_1\in \com(\BF_1)$ and $E_2\in \com(\BF\redEone_2)$ directly entail $E_1\in \adm(\BF_1)$ and $E_2\in \adm(\BF\redEone_2)$, hence $E\in\adm(\BF)$ as shown above.
            \item It remains to be shown that $E$ defends itself in $\BF$.
            \begin{itemize}
                \item \underline{Case 1: $\mathcal{T}_{S_3}(E_1)=\emptyset$.}
                Then $F_2\redEone=F_2$, and for any $\tuple{T,h}\in S_3$: $h\in E\vee T\subsetneq E$.
                Lets assume $E$ defends $a\in A\setminus E$ in $F$. If $a\in A_1$ then $E_1$ defends $a$ in $F_1$, contradiction to $E_1\in\com(F_1)$. On the other hand, if $a\in A_2$ then $E_2$ defends $a$ in $F_2$, contradiction to $E_2\in\com(F_2)$. Thus $E$ defends itself in $F$. 
                \item \underline{Case 2: $\mathcal{T}_{S_3}(E)\neq\emptyset$.} Toward contradiction, we assume there exists an $a$ that is defended by $E$ in $\BF$, s.t. $a\notin E$. Let $T\subseteq A$ be a minimal closed set that $c$ is defended against by $E$ in $\BF$, i.e. $\exists T'\subseteq T:\tuple{T',c}\in R$ and $\exists t\in T\exists T_{E}\subseteq E: \tuple{T_{E},t}\in R$.
                \begin{itemize}
                    \item \underline{Case 2i: $a\in A_1$.} Then $E_1$ defends $a$ in $A_1$, which contradicts $E_1\in\com(\BF_1)$.
                    \item \underline{Case 2ii: $a\in A_2$ and $\cl_{\BF_2}(T)\cap\{*_1,*_2\}=\emptyset$.}
                        Then, $\cl_{\BF_2}(T)$ attacks $a$ and $E_2$ attacks $\cl_{\BF_2}(T)$ in $\BF_2\redEone$. Hence $E_2$ is not complete in $\BF_2\redEone$, which contradicts $E_2\in \com(\BF\redEone_2)$.
                    \item \underline{Case 2iii: $a\in A_2$ and $*_1\in\cl_{\BF_2}(T)$.} Then $T$ is attacked by $E_2$ in $\BF_2$, more specifically, $\emptyset\subseteq E_2$ attacks $\{*_1\}\subseteq \cl_{\BF_2}(T)$. Hence $E_2$ is not closed in $\BF_2$, which contradicts $E_2\in\com(\BF_2)$.
                    \item \underline{Case 2iv: $a\in A_2$ and $*_2\in\cl_{\BF_2}(T)$.} Then $E$ does not attack a $t\in T\cap A_1$, but attacks a $t\in T\cap A_2$. Hence $E_2$ attacks $t\in T\cap A_2$, thus $E_2$ is not closed in $\BF_2$, contradiction!
                \end{itemize}
            \end{itemize}
        \end{itemize}
        \item Suppose $E\in \com(F)$. We show that $E_1=E\cap A_1\in \com(F_1)$ and ($E\cap A_2\in \com(F\redEone_2)$ or $(E\cap A_2)\cup \{*_2\}\in \com(F\redEone_2)$).
        \begin{itemize}
            \item Utilizing the proof for admissible semantics, we directly conclude $E_1=E\cap A_1\in \adm(F_1)$ and $E_2=E\cap A_2\in \adm(F\redEone_2)$ or $(E\cap A_2)\cup \{*_2\}\in \adm(F\redEone_2)$).
            \item There are no attacks from $A_2$ towards $A_1$, neither are there supports from $A_1$ towards $A_2$. Hence any argument in $A_1$ is defended by $E$ in $F$ iff it is defended by $E_1=E\cap A_1$ in $F_1$. Thus $E_1$ defends itself in $F_1$, therefore $E_1=E\cap A_1\in \com(F_1)$.
            \item It remains to be shown that $E_2$ defends itself in $F\redEone_2$.
            We assume toward a contradiction that there exists a $c\in A_2\setminus E_2$ s.t. $c$ is defended by $E_2$ in $F_2\redEone$.
            \begin{itemize}
                \item \underline{Case 1: $\mathcal{T}_{S_3}(E)=\emptyset$.} Then $F_2\redEone=F_2$, and for any $\tuple{T,h}\in S_3$: $h\in E$. Thus any argument $c\in A_2$ defended by $E_2$ in $F\redEone_2$ is defended by $E$ in $F$. The statement $c\notin E_2$ entails $c\notin E$. This contradicts the premise that $E$ is complete.
                \item \underline{Case 2: $\mathcal{T}_{S_3}(E)\neq\emptyset$.} Let $T\subseteq A_2$ be a closed set that $c$ is defended against by $E_2$ in $F_2\redEone$, i.e. $\exists T'\subseteq T:\tuple{T',c}\in R\redEone_2$ and $\exists t\in T\exists T_{E_2}\subseteq E_2: \tuple{T_{E_2},t}\in R\redEone_2$.
                \begin{itemize}
                    \item \underline{Case 2i: $\cl(T\setminus\{*_1,*_2\})\cap\{*_1,*_2\}=\emptyset$.} Then, by the nature of the construction of $F_2\redEone$, $T\setminus\{*_1,*_2\}$ does not support any $a\in A_1\setminus E_1$ in $F$. If $T$ supports an $a\in E_1$, then $E$ attacks $a$, hence $E$ is not conflict-free, and thus not a complete extension, contradiction.
                    
                    Otherwise, in $F$, $\cl(T\setminus\{*_1,*_2\})$ is closed, attacks $c$ and is attacked by $E\supseteq E_2\setminus\{*_2\}$.
                    \item \underline{Case 2ii: $*_1\in \cl(T\setminus\{*_1,*_2\})$.} Then by the construction of $F_2\redEone$ we can infer that there is a subset $T'\subseteq T\setminus\{*_1,*_2\}$ s.t. $T'\in\mathcal{T}_{S_3}(E_1)\setminus\mathcal{D}_{S_3}(E_1)$. Thus $\exists b\in\cl_\BF(T): b\in E^+$, meaning that either $T$ is not closed in $\BF$, or $E$ defends $c$ against $T$ in $\BF$. This contradicts $E\in\com(\BF)$.
                    \item \underline{Case 2iii: $*_2\in \cl(T\setminus\{*_1,*_2\})$.} Then by the construction of $F_2\redEone$ we can infer that there is a subset $T'\subseteq T\setminus\{*_1,*_2\}$ s.t. $T'\in\mathcal{D}_{S_3}(E_1)$.

                    Thus in $F$, $\exists b\in\cl_F(T\setminus\{*_2\})$, s.t. $E$ attacks $c$. We thus infer that $E$ defends $c$ from $\cl_F(T)$. This contradicts the premise that $E$ is complete.
                \end{itemize}
            \end{itemize}
        \end{itemize}
\end{enumerate}
\end{proof}

\PrefSuppSplit*
\begin{proof}
We prove the statement for each $\sigma\in\{\grd,\pref\}$.
\paragraph{Grounded Semantics.}
Suppose $E_1\in \grd(\BF_1)$ and $E_2\in \grd(\BF\redEone_2)$. Then $E_1\in \com(\BF_1)$ and $E_2\in \com(\BF\redEone_2)$, thus by Theorem~\ref{thm:SuppSplit}: $E=E_1\cup E_2\setminus\{*_2\}\in\com(\BF)$. Assuming toward contradiction $\exists G\subsetneq E:G\in\com(\BF)$, then $\exists G_1\subsetneq E_1:G_1\in\com(\BF_1)$ or $\exists G_2\subsetneq E_2:G_2\in\com(\BF_2\redEone)$. Contradiction! Thus $E\in\grd(\BF)$.

\paragraph{Preferred Semantics.}
Suppose $E_1\in \prf(F_1)$ and $E_2\in \prf(F\redEone_2)$. Since $E_1$ and $E_2$ are admissible in $F_1$ and $F_2\redEone$, respectively, we obtain $E\in \adm(F)$.
    We show that $E=E_1\cup E_2\setminus \{*_2\}\in \prf(F)$.

    Towards a contradiction, suppose $E\notin \prf(F)$. 
    Then there is $E'\supsetneq E$ with $E'\in \adm(F)$.
    Then $E_1'=E'\cap A_1\in \adm(F_1)$ and $E_2'=E'\cap A_2\in \adm(F_2\redEone)$. 
    $E'$ is strictly larger than $E$. Proceed with case distinction.
    
    \underline{Case 1:} $E_1'$ is strictly larger than $E_1$. This is in contradiction to $E_1\in \prf(F_1)$.
    
    \underline{Case 2:} $E_2'$ is strictly larger than $E_2\setminus \{*_2\}$.
    
    \underline{Case 2.i: $*_2\in E_2$.} Let $E''_2=E_2\cup \{*_2\}$.
    If $E_2''\in \adm(F_2\redEone)$ then $E''_2\supsetneq E_2'$, contradiction to $E_2\in \prf(F_2)$.

    We show that $E_2''\in \adm(F_2\redEone)$:
    $E_2''$ is closed since $E_2'$ is closed and $*_2\notin U$ for any $(U,b)\in S_2\redEone$. 
    $E_2''$ defends $*_2$ since $E_2\subseteq E_2''$ defends $*_2$.
    $E_2''$ is conflict-free: Towards a contradiction, suppose there is an attack $(T,h)\in R_2\redEone$ such that $T\subseteq E_2''$ and $h\in E_2''$. 

    \underline{Case 2.i.a: $h = *_2$.} By construction, $T$ has the form $((T'\cap A_2)\cup \{*_2\},*_2)$ for some $T'\in \mathcal{T}_{S_3}(E_1)$.  Thus, $T'\cap A_2\subseteq E_2'$. 
    Since $(T'\cap A_2,*_2)\in S_2\redEone$, this implies $E_2'$ is not closed. Contradiction to $E_2'\in \adm(F_2\redEone)$.

    \underline{Case 2.i.b: $h \in A_2$.} In this case, $T\subseteq E_2'$ and we have a contradiction to the conflict-freeness of $E_2'$.

    \underline{Case 2.ii: $*_2\notin E_2$.} Then $E_2'\supsetneq E_2'$,
    contradiction to $E_2\in \prf(F_2)$.
\end{proof}

\clearpage
\section{Omitted Proofs of Section \ref{sec:combined splitting}}
We recall the notation from Definition \ref{def:combined splitting procedure}. Throughout this section, we will make use of the following conventions. 

Let $(F_1,F_2,R_3,S_3)$ be a splitting for a BSAF $F$ with $F_1=(A_1,R_1,S_1)$, $F_2=(A_2,R_2,S_2)$. 
    Let $R_3\clink$ denote the set of closed attacks wrt.\ $R_3$ and
    $R_2\clink$ denote the set of closed attacks wrt.\ $R_2$.
    Let $\hat R_2=\{(T,h)\in R_2\clink\mid T\cap A_1=\emptyset\}$
    and let $\hat F_2=(A_2,\hat R_2,S_2)$ denote the updated BSAF.
    Further, let $$\hat R_3=\{(T,h)\mid (T,h)\in R_3\clink\cup  R_2\clink, T\cap A_1\neq \emptyset\}.$$
    Let $\hat F\redEone_2$ be the R-reduct of the attack splitting $(F_1,\hat F_2,\hat R_3)$ wrt.\ $E_1$ and $F^\star_2=mod^{E_1}_{\hat R_3}(\hat F\redEone_2)$.
    
    We let $F_2^\circledast=\tuple{A_2^\circledast,R_2^\circledast,S_2^\circledast}$ denote the S-reduct for the support splitting $(F_1,F^\star_2,S_3)$ wrt.\ $E_1$.

\begin{lemma}\label{lem:conflict-freeness attack support splitting}
		Let $(F_1,F_2,R_3,S_3)$ be a splitting for a BSAF $F=(A,R,S)$ with $F_1=(A_1,R_1,S_1)$, $F_2=(A_2,R_2,S_2)$.
    Let $F^\circledast_2$ be defined as in Definition~\ref{def:combined splitting procedure}.
		\begin{enumerate}
			\item If $E_1\in \adm(F_1)$ and $E_2\in \adm(F_2^\circledast)$, then $E=E_1\cup (E_2\setminus\{*_2\})\in \cf(F)$. 
			\item If $E\in \adm(F)$, then $E_1=E\cap A_1\in \cf(F_1)$ and $E\cap A_2\in \cf(F^\circledast_2)$.
		\end{enumerate}
\end{lemma}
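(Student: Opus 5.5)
The plan is to prove the lemma by composing the two earlier conflict-freeness lemmas along the construction of Definition~\ref{def:combined splitting procedure}. The pipeline has two stages. First we pass from $F$ to the attack-closed framework $\hat F=(A,R_1\cup\hat R_2\cup\hat R_3,S)$. Then $(F_1,\hat F_2,\hat R_3)$ is an attack splitting of $\hat F$ with respect to the attack part, and $(F_1,F^\star_2,S_3)$ is a support splitting. By Proposition~\ref{prop:close the attacks}, applied once per attack, $\sigma(F)=\sigma(\hat F)$. By Lemma~\ref{lem:closedattacks-cfclosed}, conflict-freeness of closed sets and closedness coincide in $F$ and $\hat F$. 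It therefore suffices to argue about $\hat F$.

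\textbf{Item 1.} Let $E_1\in\adm(F_1)$ and $E_2\in\adm(F_2^\circledast)$. We first strip the S-reduct. Since $F^\circledast_2$ only adds $*_1,*_2$, the attacks $(\emptyset,*_1)$ and $((T\cap A_2)\cup\{*_2\},*_2)$, and supports into $*_1,*_2$, the argument used in Lemma~\ref{lem:conflict-freeness support splitting}(1) applies. It gives $*_1\notin E_2$ and shows that $E_2':=E_2\setminus\{*_2\}$ is conflict-free in $F^\star_2$.

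Next we establish that $E_1\cup E_2'$ is closed in $F$ (needed because Lemma~\ref{lem:conflict-freeness attack splitting}(1) uses closure). This follows as in Lemma~\ref{lem: closure supp}(1): any support $(T,h)\in S_3$ with $T\subseteq E$ and $h\notin E_1$ would force $*_1$ or $*_2$ into $E_2$, contradicting admissibility. We also note $*_0\notin E_2$, since it is self-attacking.

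We then rerun the case analysis of Lemma~\ref{lem:conflict-freeness attack splitting}(1) for $(F_1,\hat F_2,\hat R_3)$, using only conflict-freeness of $E_2'$ in $F^\star_2$ together with closedness of $E_1$ and $E$. Any attack $(T,h)\in R$ with $T\subseteq E$ falls into one of three cases.
\begin{itemize}
\item It lies in $R_1$; then $h\notin E$ because $E_1$ is conflict-free.
\item It is closed into some $(\cl(T),h)\in\hat R_2$; since $E$ is closed, $\cl(T)\subseteq E_2'$, so conflict-freeness in $F^\star_2$ excludes $h$.
\item It is closed into a link of $\hat R_3$. The undecided case is impossible, because $\cl(T)\cap A_1\subseteq E_1$ by closedness of $E$. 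The defeated case yields either an attack of $E_1$ on itself or an empty-tail attack $(\emptyset,t)$ in the R-reduct on some $t\in E_2'$. Otherwise the reduct contains $(\cl(T)\cap A_2,h)$, which excludes $h$.
\end{itemize}
Since Proposition~\ref{prop:close the attacks} transfers conflict-freeness back to $F$, this gives $E\in\cf(F)$.

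\textbf{Item 2.} Let $E\in\adm(F)$, so $E$ is closed and conflict-free in $F$, and hence also in $\hat F$. That $E_1\in\cf(F_1)$ is immediate from $R_1\subseteq R$. For $E_2=E\cap A_2$ we check the attacks of $F^\circledast_2$ by origin.
\begin{itemize}
\item Attacks in $\hat R_2$ are closures of attacks in $R$; a conflict from one of them means $E$ has a closed tail inside it and so conflicts in $F$.
\item Projected attacks $(T\cap A_2,h)$ come from $(T,h)\in\hat R_3$ with $T\cap A_1\subseteq E_1$, so $T\subseteq E$ and $h\in E$ would give a conflict in $\hat F$.
\item Attacks involving $*_0$, $*_1$ or $*_2$ are irrelevant, since none of these lies in $E_2\subseteq A_2$.
\end{itemize}

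The main obstacle is item 1, specifically the interplay between the S-reduct and the closedness of $E$. We must ensure that removing $*_2$ and using closure of $E$ in $F$, rather than in $F^\star_2$, still supports the attack-splitting case analysis. The key point is that $\cl_F(T)$ may contain $A_1$-arguments reached through $S_3$. I would handle this by proving closedness of $E$ first and then reducing every attack in $R_2\cup R_3$ to its closed version in $\hat R_2\cup\hat R_3$.
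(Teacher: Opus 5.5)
Your proposal is correct and follows essentially the same case analysis as the paper. For item~1 that means attacks from $R_1$, attacks whose closure lands in $\hat R_2$, and closed links in $\hat R_3$ split into undecided, defeated, and projected cases; for item~2 it means checking the attacks of $F^\circledast_2$ by origin. The one substantive difference is that you prove closedness of $E$ in $F$ up front, as in Lemma~\ref{lem:closure attack support splitting}(1), whereas the paper uses it only tacitly (e.g.\ for $\cl(T)\subseteq E_2$). That step is exactly what makes the undecided case vacuous; the paper's appeal there to the attack $((T_c\cap A_2)\cup\{*_0\},h)$ would not by itself exclude $h$ from a conflict-free $E_2$.
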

\begin{proof}
\begin{enumerate}
    \item Let $E_1\in \adm(F_1)$ and $E_2\in \adm(F_2^\circledast)$. We show that $E=E_1\cup (E_2\setminus\{*_2\})\in \cf(F)$.
    Let $(T,h)\in R$ s.t. $T\subseteq E$. Then either $(T,h)\in R_1$, $(T,h)\in R_2$, or $(T,h)\in R_3$.
    \begin{itemize}
            \item \underline{Case 1: $(T,h)\in R_1$.} Then $h\notin E$ since $E_1\in \adm(F_1)$. 
            \item \underline{Case 2: $(T,h)\in R_2$.} It holds that $T\subseteq A_2,h\in A_2$. 
            \begin{itemize}
                \item \underline{Case 2i: $\cl(T)\subseteq A_2$.} Then $(\cl(T),h)\in R_2^\circledast$.
                \item \underline{Case 2ii: $\cl(T)\cap A_1\neq\emptyset$.} Then, $T\cap E^+=\emptyset$ entails $((\cl(T)\setminus A_1),h)\in R_2^\circledast$
            \end{itemize}
            In both cases we have $\cl(T)\subseteq E_2$ and $h\notin E_2$ since $E_2\in \adm(F_2^\circledast)$. Thus $h\notin E$.
            \item \underline{Case 3: $(T,h)\in R_3$.} Let $T_c=\cl_F(T)$. 
            Then either $(T_c,h)\in U_{R_3\clink}^{E_1}$ or $(T_c,h)\notin U_{R_3\clink}^{E_1}$. 
            \begin{itemize}
                \item \underline{Case 3.i: $(T_c,h)\in U_{R_3\clink}^{E_1}$.} 
                Then $(T_c\cap A_2)\cup \{*_2\}$ attacks $h$ in $F_2^\circledast$.
                Thus, $h\notin E_2$ and therefore $h\notin E$.

                \item \underline{Case 3.ii: $(T_c,h)\notin U_{R_3\clink}^{E_1}$.}

                From $(T_c,h)\notin U_{R_3\clink}^{E_1}$, we get that 
                $T_c\cup (E_1)^+_{R_1\cup R_3\clink}\neq \emptyset$ or 
                $T_c\cap A_1 \subseteq (E_1)^\oplus_{R_1}$. 
                This amounts to: $T_c\cap (E_1)^+_{R_1}\neq \emptyset$ or $T_c\cap (E_1)^+_{R_3\clink}\neq \emptyset$ or
                $T_c\cap A_1 \subseteq E_1$.
                
                We proceed by case distinction. 
                
                (a) $T_c\cap (E_1)^+_{R_1}\neq \emptyset$. That is, there is some $t\in T_c$ which is attacked by $E_1$ via some attack $(T',t)$ in $R_1$. Since $(T',t)\in R_1$ then $t\in A_1$. We obtain a contradiction to $E_1\in \adm(F_1)$. 

                (b) $T_c\cap (E_1)^+_{R_3\clink}\neq \emptyset$. That is, there is some $t\in T_c$ which is attacked by $E_1$ via some attack $(T',t)$ in $R_3$. Then $t\in A_2$ and $t\in (E_1)^+_{R_3\clink}$. 
                By assumption, $T'\subseteq E_1$ and $t\in A_2$. 
                Thus, $T'\cap (E_1)^+_{R\clink_3}=\emptyset$. Then, by definition of the R-reduct, $R_2^\circledast$ contains $(\emptyset,t)$. Since $t\in E$, we obtain contradiction to the conflict-freeness of $E_2$ in $F_2^\circledast$. 

                (c) If $T\cap A_1 \subseteq E_1$, we obtain a contradiction to conflict-freeness of $E_1$.
            \end{itemize}
    \end{itemize}
    \item Suppose $E\in \adm(F)$. We show that $E_1=E\cap A_1\in \cf(F_1)$ and $E_2=E\cap A_2\in \cf(F^\circledast_2)$.
    \begin{itemize}
        \item $E_1\in \cf(F_1)$ since $R_1\subseteq R$.
        \item We show that $E_2\in \cf(F^\circledast_2)$.
            Let $(T,h)\in R^\circledast_2$ with $T\subseteq E_2$.  
            By definition of $E_2$, we have $h\in A_2$. Thus, 
            by definition of $R^\circledast_2$, one of the following applies for $(T,h)$:
             \begin{itemize}
                \item $(T,h)\in \hat R_2$ and $T\subseteq A_2,h\in A_2$; \hfill (Case 1)
                \item $(T,h)=(T'\setminus A_1, h)$ for some $T'\subseteq A$ such that $(T',h)\in \hat R_3$, $T'\cap (E_1)^+_{R_1\cup \hat R_3}=\emptyset$, $T\cap A_1\subseteq E_1$, and $h\in A_2$; \hfill (Case 2)
                \item $(T,h)=((T'\cap A_2) \cup \{*_0\}, h)$ for some $(T',h)\in U^{E_1}_{\hat R_3}$. \hfill (Case 3)
            \end{itemize}
        We proceed by case distinction.
   \begin{itemize}
       \item \underline{Case 1.} $(T,h)\in \hat R_2$ with $T\subseteq A_2,h\in A_2$. By definition of $\hat R_2$, we know that $(T,h)\in R_2\redEone$ and $T\cap A_1=\emptyset$. Thus, there is some $T'\subseteq T\subseteq A_2$ such that $T=cl_S(T')$ and $(T',h)\in R_2$. Thus  $h\notin E_2$ since $E\in \cf(F)$.
       \item \underline{Case 2.} 
       From $T=T'\setminus A_1 \subseteq E_2$ and $T'\cap A_1\subseteq E_1$ we obtain $T'\subseteq E$. By definition of $\hat R_3$, there is $T''\subseteq T'$ such that $(T'',h)\in R_3$ or $(T'',h)\in R_2$.
       We obtain $h\notin E_2$ since $E\in \cf(F)$ and since $R_3\subseteq R$.
       \item \underline{Case 3.} It follows that $*_0\in E_2\subseteq E$, in contradiction to $E\subseteq A$.
   \end{itemize} 
   We have shown that $E_2\in \cf(F^\circledast_2)$.\qedhere
         \end{itemize}
\end{enumerate}
\end{proof}

\Needspace{8\baselineskip}%
\begin{lemma}\label{lem:closure attack support splitting}  
  Let $(F_1,F_2,R_3,S_3)$ be a splitting for a BSAF $F=(A,R,S)$ with $F_1=(A_1,R_1,S_1)$, $F_2=(A_2,R_2,S_2)$.
    Let $F^\circledast_2$ be defined as in Definition~\ref{def:combined splitting procedure}.
		\begin{enumerate}
		\item If $E_1\in \adm(F_1)$ and $E_2\in \adm(F_2^{\circledast})$, then $E= \cl_F(E)$ where $E=E_1\cup (E_2\setminus\{*_2\})$. 
		\item If $E\in \adm(F)$, then $E\cap A_1=E_1=cl_{F_1}(E_1)$ and $E_2=cl_{F_2^{\circledast}}(E_2)$ where $E_2=E\cap A_2$.
		\end{enumerate}
\end{lemma}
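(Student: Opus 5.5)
The plan is to mirror the proof of Lemma~\ref{lem: closure supp}. The key observation is that the attack-related steps of Definition~\ref{def:combined splitting procedure} leave supports and arguments untouched, apart from the isolated dummy $*_0$. Closing attacks in $R_2\cup R_3$, the R-reduct and the modification only change attacks and add $*_0$, which is neither head nor tail of any support. Hence the support relation of $F_2^\circledast$ is $S_2$ together with the supports $(T\cap A_2,*_1)$ for $T\in\mathcal{D}_{S_3}(E_1)$ and $(T\cap A_2,*_2)$ for $T\in\mathcal{T}^{\text{-}\mathcal{D}}_{S_3}(E_1)$. The sets $\mathcal{D}$ and $\mathcal{T}$ are computed for the support splitting $(F_1,F^\star_2,S_3)$, whose shared supports are still $S_3$. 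In particular, closedness in $F_2^\circledast$ of a set $X\subseteq A_2\cup\{*_0,*_1,*_2\}$ only concerns $S_2$ and the two dummy heads.

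For item 1, let $(T,h)\in S$ with $T\subseteq E$ and split into the cases $S_1$, $S_2$, $S_3$, exactly as in Lemma~\ref{lem: closure supp}.

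\emph{Case $S_1$.} Closedness of $E_1$ gives $h\in E_1$.

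\emph{Case $S_2$.} We have $T\subseteq E_2\setminus\{*_2\}$, so closedness of $E_2$ in $F_2^\circledast$ gives $h\in E_2$. Since $h\in A_2$, also $h\in E$.

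\emph{Case $S_3$, subcase $T\cap A_1=\emptyset$.} Suppose $h\notin E_1$. Then $T\in\mathcal{T}_{S_3}(E_1)$. If moreover $T\in\mathcal{D}_{S_3}(E_1)$, closedness forces $*_1\in E_2$. But $(\emptyset,*_1)$ is an attack, so $E_2$ is not conflict-free, a contradiction.

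\emph{Case $S_3$, otherwise.} If $T\notin\mathcal{D}_{S_3}(E_1)$ or $T\cap A_1\neq\emptyset$, then $T\in\mathcal{T}^{\text{-}\mathcal{D}}_{S_3}(E_1)$. Closedness gives $*_2\in E_2$, and together with $T\cap A_2\subseteq E_2$ the attack $((T\cap A_2)\cup\{*_2\},*_2)$ makes $E_2$ self-attacking, again a contradiction.

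Admissibility of $E_2$ is used only through conflict-freeness and closedness, so Lemma~\ref{lem:conflict-freeness attack support splitting} is not even needed here.

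For item 2, closedness of $E_1$ in $F_1$ follows since $S_1\subseteq S$ and no support leads from $A_1$ into $A_2$. For $E_2=E\cap A_2$, take $(T,h)\in S_2^\circledast$ with $T\subseteq E_2$; note $E_2$ contains no dummies. If $(T,h)\in S_2$, closedness of $E$ gives $h\in E\cap A_2$. If $h\in\{*_1,*_2\}$, then $T=T'\cap A_2$ for some $T'\in\mathcal{T}_{S_3}(E_1)$, so $T'\cap A_1\subseteq E_1$. Therefore $T'\subseteq E$, and $T'$ supports some $a\in A_1\setminus E_1$ in $F$. This contradicts $E=\cl_F(E)$.

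The main obstacle is the bookkeeping in the first paragraph. One has to confirm that the attack-closing step, which moves some $R_2$ attacks into $\hat R_3$, and the additions of $*_0$ genuinely add no supports and do not alter $S_3$. One also has to confirm that $\mathcal{D}_{S_3}(E_1)$, although it refers to $\cl$ and $(E_1)^+_R$ in the intermediate framework, is still a subcollection of $\mathcal{T}_{S_3}(E_1)$ built from $S_3$ alone. Only membership in $\mathcal{T}$ versus $\mathcal{D}$ matters for the case split, so whichever attack relation is used to define $\mathcal{D}$ does not affect the argument.
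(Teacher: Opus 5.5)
Your proposal is correct and takes essentially the paper's route. Item 1 uses the same case split over $S_1$, $S_2$, $S_3$: the $S_3$ case is closed off by forcing $*_1$ (for $T\in\mathcal{D}_{S_3}(E_1)$) or $*_2$ (for $T\in\mathcal{T}_{S_3}(E_1)\setminus\mathcal{D}_{S_3}(E_1)$) into $E_2$ and contradicting conflict-freeness. Item 2 uses the same $S_2$ versus dummy-head analysis, resting on the observation, also made in the paper, that the attack-splitting steps leave the support relation untouched.
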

\begin{proof} 
We recall: $F_1=(A_1,R_1,S_1)$ and $F_2^\circledast=(A_2^\circledast, R_2^\circledast,S_2^\circledast)$ with $A_2^\circledast=A_2\cup \{*_0,*_1,*_2\}$ and $R_2^\circledast$, $S_2^\circledast$ as defined by the respective modifications for support and attack splitting. 
    \begin{enumerate}
        \item $E_1\in \adm(F_1)$ and $E_2\in \adm(F_2^{\circledast})$. We show that $E=E_1\cup (E_2\setminus\{*_2\})$ is closed in $F$. The proof is similar to the case for support splitting. 

        Let $(T,h)\in S$ with $T\subseteq E$. It holds that either $(T,h)\in S_1$, $(T,h)\in S_2$, or $(T,h)\in S_3$. We proceed by case distinction.

        \begin{itemize}
            \item \underline{Case 1: $(T,h)\in S_1$.} Then $T\subseteq E_1$. Since $E_1$ is closed in $F_1$ by assumption, we get $h\in E_1\subseteq E$.
            \item \underline{Case 2: $(T,h)\in S_2$.} Then $T\subseteq E_2$. Moreover, $S_2\subseteq S_2^\circledast$. Since $E_2$ is closed in $F_2^\circledast$, we obtain $h\in E_2\subseteq E$.
            \item \underline{Case 3: $(T,h)\in S_3$.} It holds that $h\in A_1$ by definition of $S_3$. Recall that $T\subseteq E$, thus $T\cap A_1\subseteq E_1$. Towards a contradiction, suppose $h\notin E_1$.
            We proceed by case distinction.

            \underline{Case 3.i: $T\in\mathcal{D}_{S_3}(E_3)$.} By construction of the S-reduct, $(T,*_1)\in S_2^\circledast$. Since $T\subseteq E_2$ and $E_2$ is closed in $F_2^\circledast$, we have $*_1\in E_2$. Since $(\emptyset,*_1)\in R_2^\circledast$, $E_2$ is attacked by the empty set and thus not admissible in $F_2^\circledast$; contradiction to our initial assumption. 
            
            \underline{Case 3.ii: $T\notin\mathcal{D}_{S_3}(E_3)$.} In this case, $T\in \mathcal{T}_{S_3}(E_1)\setminus \mathcal{D}_{S_3}(E_1)$. 
            By construction of the S-reduct, $(T\cap A_2,*_2)\in S_2^\circledast$. Since $T\cap A_2\subseteq E_2$ and $E_2$ is closed in $F_2^\circledast$, we have $*_2\in E_2$. Again, we can derive a contradiction to admissibility of $E_2$ in $F_2^\circledast$: By construction of the S-reduct, $F_2^\circledast$ contains the attack $((T\cap A_2)\cup \{*_2\},*_2)$, thus $E_2\notin \cf(F_2^\circledast)$. 
        \end{itemize}
        \item  Let $E\in \adm(F)$. We show that $E\cap A_1=E_1=cl_{F_1}(E_1)$ and $E\cap A_2=E_2=cl_{F_2^{\circledast}}(E_2)$. 
        \begin{itemize}
            \item Let $E_1=E\cap A_1$ and let $(T,h)\in S_1$ with $T\subseteq E_1$. Since $T\subseteq E$ and $S_1\subseteq S$, we obtain $h\in E_1$. Therefore, $E_1 = \cl_{F_1}(E_1)$.
            \item Let $E_2=E\cap A_2$ and let $(T,h)\in S_2^\circledast$ with $T\subseteq E_2$. By construction of the S-reduct, either $(T,h)\in S_2$ or $h = *_1$ or $h=*_2$. Note that splitting attacks (closing links in $R_3$, constructing the R-reduct and the modification) leaves the supports untouched. Thus, the proof is analogous to the case for support splitting.

            Towards a contradiction, suppose $h\notin E_2$. This implies $h\notin E$. Proceed by case distinction.

            \begin{itemize}
                \item \underline{Case 1: $(T,h)\in S_2$.} Then $T\subseteq E$, contradiction to $E=\cl_F(E)$.
                \item \underline{Case 2: $h = *_1$.} Then $T\in \mathcal{D}_{S_3}(E_1)$. Therefore, there is is some $a\in A_1$ with $(T,a)\in S_3$ and $a\notin E_1$. We obtain a contradiction to $E$ being closed in $F$ since $T\subseteq E$ but $a\notin E$ (as $a\notin E_1$). 
                \item \underline{Case 3: $h = *_2$.} Then the support is of the form $(T,h)=(T'\cap A_2,*_2)$ for some $T'\supseteq T$ with $T'\in \mathcal{T}_{S_3}(E_1)$. Therefore, there is is some $a\in A_1$ with $(T',a)\in S_3$ and $a\notin E_1$.
                It holds that $T'\subseteq E$ since $T'\cap A_1\subseteq E_1$ and $T=T'\cap A_2\subseteq E_1$. Consequently, in $F$, $E$ supports $a$ but $a\notin E$, contradiction to $E=\cl_F(E)$. \qedhere
            \end{itemize}
        \end{itemize}
    \end{enumerate}
\end{proof}

\AttSuppSplit*
\begin{proof} We prove the theorem for all considered semantics. 
\paragraph{Admissible Semantics.}
\begin{enumerate}
    \item Suppose $E_1\in \adm(F_1)$ and $E_2\in \adm(F^\circledast_2)$. 
    We show that $E=E_1\cup E_2 \in \adm(F)$.
    By Lemma~\ref{lem:conflict-freeness attack support splitting}, $E$ is conflict-free in $F$; by Lemma~\ref{lem:closure attack support splitting}, $E$ is closed in $F$. It remains to prove that $E$ defends itself in $F$.

    Let $T_c$ denote a closed attacker of $E$ in $F$.
    That is, there is $T\subseteq T_c$ with $(T,h)\in R$ and $h\in E$. Wlog, we can assume $\cl_F(T)=T_c$. 
    
    As usual, we have three cases to consider: $(T,h)\in R_1$, $(T,h)\in R_2$, or $(T,h)\in R_3$.
    In the process of splitting up the attacks, we close all attacks in $R_2$ and $R_3$; the resulting sets are denoted by $\hat R_2$ and $\hat R_3$; we let $\hat F_2$ denote the updated BSAF. Recall that some of the links that originally stem from $R_2$ may now lie in $\hat R_3$. For our attack $(T,h)\in R$, we thus obtain the following adjusted three cases: 
    $(T,h)\in R_1$, $(T_c,h)\in \hat R_2$, or $(T_c,h)\in \hat R_3$.
    
    We proceed by case distinction.
    \begin{itemize}
        \item \underline{Case 1: $(T,h)\in R_1$.} In this case, $T_c\subseteq A_1$ since $T\subseteq A_1$ and there are no supports $(U,b)\in S$ with $U\subseteq A_1$ and $b\in A_2$. We therefore obtain that $E_1$ defends itself against the attack in $F_1$. Thus, $E$ defends itself against $T_c$ in $F$.  

        \item \underline{Case 2: $(T_c,h)\in \hat R_2$.} It holds that $T_c\subseteq A_2$ and $(T_c,h)\in R_2^\circledast$. 
        By assumption, $E_2$ is admissible in $F_2^\circledast$, 
        thus $E_2$ defends itself against the closed attacker $T_c'=\cl_{F^\circledast_2}(T_c)$ in $F_2^\circledast$.
        That is, there is $(U,b)\in R_2^\circledast$ such that $U\subseteq E_2$ and $b\in T_c'$. 

        We have several cases to consider. We go through the different options for $b$.

        \underline{Case 2.i: $b=*_0$.} Then $(U,b)$ is of the form $( \{*_0\},*_0)$. Since $U\subseteq E_2$, it follows that $E_2$ contains the self-attacker $*_0$ and is therefore not admissible, contradiction to our initial assumption.
            
        \underline{Case 2.ii: $b=*_1$.} Then $U=\emptyset$ and $*_1\in T_c'$.
        Then there is some $V\subseteq T_c'$ so that $(V,*_1)\in S^\circledast_2$. By construction of the S-support, $V\in \mathcal{D}_{S_3}(E_1)$. Therefore, there is $a\in A_1$ such that $(V,a)\in S_3$, $V\cap A_1=\emptyset$, $a\notin E_1$, and there is some $c\in \cl_F(V)$ which is attacked by $E_1$ in $F$.
        We argue that $\cl_F(V)\subseteq T_c$: Indeed, by construction of the splitting procedure, the head of all supports that got added to $S^\circledast_2$ lies in $\{*_1,*_2\}$ (by definition of the attack splitting procedure, no new supports are added; by definition of the support splitting procedure, all supports lie either in $S_2$ or satisfy the statement above). 
        Thus, $V\subseteq T_c$ and therefore $\cl_F(V)\subseteq \cl_F(T_c)=T_c$. 
        Consequently, $T_c$ is attacked by $E_1$ in $F$. It follows that $E$ attacks $T_c$ in $F$.
            
        \underline{Case 2.iii: $b=*_2$.} Then $(U,b)$ is of the form $((V\cap A_2)\cup \{*_2\},*_2)$. Since $U\subseteq E_2$, it follows that $E_2$ contains $*_2$, attacks itself on $*_2$, and is therefore not admissible, contradiction to our initial assumption.

        \underline{Case 2.iv: $b\in A_2$.} 
        We show that $b\in T_c$: it holds that $b\in T_c'$ iff $b\in T$ or there is $(W,h)\in S^\circledast_2$ with $W\subseteq T_c'$. In the former case, $b\in T_c$; in the latter case, we have $(W,h)\in S_2$ since $h\in A_2$ and by construction of the S-support, the head of each novel support either corresponds to $*_1$ or $*_2$.

        Thus, $(U,b)\in R_2^\circledast$ with $U\subseteq E_2$ and $b\in T_c$. 
        We will now consider the different forms the attack may have.
        By construction of the combined split and since $b\in A_2$, we have three options: 
        Either (a) $(U,b)\in \hat R_2$;
        or (b) $(U,b)$ is of the form $(V\setminus A_1,b)$ for some $(V,b)\in \hat R_3$, $V\cap (E_1)^+_R=\emptyset$ and $V\cap A_1\subseteq E_1$;
        or (c) $(U,b)$ is of the form $(V\setminus A_1)\cup \{*_0\},b)$ for some $(V,b)\in \hat R_3$, $V\cap (E_1)^+_R=\emptyset$ and $V\cap (A_1\setminus E_1)\neq \emptyset$.

        \begin{itemize}
            \item \underline{Case 2.iv.a:} We have $(U,b)\in \hat R_2$.
            Thus, there is $U'\subseteq U$ so that $(U,b)\in R_2$. Therefore, $E$ defends itself against $T_c$ in $F$ since $U\subseteq E$ and $b\in T_c$.

            \item \underline{Case 2.iv.b:} Suppose $(U,b)$ is of the form $(V\setminus A_1,b)$ for some $(V,b)\in \hat R_3$, $V\cap (E_1)^+_R=\emptyset$ and $V\cap A_1\subseteq E_1$.
            Then $V\subseteq E$ since $U=V\setminus A_1 \subseteq E_2$ and $V\cap A_1\subseteq E_1$. Thus there is an attack $(V,b)\in \hat R_3$ with $V\subseteq E$ and $b\in T_c$; in other words, $E$ defends itself against $T_c$ in $F$.

            \item \underline{Case 2.iv.c:} Since $U\subseteq E_2$, it follows that $E_2$ contains $*_0$, attacks itself on $*_0$, and is therefore not admissible, contradiction to our initial assumption.
        \end{itemize}
        This concludes Case 2; we have shown that $E$ defends itself against $T_c$ if $(T_c,h)\in \hat R_2$.
        
        \item \underline{Case 3: $(T_c,h)\in \hat R_3$.} 
        By definition of $\hat R_3$, it holds that $T_c\cap A_1\neq \emptyset$.
        We proceed by case distinction.

        \begin{itemize}
            \item \underline{Case 3.i: $T_c\cap (E_1)^+_R\neq \emptyset$.}
            In this case, $E$ defends itself $T_c$ in $F$.
                
            \item \underline{Case 3.ii: $T_c\cap (E_1)^+_R=\emptyset$.}
            In this case, either (a) $(T_c\cap A_2,h)\in R^\circledast_2$ (via the R-reduct) or (b) $((T_c\cap A_2)\cup \{*_0\},h)\in R^\circledast_2$ (via the modification). 
            Let $T_c'=\cl_{F^\circledast_2}(T_c\cap A_2)$. 
            Observe that 
            \begin{align*}
                \cl_{F_2^\circledast}((T_c\cap A_2)\cup \{*_0\})=&\, \cl_{F_2^\circledast}(T_c\cap A_2)\cup \{*_0\}\\=&\,T_c'\cup \{*_0\}
            \end{align*}
            since $*_0$ is not contained in the tail of any support. 
            
            $E_2$ defends itself against $T_c'$ (in Case a) resp.\ $T_c'\cup \{*_0\}$ (in Case b) since it is admissible in $F_2^\circledast$.
            Thus, there is $(U,b)\in R_2^\circledast$ such that $U\subseteq E_2$ and $b\in T_c'$ (in Case a) or $b\in T_c'\cup \{*_0\}$ (in Case b).

            As in Case 2, we consider the different options for $b$.
            \begin{itemize}
                \item 
                \underline{Case 3.ii.I: $b=*_0$.} Then $(U,b)=(\{*_0\},*_0)$.
                As in Case 2.i, we obtain that $E_2$ contains the self-attacker $*_0$ and is therefore not admissible, contradiction to our initial assumption.
                \item 
                \underline{Case 3.ii.II: $b=*_1$.} Analogous to Case 2.ii. 
                We sketch the proof: As in Case 2.ii, we have $U=\emptyset$ and $*_1\in T_c'$ and thus there is some $V\subseteq T_c'$ so that $(V,*_1)\in S^\circledast_2$ with $V\in \mathcal{D}_{S_3}(E_1)$. Thus $E_1$ attacks $\cl_F(V)$. Analogous to Case 2.ii, we can show that $\cl_F(V)\subseteq T_c$. We obtain that $E$ attacks $T_c$ in $F$.
                \item 
                \underline{Case 3.ii.III: $b=*_2$.} Analogous to Case 2.iii.
                \item
                \underline{Case 3.ii.IV: $b\in A_2$.} 
                As in case 2.iv, we can prove that $b\in T_c$ in this case. That is, we have $(U,b)\in R_2^\circledast$ with $U\subseteq E_2$ and $b\in T_c$. 
                We consider the different forms the attack may have.
                By construction of the combined split and since $b\in A_2$, we have three options: 
                Either (a) $(U,b)\in \hat R_2$;
                or (b) $(U,b)$ is of the form $(V\setminus A_1,b)$ for some $(V,b)\in \hat R_3$, $V\cap (E_1)^+_R=\emptyset$ and $V\cap A_1\subseteq E_1$;
                or (c) $(U,b)$ is of the form $(V\setminus A_1)\cup \{*_0\},b)$ for some $(V,b)\in \hat R_3$, $V\cap (E_1)^+_R=\emptyset$ and $V\cap (A_1\setminus E_1)\neq \emptyset$.
                Proceed as in Case 2.iv.a, 2.iv.b, and 2.iv.c, respectively. 
            \end{itemize}
        \end{itemize} 
        
    \end{itemize}
    This concludes the proof. We have shown that $E$ defends itself against $T_c$, regardless of whether the attack $(T,h)$ with $\cl_F(T)=T_c$ stems from $R_1$, $R_2$, or $R_3$.

    \item Suppose $E\in \adm(F)$. We show that $E\cap A_1\in \adm(F_1)$ and $E\cap A_2\in \adm(F^\circledast_2)$. By Lemma \ref{lem:closure attack support splitting} we know that both sets are closed; moreover, by Lemma \ref{lem:conflict-freeness attack support splitting} they are also conflict-free. It remains to prove that the sets defend itself. 
    \begin{itemize}
        \item The case for $E_1=E\cap A_1$ holds since for all $(T,h)\in R$ with $h\in E_1$ it holds that $(T,h)\in R_1$. Moreover, $\cl_F(T)=\cl_{F_1}(T)$ since no positive links go from $F_1$ to $F_2$. Thus $E_1$ defends itself against all closed attacker in $F_1$ since $E$ is admissible in $F$, by assumption.
        
        \item Let $E_2=E\cap A_2$. Let $T_c$ be a closed attacker of $E_2$ in $F^\circledast_2$ such that there is $(T,h)\in R^\circledast_2$ with $T_c=\cl_{F^\circledast_2}(T)$. 

        We note that $h\in A_2$ since $h\in E_2=E\cap A_2$.
        Thus, the attack $(T,h)$ cannot stem from constructing the S-reduct (the head of newly introduced attacks is either $*_1$ or $*_2$). 
        Therefore, $(T,h)$ satisfies one of the following:
        \begin{itemize}
            \item $(T,h)\in \hat R_2$; 
            %in this case, $T=\cl_F(T)=T_c\setminus \{*_0,*_1,*_2\}$;
            \hfill (Case 1)
            \item $(T,h)=(T'\setminus A_1,h)$ for some $(T',h)\in \hat R_3$, $T'\cap (E_1)^+_R=\emptyset$ and $T'\cap A_1\subseteq E_1$;  
            %in this case $T'\setminus A_1=\cl_F(T'\setminus A_1)=T_c\setminus \{*_0,*_1,*_2\}$
            \hfill (Case 2)
            \item $(T,h)=(T'\setminus A_1)\cup \{*_0\},h)$ for some $(T',h)\in \hat R_3$, $T'\cap (E_1)^+_R=\emptyset$ and $T'\cap (A_1\setminus E_1)\neq \emptyset$.  \hfill (Case 3)
        \end{itemize}
        Let us first inspect the relationship between $\cl_F(T)$ and $T_c$.
        We show that 
        %This is because $T$ and $T'$, respectively are closed in $F$, and the newly introduced supports may additionally add one of the newly introduced arguments only. 
        \begin{align*}
            T_c\setminus \{*_0,*_1,*_2\}=\cl_F(T).
        \end{align*}
        For Case 1, this follows from $T_c=\cl_{F^\circledast_2}(T)$: It holds that $a\in \cl_{F^\circledast_2}(T)$ iff $a\in T$ or there are supports $(W_1,a_1),\dots,(W_k,a_k)\in S^\circledast_2$ with $a_k=a$ and $W_i\subseteq T\cup \bigcup_{j<i} a_j$. 
        Since $(T,h)\in \hat R_2$, we have $T\subseteq A_2$, thus $*_0\notin T$.
        Moreover, 
        since $*_1$ and $*_2$ do not appear in the tail of any support we can assume that they do not appear in any $(W_i,a_i)$ for $i < k$. Thus, if $a\in A_2$, then $(W_i,a_i)\in S_2$ for all $i\leq k$, thus $a\in \cl_F(T)$; in the latter case, $a \in \{*_1,*_2\}$.
        That is, in $F^\circledast_2$, $T$ may additionally support some of the newly introduced arguments, thus the statement follows. 
        For Case 2 and 3, we show $T_c\setminus \{*_0,*_1,*_2\}=\cl_F(T'\cap A_2)$: first note that $T'\cap A_2=\cl_F(T\cap A_2)$ since $T'$ is closed in $F$ (let $(W,a)\in S$ with $W\subseteq T'\cap A_2$; then $a\in \cl_F(T')$; if $a\in A_2$ we have $a\in T'\cap A_2$). 
        Since $T=T'\cap A_2$ we obtain $T_c\setminus \{*_0,*_1,*_2\}=\cl_F(T'\cap A_2)=\cl_F(T)$ analogously to the first case. \hfill $\Diamond$
        \medskip

        Having settled the relation between $\cl_F(T)$ and $T_c$, we observe that in all cases, $E$ is attacked by a closed set $T$ ($T'$, respectively) in $F$. By assumption, $E$ defends itself in $F$. Thus, there is $(U,b)\in R$ with $U\subseteq E$ and $b\in T$ ($b\in T'$, respectively). 
        We furthermore observe that in all cases, $b\in A_2$: in Case 1, this follows since $T\subseteq A_2$ by definition of $\hat R_2$; in Case 2 and 3, this follows from the condition $T'\cap (E_1)^+_R=\emptyset$ (suppose $b\in A_1$; by definition of the splitting tuple, we have $(U,b)\in R_1$; thus we also have $U\subseteq A_1$ and therefore $U\subseteq E_1$  and thus $E_1$ attacks $T'$; contradiction).

        By construction of the combined split, 
        we thus get an attack $(U_c,b)\in \hat R_2\cup \hat R_3$ with $\cl_F(U)=U_c$ with $b\in T$ (which amounts to $b\in T'\cap A_2$ in Case 2 and 3). As shown above $\cl_F(T)\subseteq T_c$.
        
        Below, we distinguish the cases $(U_c,b)\in \hat R_2$ and $(U_c,b)\in \hat R_3$.

        \begin{itemize}
            \item \underline{Case i: $(U_c,b)\in \hat R_2$.} Then $(U_c,b)\in R_2^\circledast$ and thus $E_2$ defends itself against $T_c$ in $F^\circledast_2$.

            \item \underline{Case ii: $(U_c,b)\in \hat R_3$.} 
            We have $U_c\cap A_1\subseteq E_1$ and $U_c$ is not attacked by $E_1$ in $F$ (otherwise, $E\notin \adm(F)$, contradiction), therefore, $R^\circledast_2$ contains an attack of the form $(U_c\cap A_2,b)$. Since $U_c\cap A_2\subseteq E_2$ and $b\in T$, we have $E_2$ defends itself against $T_c$ in $F^\circledast_2$.
        \end{itemize}
    \end{itemize}
\end{enumerate}

\paragraph{Complete Semantics.}
\begin{enumerate}
    \item Suppose $E_1\in \com(F_1)$ and $E_2\in\com(F^\circledast_2)$. 
    We show that $E=E_1\cup (E_2\setminus\{*_2\}) \in \com(F)$.
    
    The premisses $E_1\in \com(\BF_1)$ and $E_2\in \com(\BF^\circledast_2)$ directly entail $E_1\in \adm(\BF_1)$ and $E_2\in \adm(\BF^\circledast_2)$, hence $E\in\adm(\BF)$ as shown above.
    It remains to be shown that $E$ defends itself in $\BF$.

    Assume toward contradiction that there exists an argument $d\in A\setminus E$ that is defended by $E$ in $F$.
    
    We proceed by case distinction.
    \begin{itemize}
        \item \underline{Case 1: $d\in A_1$.} 
        From $E_1\in\com(F_1)$ and $d\notin E_1$, we conclude there exists $(T_1,d)\in R_1$, s.t. $\cl_{F_1}(T_1)\cap E_1^+=\emptyset$. By the construction of $R_1$ we can infer that $(T_1,d)\in R$ and $\cl_{F}(T_1)\cap E^+=\emptyset$, therefore $d$ is not defended by $E$ in $F$. This contradicts our assumption.
        \item \underline{Case 2: $d\in A_2$} From $E_2\in\com(F_2^\circledast)$ and $d\notin E_2$, we conclude there exists $(T_2,d)\in R_2^\circledast$, s.t. $\cl_{F_2^\circledast}(T_2)\cap E_2^+=\emptyset$.

        By the construction of $R_2^\circledast$ we can infer that $\exists (T,d)\in R_2\cup R_3$ s.t. $T_2\subseteq(\cl_F(T)\cap A_2)\cup \{*_0\}$ holds.
        Further $\cl_{F_2^\circledast}(T_2)\cap E_2^+=\emptyset$ implies $\cl_F(T)\cap E^+=\emptyset$. Thus $d$ is not defended by $E$ in $F$. Contradiction!
    \end{itemize}
    
    \item Suppose $E\in \com(F)$. We show that $E_1=E\cap A_1\in \com(F_1)$ and ($E\cap A_2\in \com(F^\circledast_2)$ or $(E\cap A_2)\cup \{*_2\}\in \com(F^\circledast_2)$).
        \begin{itemize}
            \item Utilizing the proof for admissible semantics, we directly conclude $E_1=E\cap A_1\in \adm(F_1)$ and $E_2=E\cap A_2\in \adm(F^\circledast_2)$ or $E_2=(E\cap A_2)\cup\{*_2\}\in \adm(F^\circledast_2)$.
            \item There are no attacks from $A_2$ towards $A_1$, neither are there supports from $A_1$ towards $A_2$. Hence any argument in $A_1$ is defended by $E$ in $F$ iff it is defended by $E_1=E\cap A_1$ in $F_1$. Thus $E_1$ defends itself in $F_1$, therefore $E_1=E\cap A_1\in \com(F_1)$.
            \item It remains to be shown that $E_2$ defends itself in $F^\circledast_2$.
            We assume toward a contradiction that there exists a $c\in A^\circledast_2\setminus E_2$ s.t. $c$ is defended by $E_2$ in $F_2^\circledast$. Then for every closed attacker $D$ with $(D',c)\in R_2^\circledast, D'\subseteq D$ there exists an attack $(T,d)\in R_2^\circledast, d\in D, T\subseteq E_2$. We move on to show that in this case $E$ defends $c$ in $F$. 
            Note first that $c\neq*_0$, because $*_0$ cannot be defended. If $c=*_1$ then $E_2$ attacks the (non-empty) closure of the empty set in $F_2$, so $E$ does the same in $F$, since any support by the empty set in $S^\circledast_2$ is also in $S_2$, but then $E$ is not conflict-free. Contradiction. So $c\neq *_1$. 
            If $c=*_2$ then $E\cap A_2\cup \{*_2\}$ is complete. We already have $E\cap A_2$ is admissible in $F^\circledast_2$, so it is closed and conflict-free and defends itself. Adding $*_2$ preserves closedness, since the only supports involving $*_2$ point to itself and since we defend $*_2$, defense is maintained. Now, if adding $*_2$ would violate conflict-freeness, then there exists a $T\subseteq E\cap A_2$ such that $(T\cup{*_2},*_2)\in R_2^\circledast$, so there is a support $(T\cup T',h)$ with $h\notin E, T'\subseteq E_1$ in $F$. But then $E$ is not closed. Contradiction. So $E\cap A_2\cup\{*_2\}$ is indeed conflict-free and therefore complete.

            Suppose now $c\in A_2$. We have to show, that for every attack $(D',c)\in R$ we have $\cl(D)$ is attacked by $E$ in $F$. Let $(D',c)\in R$. We distinguish the following cases:
            \begin{itemize}
                \item (Case 1: $(D',c)\in R_3$). Then after the closure of attacks we have some closed attack $(D,c)\in \hat{R}_3$. If $D\cap E_1^+\neq \emptyset$, then $E$ defends $c$ against $D$. Otherwise there are two cases:
                \begin{itemize}
                    \item (Case 1.i $D\cap A_1\subseteq E$) Then $(D\cap A_2,c)$ is a closed attack in $F^\circledast_2$. So there exists an attack $(T,d)\in R^\circledast_2$ with $T\subseteq E_2,d\in D\cap A_2$, since $E_2$ defends $c$. But for such an attack we have $*_2\notin T$, since $*_2$ is only part of attacks on itself. $T\cap\{*_0,*_1\}=\emptyset$, since $E$ is admissible. So $T\subseteq A_2$, therefore $(T,d)\in\hat{R}_2\cup \hat{R}_3$, so there exists an attack $(T',d)\in R$ with $T'\subseteq T\subseteq E$, so $E$ defends $c$ in $F$ but does not contain it, since $c\in A_2\setminus E_2$. Contradiction.
                    \item (Case 1.ii $D\cap A_1\nsubseteq E$) Then $((D\cap A_2)\cup*_0,c)$ is an attack in $F^\circledast_2.$. We can distinguish the following cases:
                    \begin{itemize}
                        \item (Case 1.ii a - $cl_F(D\cap A_2)\subseteq A_2$
                        \item (Case 1.ii b - $cl_F(D\cap A_2)\cap E^+_1\neq \emptyset$) Then $E$ defends $c$ via $E_1$ but does not contain it. Contradiction to $E\in\com(F)$.
                        \item (Case 1.ii c - $cl_F(D\cap A_2)\cap A_1\neq\emptyset,cl_F(D\cap A_2)\cap E^+_1= \emptyset $) Then $((D\cap A_2),*_2)\in S^\circledast_2$. So $cl_{F^\circledast_2}(D\cap A_2)=(D\cap A_2)\cup\{*_2\}$ and there is an attack $(T,d)$ from $E_2$ to $(D\cap A_2)\cup\{*_2\}$ in $F^\circledast_2$. Since any attack on $\{*_2\}$ implies the attacking set is not conflict-free and $E_2$ is conflict-free, $d\neq*_2$. So $(T,d)\in\hat{R}_2\cup\hat{R}_3$, but in this case there exists an attack $(T',d)\in R$ from $E$ to $D$, so $E$ defends $c$ in $F$. Since $E$ does not contain $c$ but $E\in\com(F)$, Contradiction.
                    \end{itemize}
                \end{itemize}
                \item (Case 2: $(D',c)\in R_2$). Then after the closure of attacks we either have some closed attack $(D,c)\in \hat{R}_3$ or $(D,c)\in \hat{R_2}$. For the former refer to Case 1. In the latter case $(D,c)\in R^\circledast_2$ so there exists an attack $(T,d)\in R_2^\circledast, d\in D, T\subseteq E_2$ on $D$ in $F^\circledast_2$. We have $*_2,*_1\notin D$, since $D$ is closed in $A_2$. Furthermore $*_0\notin D$, since $*_0\notin A$. We also have $*_2\notin T$, since $*_2$ is only part of attacks on itself. So $d\in A_2, T\subseteq E_2$, so $(T,d)\in\hat{R}_2$ is a closed attack from $E_2$ to $D$, so there exists an attack $(T',d)\in R, T'\subseteq T$, so $E$ defends $c$ in $F$ and does not contain it. But $E\in\com(F)$, Contradiction. 
            \end{itemize}
        \end{itemize}
\end{enumerate}

\paragraph{Stable Semantics.}
Notice that under stable semantics, we have $U^{E_1}_{R\clink_3}=\emptyset$, therefore $F^\star_2=\hat F\redEone_2$ and $F^\circledast_2$ is the $S$-reduct of $\hat F_2\redEone$ wrt.\ $E_1$. 
\begin{enumerate}
\item Suppose $E_1\in \stb(F_1)$ and $E_2\in \stb(F^\circledast_2)$. We show that $E=E_1\cup (E_2\setminus \{*_2\})\in \stb(F)$. 
    \begin{itemize}
        \item E is conflict-free: By Lemma~\ref{lem:conflict-freeness attack support splitting} and $E_1\in \adm(F_1)$ and $E_2\in \adm(F^\circledast_2)$, we obtain $E\in \cf(F)$.
        \item E is closed because of Lemma \ref{lem:closure attack support splitting} and the fact that $\stb(F)\subseteq \adm(F)$ for every BSAF $F$.
        \item $E$ attacks every argument $a\in A\setminus E$. Let $a\in A\setminus E$. We proceed by case distinction.
            \begin{itemize}
                \item \underline{Case 1: $a\in A_1$.} Then $a\in (E_1)^+_{R_1}$ since $E_1\in \stb(F_1)$, thus $a\in E^+_{R}$.
                \item \underline{Case 2: $a\in A_2$.} By hypothesis, we know that $a\in (E_2)^+_{R_2^\circledast}$. Hence, there is a $T\subseteq E_2$ such that $(T,a)\in R_2^\circledast$. Since $R_2^\circledast \supseteq \hat R_2\redEone$, we distinguish two further cases.
                \begin{itemize}
                    \item \underline{Case 2.1. $(T,a)\in \hat R_2\redEone$.} 
                    By definition of $R$-reduct, we can distinguish whether $(T,a)\in \hat R_2$ or not. We proceed by case distinction. 
                    \begin{itemize}
                        \item \underline{Case 2.1.1. $(T,a)\in \hat R_2$.} In this case, we know by definition of $\hat R_2$ that there is a $T'\subseteq T\subseteq E_2$ such that $T=cl_S(T')$ and $(T',a)\in R_2$. Hence, $a\in (E_2)^{+}_{R_2}$ and, consequently, $a\in E^+_{R}$. 
                        \item \underline{Case 2.1.2. $(T,a)\notin \hat R_2$.} Then, it means that $(T,a)$ is introduced by the $R$-reduct. That is, there is a $T'\supseteq T$ such that $(T',a)\in R_3\clink$, $ T'\cap (E_1)^+_{R_1\cup R_3\clink}=\emptyset$ and $T'\cap A_1\subseteq E_1$. Thus, $T'\subseteq E$ and $a\in (E)^+_{R_3}$ and, consequently, $a\in E^+_{R}$. 
                    \end{itemize}
                    \item \underline{Case 2.2. $(T,a)\notin \hat R_2\redEone$.} This means that $(T,a)$ has been introduced via the $S$-reduct. 
                    \begin{itemize}
                        \item \underline{Case 2.2.1. $a=*_1$.} Then $T=\emptyset$ by definition of $S$-reduct. Thus, $*_1\notin E_2$ since it is admissible. Therefore, $*_1\notin E$  
                        \item \underline{Case 2.2.2. $a=*_2$.} Then $*_2\in T$ from the definition of $S$-reduct. From the fact that $(E_2)^\oplus_{R_2^\circledast}=A_2^\circledast$ and $E_2$ is conflict-free, we derive that either $*_2\in E_2$ or $*_2\in (E_2)^+_{R_2^\circledast}$. However, the second case is in contradiction with conflict-freeness of $E_2$. Hence, $*_2\in E_2$. 
                    \end{itemize}
                \end{itemize}
    \end{itemize}
    Under each case, we derive that $a\in E^+_{R}$. Further, $*_1\notin E_2$ and $*_2\in E_2$. Since $*_2\notin A_2$, it follows that $E_1\cup (E_2\setminus \{*_2\})^\oplus=A$.
    \end{itemize}
    \item Suppose $E\in \stb(F)$, that is $E^\oplus_R=E\cup E^+_R=A$. 
    \begin{itemize}
        \item We first show that $E_1=E\cap A_1\in \stb(F_1)$. From previous lemmata we know that $E_1$ is closed and conflict-free. It remains to prove that $E_1$ attacks every $a\in A_1\setminus E_1$. Since there is no attack from $F_2$ to $F_1$ by definition of splitting, we know that $a\in (E)^+_{R_1}$ by some $T\subseteq E\cap A_1$, i.e.\ $a\in (E\cap A_1)^+_{R_1}$.
        \item We show that $E\cap A_2\in \stb(F^\circledast_2)$ or 
        $(E\cap A_2)\cup \{*_2\}\in \stb(F^\circledast_2)$. First, consider $*_2$. If $*_2$ is not in a stable extension (as in the first disjunct), then it is attacked by it. However, the only way that $*_2$ is attacked is via a set-self-attack by definition of $S$-reduct. Hence, $E\cap A_2$ cannot be stable, because is not conflict-free. Therefore, in the remainder we prove that $(E\cap A_2)\cup \{*_2\}\in \stb(F^\circledast_2)$.
        
        From Lemma~\ref{lem:closure attack support splitting} we know that $E\cap A_2$ is closed in $F_2^\circledast$. By definition of the $S$-reduct, $*_2$ does not participate to any support in $F_2^\circledast$. Hence, $(E\cap A_2)\cup \{*_2\}$ is closed in $F_2^\circledast$. 

        From Lemma~\ref{lem:conflict-freeness attack support splitting} we know that $E\cap A_2$ is conflict-free in $F_2^\circledast$. Now suppose $(E\cap A_2)\cup \{*_2\}\notin \cf(F_2^\circledast)$. Then, $((T\cap A_2)\cup \{*_2\},*_2)\in R_2^\circledast$ for some $T\cap A_2\subseteq E\cap A_2$ such that $T\in \mathcal{T}_{S_3}(E_1)\setminus \mathcal{D}_{S_3}(E_1)$. Therefore, we know that $T\cap A_1\subseteq E_1$, from which we derive that $T\subseteq E$. Since $E$ is closed in $F$ from hypothesis and $T\in \mathcal{T}_{S_3}(E_1)$, we derive that there is an $a\in A_1\setminus E_1$ such that $(T,a)\in S_3$. This contradicts the fact that $E$ is closed in $F$.

        It remains to prove that $E\cap A_2\cup \{*_2\}$ attacks every  argument $a\in A_2^\circledast \setminus ((E \cap A_2)\cup \{*_2\})$. We proceed by case distinction. 
        \begin{itemize}
            \item \underline{Case 1. $a\in A_2$. } By hypothesis, we know that $a\in E_R^+$ because $E$ is stable in $F$. By definition of splitting, there are no attacks from $F_1$ to $F_2$. Hence, we can distinguish two cases.
            \begin{itemize}
                \item \underline{Case 1.1. $a\in E_{R_2}^+$.} Thus, there is an attack $(T,a)\in R_2$ such that $T\subseteq E\cap A_2$. 
                Let $T_c=cl_S(T)$. Since $E$ is closed in $F$ by hypothesis, we know that $T_c\subseteq E$. Hence, when computing the closure $R_2\clink$, we distinguish two further cases.
                \begin{itemize}
                    \item \underline{Case 1.1.1. $T_c\cap A_1=\emptyset$.} Then $(T_c,a)\in \hat R_2$. By definition of support and attack reducts, we know that $\hat R_2\subseteq R_2^\circledast$, and thus $a\in E_{R_2^\circledast}^+$.  
                    \item \underline{Case 1.1.2. $T_c\cap A_1\neq \emptyset$.} Then $(T_c,a)\in \hat R_3$ by definition. Further, for all $t\in T_c\cap A_1$, we know that $(T,t)\in S_3$.  
                    Since $T_c\subseteq E$, we get $T_c\cap A_1\subseteq E\cap A_1$. Then, $(T_c\setminus A_1,a)\in \hat R_2\redEone$ is in the $R$-reduct, and $a\in E_{\hat R_2\redEone}^+$. Further, since $\hat R_2\redEone\subseteq R_2^\circledast$, it follows that $a\in E_{R_2^\circledast}^+$. 
                \end{itemize}
                \item \underline{Case 1.2. $a\in E_{R_3}^+$. } There is some $T\subseteq E$ such that $(T,a)\in R_3$. Let $T_c=cl_S(T)$. Then by definition of $R_3\clink$, we know that $(T_c,a)\in R_3\clink$. Since $E$ is closed in $F$ by hypothesis, we know that $T_c\subseteq E$. Further, from $E\in \cf(F)$, it follows that $T_c\cap (E_1)^+_{R_1\cup R_3\clink}=\emptyset$. Moreover, $T_c\cap A_1\subseteq E\cap A_1$ because $T_c\subseteq E$. Hence, by definition of $R$-reduct, there is a $(T_c\setminus A_1,a)\in R_2^\circledast$, and consequently, $a\in E_{R_2^\circledast}^+$. 
            \end{itemize}
            In both cases we find $a\in E_{R_2^\circledast}^+$.  
        \end{itemize}
        \item \underline{Case 2. $a\notin A_2$.} Thus $a\in \{*_1\}$. By definition of $S$-reduct we know that $*_1\in (E\cap A_2)^+_{R_2^\circledast}$. \qedhere
    \end{itemize}
\end{enumerate}

\end{proof}

\PrefGrdSuppAttSplit*
\begin{proof}
We provide a proof for both semantics. 
%\anna{done, should be ok}
\paragraph{Preferred Semantics.}
Suppose $E_1\in \prf(F_1)$ and $E_2\in \prf(F^\circledast_2)$. Since $E_1$ and $E_2$ are admissible in $F_1$ and $F_2^\circledast$, respectively, we obtain $E\in \adm(F)$.
    We show that $E=E_1\cup E_2\setminus \{*_2\}\in \prf(F)$.

    Towards a contradiction, suppose $E\notin \prf(F)$. 
    Then there is $E'\supsetneq E$ with $E'\in \adm(F)$.
    Then $E_1'=E'\cap A_1\in \adm(F_1)$ and $E_2'=E'\cap A_2\in \adm(F_2^\circledast)$.
    
    $E'$ is strictly larger than $E$.
    Proceed with case distinction.
    \begin{itemize}
        \item 
        \underline{Case 1:} $E_1'$ is strictly larger than $E_1$. This is in contradiction to $E_1\in \prf(F_1)$.
    
        \item \underline{Case 2:} $E_2'$ is strictly larger than $E_2\setminus \{*_2\}$.
    
        \underline{Case 2.i: $*_2\in E_2$.} Let $E''_2=E_2'\cup \{*_2\}$.
        If $E_2''\in \adm(F_2^\circledast)$ then $E''_2\supsetneq E_2$, contradiction to $E_2\in \prf(F_2^\circledast)$.

        We show that $E_2''\in \adm(F_2\redEone)$:
        $E_2''$ is closed since $E_2'$ is closed and $*_2\notin U$ for any $(U,b)\in S_2'$. 
        $E_2''$ defends $*_2$ since $E_2\subseteq E_2''$ defends $*_2$.

        $E_2''$ is conflict-free: Towards a contradiction, suppose there is an attack $(T,h)\in R_2^\circledast$ such that $T\subseteq E_2''$ and $h\in E_2''$. 

        \underline{Case 2.i.a: $h = *_2$.} By construction, $T$ has the form $((T'\cap A_2)\cup \{*_2\},*_2)$ for some $T'\in \mathcal{T}_{S_3}(E_1)$.  Thus, $T'\cap A_2\subseteq E_2'$. 
        Since $(T'\cap A_2,*_2)\in S_2^\circledast$, this implies $E_2'$ is not closed. Contradiction to $E_2'\in \adm(F_2^\circledast)$.

        \underline{Case 2.i.b: $h \in A_2$.} In this case, $T\subseteq E_2'$ and we have a contradiction to the conflict-freeness of $E_2'$ in $F^\circledast_2$.

    \underline{Case 2.ii: $*_2\notin E_2$.} Then $E_2'=E_2\setminus \{*_2\}$ and therefore $E_2'\supsetneq E_2$,
    contradiction to $E_2\in \prf(F_2^\circledast)$.
    \end{itemize}

\paragraph{Grounded Semantics.}
Suppose $E_1\in \grd(\BF_1)$ and $E_2\in \grd(\BF^\circledast_2)$. 
Then $E_1\in \com(\BF_1)$ and $E_2\in \com(\BF^\circledast_2)$.
Therefore, by Theorem~\ref{thrm:AttSuppSplit}, we have $E=E_1\cup E_2\setminus\{*_2\}\in\com(\BF)$. Assuming toward contradiction $\exists G\subsetneq E:G\in\com(\BF)$, then (1) $\exists G_1\subsetneq E_1:G_1\in\com(\BF_1)$ or (2) $\exists G_2\subsetneq E_2:G_2\in\com(\BF_2^\circledast)$. 

\underline{Case 1:} Contradiction to $E_1$ is grounded in $F_1$.

\underline{Case 2:} We distinguish (i) $*_2\in E_2$ and (ii) $*_2\notin E_2$.
In case (i), we are done (then $G_2\subsetneq E_2$ or $G_2\cup \{*_2\}\subsetneq E_2$, contradiction). 
In case (ii), suppose $G_2\cup \{*_2\}$ is grounded in $F^\circledast_2$. 
Then for all attacks in $R^\circledast_2$ of the form $((T\cap A_2)\cup \{*_2\},*_2)$, it holds that $(G_2)^+_{R^\circledast_2}\cap (T\cap A_2)\neq \emptyset$. Since $G_2\subsetneq E_2$, it follows that for all attacks in $R^\circledast_2$ of the form $((T\cap A_2)\cup \{*_2\},*_2)$, $(E_2)^+_{R^\circledast_2}\cap (T\cap A_2)\neq \emptyset$. Thus, $E_2$ defends $*_2$, contradiction to $*_2\notin E_2$.

Thus $E\in\grd(\BF)$.
\end{proof}

\end{document}